\newtheoremstyle{slplain}
{.1\baselineskip\@plus.2\baselineskip\@minus.2\baselineskip}
{.1\baselineskip\@plus.2\baselineskip\@minus.2\baselineskip}
{}
{}
{\bfseries}
{.}
{ }
{}
\theoremstyle{slplain}
\titlespacing{\section}{0pt}{*0.45}{*0.45}
\titlespacing{\subsection}{0pt}{*0.45}{*0.45}
\titlespacing{\subsubsection}{0pt}{*0.45}{*0.45}
\newcommand{\la}{\left \langle}
\newcommand{\ra}{\right\rangle}
\newcommand{\norm}[1]{\left\lVert #1 \right\rVert}
\newtheorem{theorem}{Theorem}[section]
\newtheorem{lemma}[theorem]{Lemma}
\theoremstyle{definition}
\newtheorem{assumption}[theorem]{Assumption}
\theoremstyle{remark}
\newtheorem{remark}[theorem]{Remark}
\numberwithin{equation}{section}
\DeclarePairedDelimiter\floor{\lfloor}{\rfloor}
\title{Asymptotics of Reinforcement Learning with Neural Networks}
\author{Justin Sirignano\footnote{Mathematics, University of Oxford, E-mail: Justin.Sirignano@maths.ox.ac.uk} \footnote{Department of Industrial \& Systems Engineering, University of Illinois at Urbana-Champaign} \phantom{.}  and Konstantinos Spiliopoulos\footnote{Department of Mathematics and Statistics, Boston University, Boston, E-mail: kspiliop@math.bu.edu}
\thanks{K.S. was partially supported by the National Science Foundation (DMS 1550918) and Simons Foundation Award 672441}\\
}
\begin{document}

\maketitle



\begin{abstract}
We prove that a single-layer neural network trained with the Q-learning algorithm converges in distribution to a random ordinary differential equation as the size of the model and the number of training steps become large. Analysis of the limit differential equation shows that it has a unique stationary solution which is the solution of the Bellman equation, thus giving the optimal control for the problem. In addition, we study the convergence of the limit differential equation to the stationary solution. As a by-product of our analysis, we obtain the limiting behavior of single-layer neural networks when trained on i.i.d. data with stochastic gradient descent under the widely-used Xavier initialization.
\end{abstract}

\section{Introduction}

Reinforcement learning with neural networks (frequently called ``deep reinforcement learning") has had a number of recent successes, including learning to play video games \cite{MnihEtAll_videoGames_2015,Atari}, mastering the game of Go \cite{GoGame_2017}, and robotics \cite{KobersPeters2012}. In deep reinforcement learning, a neural  network is trained to learn the optimal action given the current state.

Despite many advances in applications, a number of mathematical questions remain open regarding reinforcement learning with neural networks. Our paper studies the Q-learning algorithm with neural networks (typically called ``deep Q-learning"), which is a popular reinforcement learning method for training a neural network to learn the optimal control for a stochastic optimal control problem. The deep Q-learning algorithm uses a neural network to approximate the value of an action $a$ in a state $x$ \cite{MnihEtAll_videoGames_2015}. This neural network approximator is called the ``$Q$-network". The Q-learning algorithm estimates the $Q$-network by taking stochastic steps which attempt to train the $Q$-network to satisfy the Bellman equation.

The literature on (deep) reinforcement learning and Q-learning is substantial. Instead of providing a complete literature review here we refer interested readers to classical texts \cite{BertsekasTsitsiklis,KushnerYin,SuttonBarto1998}, to the more recent book \cite{Goodfellow}, and to the extensive survey on recent developments in \cite{Arulkumaran2017}.  The majority of reinforcement learning algorithms are based on some variation of the Q-learning or policy gradient methods \cite{Sutton2000}. Q-learning originated in \cite{Watkins} and proofs  of convergence can be found in \cite{WatkinsDayan,Tsitsiklis1994}. The neural network approach to reinforcement learning (i.e., using Q-networks) was proposed in \cite{MnihEtAll_videoGames_2015}. More recent developments include deep recurrent Q-networks \cite{Hausknecht2015}, dueling architectures for deep reinforcement learning \cite{Wang2016}, double Q-learning \cite{DoubleQLearning}, bootstrapped deep Q-networks \cite{Osband}, and asynchronous methods for deep reinforcement learning \cite{Mnih2016}. Although the performance of $Q-$networks has been extensively studied in numerical experiments, there has been relatively little theoretical investigation.

We study the behavior of a single-layer $Q-$network in the asymptotic regime of large numbers of hidden units and large numbers of training steps. We prove that the $Q-$network (which models the value function for the related optimal control problem) converges to the solution of a random ordinary differential equation (ODE). We characterize the limiting random ODE in both the infinite and finite time horizon discounted reward cases. Then, we study the behavior of the solution to the limiting random ODE as time $t\rightarrow\infty$.

In the infinite time horizon case, we show that the limit ODE has a unique stationary solution which equals the solution of the associated Bellman equation. Thus, the unique stationary solution of the limit $Q$-network gives the optimal control for the problem. In the infinite time horizon case, we also show that the limit ODE converges to the unique stationary solution for small values of the discount factor.

The presence of a neural network in the Q-learning algorithm introduces technical challenges, which lead us to be able to prove, in the infinite time horizon case, convergence of the limiting ODE to the stationary solution only for small values of the discount factor. We elaborate more on this issue in Remark \ref{R:ConvergenceInfiniteHorizon}. The situation is somewhat different in the finite time horizon case, where we can prove that the limit ODE converges to a global minimum, which is the solution of the associated Bellman equation, for all values of the discount factor in $(0,1]$.

As a by-product of our analysis, we also prove that a single-layer neural network trained on i.i.d. data with stochastic gradient descent under the Xavier initialization \cite{Xavier} converges to a limit ODE. In addition to characterizing the limiting behavior of the neural network as the number of hidden units and stochastic gradient descent steps grow to infinity, we also obtain that the neural network in the limit converges to a global minimum with zero training loss (see Section \ref{S:SpecialCase}).

\textcolor{black}{The focus of this work is on the explicit characterization and on the study of the limit of the $Q-$network as the number of hidden units and stochastic gradient descent iterates grow to infinity. In addition, our results show that convergence to a global minimum can also be viewed as a simple consequence of the limit ODE for neural networks. Lastly, we mention that our weak convergence method of proof is general enough that it can cover under the same umbrella the infinite time horizon discounted reward problem, the finite time horizon discounted reward problem (Section \ref{S:MainResults}), and the classical regression problem (Section \ref{S:SpecialCase}).}

The rest of the paper is organized as follows. The Q-learning algorithm is introduced in Section \ref{S:BackgroundInformation}. Section \ref{S:MainResults} presents our main theorems. Section \ref{S:SpecialCase} discusses the limiting behavior of single-layer neural networks when trained on i.i.d. data with stochastic gradient descent under the Xavier initialization. Section \ref{ProofInfiniteCase} contains the proof for the infinite time horizon reinforcement learning case. The proof for the finite time horizon reinforcement learning case is in Section \ref{S:ProofFiniteCase}. Section \ref{ProofOFPositiveDefinite} contains a proof that a certain matrix in the limit ODE is positive definite, which is useful for establishing convergence properties of the limiting ODEs. Section \ref{Conclusion} summarizes the conclusions of this work.

\section{Q-learning Algorithm}\label{S:BackgroundInformation}

We consider a Markov decision problem defined on the finite state space $\mathcal{X} \subset \mathbb{R}^{d_x}$. For every state $x\in\mathcal{X}$ there is a finite set $\mathcal{A} \subset \mathbb{R}^{d_a}$ of actions that can be taken. The homogeneous Markov chain $x_k \in \mathcal{X}$ has a probability transition function $\mathbb{P}[ x_{j+1} = z | x_j =x, a_j =a ] = p(z | x, a)$ which governs the probability that $x_{j+1} = z$ given that $x_j = x$ and  $a_j =a$. For every state $x$ and action $a$, a reward of $r(x,a)$ is collected. Let $\lambda$ denote an admissible control policy (i.e., it is chosen based on a probability law such that it depends only on the history up to the present).

\subsection{The infinite time horizon setting}\label{SS:InfTimeHor}
For a given initial state $x\in\mathcal{X}$ and admissible control policy $\lambda$, the infinite time horizon reward is defined to be
\[
W_{\lambda}(x)=\mathbb{E}_{\lambda} \bigg{[} \sum_{j = 0}^{\infty} \gamma^{j} r(x_j,a_j) | x_0 = x \bigg{]},
\]
where the actions $a_j$ for $j \geq 0$ are chosen according to the policy $\lambda$ and $\gamma\in(0,1]$ is the discount factor.

Let $V(x,a)$ be the reward given that we start at state $x\in\mathcal{X}$, action $a \in \mathcal{A}$ is taken, and the optimal policy is subsequently used. As is well known (see for example Chapter 2.4 of \cite{KushnerYin}), $\displaystyle \max_{a\in\mathcal{A}}V(x,a)=\sup_{\lambda} W_{\lambda}(x)$ and the maximum expected reward $V$ satisfies the Bellman equation

\begin{align}
0 &= r(x,a) + \gamma \sum_{z \in \mathcal{X}} \max_{a' \in \mathcal{A}} V(z,a')  p(z | x, a)  - V(x,a ), 
\label{HJBmain}
\end{align}
and where $\displaystyle a^{\ast}(x) = \arg \max_{a \in \mathcal{A}} V(x,a)$ is an optimal policy. The Bellman equation (\ref{HJBmain}) can be derived using the principle of optimality for dynamic programming.

\subsection{The finite time horizon setting}\label{SS:fTimeHor}
In the finite time horizon case, for a given initial state $x\in\mathcal{X}$ and admissible control policy $\lambda$, the finite time horizon reward is defined to be
\begin{align}
W_{\lambda}(J,x)=\mathbb{E}_{\lambda} \bigg{[} \sum_{j = 0}^{J} \gamma^{j} r_j | x_0 = x \bigg{]},
\label{DiscountedRewardFiniteT}
\end{align}
where $r_j = r(j,x_j,a_j)$ for $j=0, 1, \ldots, J-1$, $r_{J}=r(J,x_{J})$, and $J$ is the deterministic time horizon. \textcolor{black}{ Later, in Theorem \ref{MainTheorem1}, we will strengthen the requirement for $x\in\mathcal{X}$ to be sampled from a distribution $\pi_{0}$ with $\pi_{0}(x)>0$.}

Similar to the infinite time horizon discount case and for a given $x\in\mathcal{X}$, the optimal control $a^{\ast}(j, x)$ is given by the solution to the Bellman equation
\begin{align}
 V(j,x,a)&= r(j, x,a) + \gamma \sum_{z} \max_{a'} V(j+1, z, a') p(z| x,a), \phantom{.....} j = 0, 1, \ldots, J-1, \notag \\
V(J,x,a) &= r(J,x), 
\label{HJBmainFiniteT}
\end{align}
with the optimal control  given by $\displaystyle a^{\ast}(j, x) =\arg \max_{a \in \mathcal{A}} V(j, x,a)$. By the principle of optimality we also have $\displaystyle \max_{a \in \mathcal{A}} V(0, x,a)=\sup_{\lambda} W_{\lambda}(J,x)$.

In principle, the Bellman equations  (\ref{HJBmain}) and (\ref{HJBmainFiniteT}) can be solved to find the optimal control. However, there are two obstacles. First, the transition probability function $p(z | x,a)$ (i.e., the state dynamics) may not be known. Secondly, even if it is known, the state space may be too high-dimensional for standard numerical methods to solve (\ref{HJBmain}) and (\ref{HJBmainFiniteT}) due to the curse of dimensionality. For these reasons, reinforcement learning methods can be used to \emph{learn} the solution to the Bellman equations (\ref{HJBmain}) and (\ref{HJBmainFiniteT}).

\subsection{Reinforcement learning with neural network approximation}

Reinforcement learning can approximate the solution to the Bellman equation using a function approximator, which typically is a neural network model. The parameters $\theta$ (i.e., the weights) of the neural network are estimated using the Q-learning algorithm. The neural network $Q(x,a;\theta)$ in Q-learning is referred to as a ``Q-network".

The Q-learning algorithm attempts to minimize the objective function
\begin{align}
L(\theta)&= \sum_{(x,a) \in \mathcal{X} \times \mathcal{A}} \left[\left(Y(x,a) -Q(x,a;\theta)\right)^{2}\right] \pi(x,a),\label{Eq:LossFunction}
\end{align}
where $\pi(x,a)$ is a probability mass function (to be specified later on) which is strictly positive for every $(x,a) \in \mathcal{X} \times \mathcal{A}$ and the ``target" $Y$ is
\begin{align*}
Y(x,a) &= r(x,a) + \gamma \sum_{x' \in \mathcal{X}} \max_{a'\in\mathcal{A}} Q(x',a';\theta) p(x' | x,a) . 
\end{align*}

In the case of the infinite time horizon problem, Section \ref{SS:InfTimeHor} (and analogously for the finite time horizon problem, Section \ref{SS:fTimeHor}), if $L(\theta) = 0$, then $Q(x,a; \theta)$ is a solution to the Bellman equation (\ref{HJBmain}). In practice, the hope is that the Q-learning algorithm will learn a model $Q$ such that $L(\theta)$ is small and therefore $Q(x, a; \theta)$ is a good approximation for the Bellman solution $V(x,a)$.

The Q-learning updates for the parameters $\theta$ are:
\begin{eqnarray}
\theta_{k+1} &=&  \theta_k + \alpha_{k}^{N} G_k, \notag \\
G_k &=& \bigg{(} r(x_k, a_k) + \gamma \max_{a'\in\mathcal{A}} Q(x_{k+1} ,a'; \theta_{k}) - Q(x_k, a_k; \theta_k) \bigg{)}  \nabla_{\theta} Q(x_k, a_k; \theta_k),
\label{GeneralQlearningAlgorithm}
\end{eqnarray}
where for $k=0,1,\cdots,$ $(x_k, a_k)$ is an ergodic Markov chain with $\pi(x,a)$ as its limiting distribution and $\alpha_{k}^{N}$ is the learning rate that may depend on the number of hidden units $N$ and on the iteration $k$. In this paper, we take $\alpha_{k}^{N}=\alpha^{N}$ to only depend on $N$.

The $Q-$network, which models the value of a state $x$ and action $a$, is the neural network
\begin{align}
Q^{N}(x,a; \theta) = \frac{1}{\sqrt{N}} \sum_{i=1}^N C^{i} \sigma \big{(} W^i \cdot (x,a) \big{)},
\label{NetworkXavier}
\end{align}
where $C^i \in \mathbb{R}$, $W^i \in \mathbb{R}^d$, $x \in \mathcal{X} \subset \mathbb{R}^{d_x}$, $a \in \mathcal{A} \subset \mathbb{R}^{d_a}$, $d = d_x + d_a$, and $\sigma(\cdot): \mathbb{R} \rightarrow \mathbb{R}$. For generic vectors $a,b\in\mathbb{R}^{d}$, we will be denoting by $a\cdot b$ their usual inner product. \textcolor{black}{$(x, a) \in \mathbb{R}^d$ is a vector which is a concatenation of the vectors $x$ and $a$.} The parametric model (\ref{NetworkXavier}) receives an input vector containing both the state and action in the enlarged Euclidean space $\mathbb{R}^{d}$. This formulation is a common choice in practice; see for example \cite{Dulac2015}. Other variations of the parametric model (\ref{NetworkXavier}), such as an input vector of the state and an output vector which is the length of the number of possible actions, are of course possible and can also be studied using this paper's techniques.

The number of hidden units is $N$ and the output is scaled by a factor $\frac{1}{\sqrt{N}}$, which is commonly used in practice and is called the ``Xavier initialization" \cite{Xavier}. Essentially, the intuition behind the  ``Xavier initialization" \cite{Xavier} for deep neural networks is to ensure that all layers have the same input variance and the same gradient variance; see chapter 8.4 of \cite{Goodfellow} for an extended discussion on the topic of initialization.

 The set of parameters that must be estimated is
\[
\theta=(C^{1},\cdots, C^{N}, W^{1},\cdots,W^{N})\in\mathbb{R}^{(1+d)N}.
\]

In the infinite-time horizon case, the Q-learning algorithm for training the parameters $\theta$ is
\begin{align}
C^{i}_{k+1} =& C^{i}_k + \frac{ \alpha^N }{\sqrt{N}} \bigg{(} r_k + \gamma \max_{a' \in \mathcal{A}} Q^N(x_{k+1}, a'; \theta_k) - Q^N(x_k, a_k; \theta_k) \bigg{)} \sigma \big{(} W^i_k \cdot (x_k, a_k) \big{)}, \notag \\
W^i_{k+1} =& W^i_k + \frac{ \alpha^N}{\sqrt{N}}  \bigg{(} r_k + \gamma \max_{a' \in \mathcal{A}} Q^N(x_{k+1}, a'; \theta_k) - Q^N(x_k, a_k; \theta_k) \bigg{)} C^{i}_k \sigma' \big{(} W^i_k \cdot (x_k, a_k) \big{)}  (x_k, a_k), \notag \\
Q^{N}(x,a; \theta_k) =& \frac{1}{\sqrt{N}} \sum_{i=1}^N C^{i}_k \sigma \big{(} W^i_k  \cdot (x,a) \big{)}, \label{SGDupdates}
\end{align}
for $k = 0, 1, \ldots$. We assume that the action $a_k$ is sampled uniformly at random from all possible actions $\mathcal{A}$ (i.e., ``pure exploration"). Similarly to $(x,a)$, $(x_k, a_k) \in \mathbb{R}^d$ is a vector which is a concatenation of the vectors $x_k$ and $a_k$.

In this paper, we study the asymptotic behavior of the Q-network $Q^{N}(x,a; \theta_{k})$ as the number of hidden units $N$ and number of stochastic gradient descent iterates $k$ go to infinity. As we will see, after appropriate scalings, the Q-network converges to the solution of a limiting ODE. We then prove that limit ODE has a unique stationary solution which equals the solution to the Bellman equation. Thus, the unique stationary solution of the limit $Q$-network gives the optimal control for the problem. Convergence of ODEs to stationary solutions has been studied in related problems in the classical papers \cite{Borkar1998,Tsitsiklis1994}. The difference in our work is that, in contrast to  \cite{Borkar1998,Tsitsiklis1994}, we study the effect of a neural network as a function approximator in the Q-learning algorithm. Convergence to a global minimum for a neural network in regression or classification on i.i.d. data (not in the reinforcement learning setting) has been recently proven in \cite{NTK}, \cite{JasonLee}, \cite{Du1}, and \cite{UCLA2018}. Global convergence of reinforcement learning algorithms with neural network function approximations has also been recently studied in \cite{Cai2019}.

It is worthwhile noting that the Q-learning algorithm is similar to the stochastic gradient descent algorithm in that they both use stochastic samples to take training steps to minimize the objective function. However, unlike in the stochastic gradient descent algorithm (which we also discuss in Section \ref{S:SpecialCase}), the Q-learning update directions $G_k$ are not necessarily unbiased estimates of a descent direction for the objective function $L(\theta)$. The Q-learning algorithm calculates its update by taking the derivative of $L(\theta)$ \emph{while treating the target $Y$ as a constant}. Since $Y$ actually depends upon $\theta$,
\begin{eqnarray}
\mathbb{E} \big{[} G_k  |  \theta_k, x_k, a_k \big{]} \neq \frac{1}{2}\nabla_{\theta}  \left[\left(Y(x_k,a_k) -Q(x_k,a_k;\theta_k)\right)^{2}\right].
\label{NotUnbiased}
\end{eqnarray}

This fact together with the presence of the neural network function approximator leads to certain difficulties in the proofs. We will return to this issue in Remark \ref{R:ConvergenceInfiniteHorizon}.

Let us next present the main results of the paper in Section \ref{S:MainResults}.

\section{Main results}\label{S:MainResults}

In this section we present our main results. \textcolor{black}{In Subsection \ref{SS:InfTimeHorResults}, we discuss the infinite time horizon setting. In Subsection \ref{SS:fTimeHorResults}, we present our results for the finite time horizon setting. }

\subsection{The infinite time horizon setting}\label{SS:InfTimeHorResults}
 We start with the infinite time horizon setting. We consider the $Q$-network  (\ref{NetworkXavier}) which models the value of a state and action. The parameters $\theta$ for the $Q$-network are trained using the Q-learning algorithm (\ref{SGDupdates}). We prove that, as the number of hidden units and training steps become large, the $Q$-network converges in distribution to a random ordinary differential equation.

\begin{assumption} \label{A:Assumption2} Our results are proven under the following assumptions:
\begin{itemize}
\item The activation function $\sigma\in C^{2}_{b}(\mathbb{R})$, i.e. $\sigma$ is twice continuously differentiable and bounded.
\item The randomly initialized parameters $(C_0^i, W_0^i)$ are i.i.d., mean-zero random variables with a distribution $ \mu_0(dc, dw)$.  We assume that $\mu_0$ is absolutely continuous with respect to Lebesgue measure.
\item The random variable $C_0^i$ is bounded and $\la \norm{w}, \mu_0 \ra=\int \norm{w}\mu_0(dc, dw)< \infty$.
\item The reward function $r$ is uniformly bounded in its arguments.
\item The Markov chain $x_k$ has a limiting distribution $\pi$, namely \[\displaystyle  \pi(x)=\lim_{N\rightarrow\infty}\frac{1}{N} \sum_{i = 1}^N \mathbf{1}_{\{x_k = x | x_0=z\}} , \textrm{almost surely, for all initial states } z,\]
    exists, is independent of the initial state $z$, $\displaystyle \sum_{x\in\mathcal{X}}\pi(x)=1$ and $\pi(x) > 0$ for all $x \in \mathcal{X}$.
\item $\mathcal{X}$ and $\mathcal{A}$ are finite, discrete spaces.
\end{itemize}
\end{assumption}

We shall also assume that the action $a\in\mathcal{A}$ is sampled uniformly at random from all possible actions (referred to as ``pure exploration"). The uniform distribution of the actions combined with the fact that $x_k$ is assumed to have a limiting distribution $\pi(x)$ imply that  the Markov chain $\zeta_k=(x_k,a_k)$ will have limiting distribution $\pi(x,a)=\frac{1}{K}\pi(x)$ where $K = | \mathcal{A} |$. A close examination of our proofs reveals that the assumption of ``pure exploration" is done without loss of generality and that the theory goes through for any (general) fixed policy for which the Markov chain $\zeta_k=(x_k,a_k)$ is ergodic such that $\pi(x,a)>0$ for any $(x,a)\in\mathcal{X}\times\mathcal{A}$.  In addition, the Markov chain $(x_{k+1},x_k,a_k)$ will have $\pi(x',x,a)=p(x'|x,a)\pi(x,a)$ as its limiting distribution.\footnote{There is a slight abuse of notation due to denoting all of these distributions with $\pi$. In our calculations, the specific limit distribution being used is clear via its argument $x$, $(x,a)$, or $(x', x, a)$.}

\begin{assumption} \label{A:AssumptionPositiveDefinite} Certain properties of the limit ODE also require the following assumptions:
\begin{itemize}
\item The activation function $\sigma$ is non-polynomial and slowly increasing\footnote{A function $\sigma(x)$ is called slowly increasing if $\lim_{x\rightarrow\infty}\frac{\sigma(x)}{x^{a}}=0$ for every $a>0$.}.
\item \textcolor{black}{$C_0^i \in [-B, B]$ and $\mu_0(\Gamma) > 0$ for any set $\Gamma \subset [-B,B] \times \mathbb{R}^{1+d}$ with positive Lebesgue measure.}
\end{itemize}
\end{assumption}

Examples of activation functions that are non-polynomials and slowly increasing are tanh and sigmoid activation function.

Define the empirical measure
\begin{align}
\nu_k^N = \frac{1}{N} \sum_{i=1}^N \delta_{C^i_k, W^i_k}.\label{Eq:EmpiricalMeasure}
\end{align}

In addition, let us set $Q^{N}_{k}(x,a)=Q^{N}(x,a; \theta_k)$ and define the scaled processes
\begin{align}
h_t^N(x,a) =& Q_{\floor*{N t}}^N(x,a), \notag \\
\mu_t^N =& \nu_{\floor*{N t}}^N.\label{Eq:ScaledPrelimitQuantities}
\end{align}

Using Assumption \ref{A:Assumption2}, we know that $\mu_0^N \overset{d} \rightarrow \mu_0$ and $h_0^N \overset{d} \rightarrow \mathcal{G}$ as $N \rightarrow \infty$ where $\mathcal{G}$ is a mean-zero Gaussian random variable, by the standard law of large numbers and central limit theorem for i.i.d random variables, respectively.

The variable $h_t^N$ is the output of the neural network after $\frac{t}{T} \times 100\%$ of the training has been completed. We will study convergence in distribution of the random process $(\mu_t^N, h_t^N)$ as $N \rightarrow \infty$ in the space $D_E([0,T])$ where $E = \mathcal{M}(\mathbb{R}^{1+d}) \times \mathbb{R}^{|\mathcal{X} \times \mathcal{A}|}$.  $D_E([0,T])$ is the Skorokhod space and $\mathcal{M}(S)$ is the space of probability measures on $S$.

Before presenting the first main convergence result, Theorem \ref{MainTheoremInfiniteTimeCase}, we present a lemma stating that a certain matrix $A$ which appears in the limit ODE is positive definite. The proof of Lemma \ref{PositiveDefiniteLemma} is deferred to Section \ref{ProofOFPositiveDefinite}.
\begin{lemma} \label{PositiveDefiniteLemma}
Let Assumption \ref{A:AssumptionPositiveDefinite} hold and let $0<\alpha<\infty$. Consider the matrix $A$ with elements
\begin{eqnarray*}
A_{\zeta, \zeta'} &= \alpha \la \sigma(w \cdot \zeta' ) \sigma(w \cdot \zeta ), \mu_0 \ra + \alpha\la c^2 \sigma'(w \cdot \zeta' ) \sigma'(w \cdot \zeta ) \zeta\cdot \zeta', \mu_0 \ra,
\end{eqnarray*}
for $\zeta, \zeta' \in \{ \zeta^{(1)}, \ldots, \zeta^{(M)} \}$ where $\zeta^{(i)} \in \mathcal{X}\times\mathcal{A}$ are assumed to be in distinct directions (as defined on page 192 of \cite{yIto}). Then,  the matrix $A$ is positive definite.
\end{lemma}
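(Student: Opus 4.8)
The plan is to prove positive definiteness in two stages: first establish that $A$ is positive semidefinite by writing the quadratic form $v^\top A v$ as a sum of integrated squares, and then upgrade to strict positivity by combining the non-degeneracy of $\mu_0$ with the linear independence of ridge functions having distinct directions.

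First I would fix an arbitrary $v = (v_1, \ldots, v_M) \in \mathbb{R}^M$ and, using bilinearity of each inner product against $\mu_0$, compute
\[
v^\top A v = \alpha \int \Big( \sum_{i=1}^M v_i \sigma(w \cdot \zeta^{(i)}) \Big)^2 \mu_0(dc, dw) + \alpha \int c^2 \norm{\sum_{i=1}^M v_i \sigma'(w \cdot \zeta^{(i)}) \zeta^{(i)}}^2 \mu_0(dc, dw).
\]
Both integrands are non-negative and $\alpha > 0$, so $v^\top A v \geq 0$; hence $A$ is at least positive semidefinite. Note that the first term alone will carry the argument for definiteness, while the second merely adds further non-negative contributions.

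Next, to establish definiteness, I would suppose $v^\top A v = 0$. Since the two integrals are individually non-negative, each must vanish, and I would work with the first. Its integrand is non-negative and depends only on $w$, so the integral vanishing forces $\sum_{i} v_i \sigma(w \cdot \zeta^{(i)}) = 0$ for $\mu_0$-almost every $(c,w)$. Here the second bullet of Assumption \ref{A:AssumptionPositiveDefinite} is the key input: because $\mu_0$ assigns positive mass to every set of positive Lebesgue measure, any set on which the continuous integrand is nonzero must be Lebesgue-null, so in fact $\sum_{i} v_i \sigma(w \cdot \zeta^{(i)}) = 0$ for Lebesgue-almost every $w$, and therefore for every $w \in \mathbb{R}^d$ by continuity of $\sigma$.

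Finally I would invoke linear independence of the functions $w \mapsto \sigma(w \cdot \zeta^{(i)})$. Because $\sigma$ is non-polynomial and slowly increasing and the $\zeta^{(i)}$ are in distinct directions, the classical result on page 192 of \cite{yIto} guarantees these ridge functions are linearly independent, so the identity $\sum_{i} v_i \sigma(w \cdot \zeta^{(i)}) \equiv 0$ forces $v_1 = \cdots = v_M = 0$. This contradicts $v \neq 0$, yielding $v^\top A v > 0$ for every $v \neq 0$, i.e. $A$ is positive definite. I expect the genuinely nontrivial step to be the last one: confirming that the hypotheses of Assumption \ref{A:AssumptionPositiveDefinite} (distinct directions together with a non-polynomial, slowly increasing $\sigma$) match precisely the conditions under which the cited linear-independence result applies. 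The decomposition into squares, and the passage from $\mu_0$-almost-everywhere to everywhere vanishing, are routine by comparison.
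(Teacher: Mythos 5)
Your proposal is correct and follows essentially the same route as the paper: both express $v^\top A v$ as an integral of squares against $\mu_0$ (a Gram-matrix representation), invoke Ito's linear independence of the ridge functions $w \mapsto \sigma(w\cdot\zeta^{(i)})$ for distinct directions, and use the assumption that $\mu_0$ charges every set of positive Lebesgue measure. The only cosmetic difference is that you argue by contradiction from $v^\top A v = 0$ and discard the second (non-negative) term, whereas the paper argues directly by locating a neighborhood of a point $(0, w^{\ast})$ on which the full integrand is bounded below by $\epsilon/2$ and which carries positive $\mu_0$-mass.
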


We then have the following theorem.
\begin{theorem} \label{MainTheoremInfiniteTimeCase}
Let Assumption \ref{A:Assumption2} hold and let the learning rate be $ \alpha^N =\frac{\alpha}{N}$ with $0<\alpha<\infty$. The process $(\mu_t^N, h_t^N)$ converges in distribution in the space $D_E([0,T])$ as $N \rightarrow \infty$ to $(\mu_t, h_t)$.  \textcolor{black}{For any $t\in[0,T]$, any $(x,a)\in\mathcal{X}\times\mathcal{A}$ and for every $f \in C^2_b( \mathbb{R}^{1+d})$, $(\mu_t, h_t)$ satisfies the random ODE}
\begin{align}
h_t(x,a) =& h_0(x,a) +  \int_0^t \sum_{(x', a') \in \mathcal{X}  \times \mathcal{A}} \pi(x', a') A_{x,a,x', a'}   \bigg{(} r(x', a') + \gamma \sum_{z \in \mathcal{X}} \max_{a'' \in \mathcal{A}} h_s(z, a'') p(z| x',a') - h_s(x', a') \bigg{)} ds \notag \\
h_0(x,a) =& \mathcal{G}(x,a), \notag \\
<f,\mu_t>  =& <f,\mu_0>.
\label{EvolutionEquationIntroductionXavierInfiniteTime}
\end{align}
The tensor $A$ is
\begin{align*}
A_{x, a, x', a'} &= \alpha \la \sigma(w \cdot (x', a') ) \sigma(w \cdot (x,a) ), \mu_0 \ra + \alpha \la c^2 \sigma'(w \cdot (x', a') ) \sigma'(w \cdot (x,a) ) (x',a')\cdot (x,a), \mu_0 \ra,
\end{align*}
and (\ref{EvolutionEquationIntroductionXavierInfiniteTime}) is a random ODE since its initial condition is a random variable. $\la f, \mu_t \ra = \la f, \mu_0 \ra$ for any $f \in C^2_b( \mathbb{R}^{1+d})$ implies that $\mu_t = \mu_0$ for $t \geq 0$ in the weak sense.

Furthermore, if Assumption \ref{A:AssumptionPositiveDefinite} holds, (\ref{EvolutionEquationIntroductionXavierInfiniteTime}) has a unique stationary point which equals the solution $V$ of the Bellman equation (\ref{HJBmain}).
\end{theorem}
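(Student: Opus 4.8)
The plan is to prove the convergence in distribution by the standard three-step weak-convergence programme: (i) establish relative compactness (tightness) of the laws of $(\mu_t^N, h_t^N)$ in $D_E([0,T])$; (ii) identify every subsequential limit as a solution of the random ODE \eqref{EvolutionEquationIntroductionXavierInfiniteTime}; and (iii) show that this limiting equation has a unique solution given the (random) initial data, so that the whole family converges to it. For tightness I would first derive uniform-in-$N$ a priori bounds: since $\sigma\in C^2_b$, the $C_0^i$ are bounded, and $r$ is bounded, each parameter update in \eqref{SGDupdates} is of order $N^{-3/2}$, so after $\lfloor N t\rfloor$ steps every $C^i_k, W^i_k$ has moved by at most $O(N^{-1/2})$; this keeps $\norm{W^i_k}$ and $|C^i_k|$ uniformly controlled and hence makes $\sup_{t\le T}|h_t^N(x,a)|$ bounded in probability. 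Tightness of $\mu^N$ then follows from this compact containment together with the negligibility of the increments of $\langle f,\mu^N_t\rangle$, while tightness of $h^N$ (a finite-dimensional process) follows by verifying a standard modulus-of-continuity criterion on its increments.

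For the identification step I would Taylor-expand the one-step increment $Q^N_{k+1}(x,a)-Q^N_k(x,a)$ about $(C^i_k,W^i_k)$. Substituting the updates \eqref{SGDupdates}, the leading term is $\frac{\alpha}{N}\big(r_k+\gamma\max_{a'}Q^N_k(x_{k+1},a')-Q^N_k(x_k,a_k)\big)\,\langle \sigma(w\cdot\zeta_k)\sigma(w\cdot(x,a))+c^2\sigma'(w\cdot\zeta_k)\sigma'(w\cdot(x,a))\,\zeta_k\cdot(x,a),\,\nu_k^N\rangle$, with $\zeta_k=(x_k,a_k)$, and the second-order Taylor remainder is $O(N^{-2})$ per step, hence $O(N^{-1})$ after summation and negligible. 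Because the parameters barely move, I would replace $\nu_k^N$ by $\mu_0$ (controlled by the $O(N^{-1/2})$ displacement bound), which produces exactly the tensor $A_{x,a,x',a'}$ of the statement. Summing over $k\le\lfloor N t\rfloor$ and rewriting the sum as a Riemann integral in $s$ then yields the evolution equation up to two fluctuation terms that must be shown to vanish.

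The main obstacle, and the heart of the proof, is controlling these two fluctuation terms, both of which are ergodic-averaging problems for the Markov chain. First, the per-step drift is evaluated at the realized sample $\zeta_k$ rather than averaged against the stationary law, so replacing the empirical time-average of the $\zeta_k$-dependent coefficient by $\sum_{(x',a')}\pi(x',a')(\cdots)$ requires a law of large numbers for the ergodic chain, which I would make quantitative by solving the associated Poisson equation for the generator of $\zeta_k$ and bounding the resulting martingale and boundary terms in $L^2$. Second, the target uses the realized next state $x_{k+1}$, so $\max_{a'}Q^N_k(x_{k+1},a')$ must be replaced by its conditional expectation $\sum_{z}\max_{a''}h_s(z,a'')\,p(z\mid x',a')$; here I would condition on $\theta_k$ and exploit that $(x_{k+1},x_k,a_k)$ has limiting distribution $p(x'\mid x,a)\pi(x,a)$, again via a Poisson-equation/martingale estimate. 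Note that the biasedness flagged in \eqref{NotUnbiased} is not an obstruction: it only dictates the precise form of the limiting drift and does not impede the averaging, since the limit is read off directly from the pre-limit dynamics rather than from any gradient structure. The same displacement estimate gives $\langle f,\mu_t^N\rangle-\langle f,\mu_0^N\rangle=O(N^{-1/2})$ for $f\in C^2_b$, so in the limit $\langle f,\mu_t\rangle=\langle f,\mu_0\rangle$, i.e. $\mu_t=\mu_0$.

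To close the argument I would establish uniqueness for \eqref{EvolutionEquationIntroductionXavierInfiniteTime}. With $\mu_t\equiv\mu_0$, the right-hand side for $h_t$ is an autonomous vector field on $\mathbb{R}^{|\mathcal{X}\times\mathcal{A}|}$ whose only nonlinearity is the map $h\mapsto\max_{a''}h(z,a'')$, which is $1$-Lipschitz; since $A$ and $\pi$ are bounded, the field is globally Lipschitz, so Picard--Lindel\"of gives a unique solution with initial datum $h_0=\mathcal{G}$. Combined with tightness and the identification of every limit point as this unique solution, this yields convergence in distribution of $(\mu_t^N,h_t^N)$ to $(\mu_t,h_t)$. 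Finally, for the stationary point under Assumption \ref{A:AssumptionPositiveDefinite}: a stationary $h$ satisfies $\sum_{(x',a')}\pi(x',a')A_{x,a,x',a'}\,R(x',a')=0$ for all $(x,a)$, where $R(x',a')=r(x',a')+\gamma\sum_z\max_{a''}h(z,a'')p(z\mid x',a')-h(x',a')$ is the Bellman residual. Writing this as $Au=0$ for the vector $u$ with entries $\pi(x',a')R(x',a')$, positive definiteness of $A$ (Lemma \ref{PositiveDefiniteLemma}) forces $u=0$, and since $\pi(x',a')>0$ this gives $R\equiv 0$, i.e. $h$ solves the Bellman equation \eqref{HJBmain}; as that solution is the unique $V$, the unique stationary point equals $V$.
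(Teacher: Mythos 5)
Your proposal is correct and follows the same overall architecture as the paper's proof: uniform a priori bounds, relative compactness, identification of every subsequential limit with the random ODE, uniqueness of its solution, and the positive-definiteness argument for the stationary point. The one genuinely different technical choice is how you kill the fluctuation (ergodic-averaging) terms: you propose solving the Poisson equation for the generator of $(x_k,a_k)$ (and of $(x_{k+1},x_k,a_k)$) and bounding the resulting martingale and boundary terms in $L^2$, whereas the paper proves two elementary lemmas (Lemmas \ref{IdentLemma1} and \ref{IdentLemma2}) that partition $[0,t]$ into blocks, freeze the random, time-varying function at the start of each block, and combine Vitali's theorem with the one-step regularity bound $\mathbb{E}|f^N_k-f^N_{k-1}|\le C/N$. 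Both routes need the same ingredients (the uniform $L^2$ bound on $Q^N_k$ of Lemma \ref{GLemmaBound} and the $O(1/N)$ one-step increment bound of Lemma \ref{Mlemma}); the paper's block argument works directly from the Ces\`aro-limit assumption on the chain, while your route additionally invokes solvability of the Poisson equation (irreducibility), which does follow from Assumption \ref{A:Assumption2} but is an extra step — in exchange it yields quantitative rates. Your uniqueness via Picard--Lindel\"of and the paper's Gronwall estimate are equivalent, and your stationary-point algebra ($Au=0$ with $u=\pi\odot R$, $A$ positive definite, $\pi>0$) is the same as the paper's Section \ref{GlobalMinimum}, in fact slightly cleaner than the paper's passage through the matrix $B_{\zeta,\zeta'}=A_{\zeta,\zeta'}\pi(\zeta')$. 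Likewise, your replacement of $\nu^N_k$ by $\mu_0$ in the drift is a harmless variant of the paper's device of keeping $\mu^N_s$ in the pre-limit equation and using $\la f,\mu_t\ra=\la f,\mu_0\ra$ in the limit.

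One step you should tighten: the claim that each parameter update is of order $N^{-3/2}$ "since $\sigma\in C^2_b$, the $C_0^i$ are bounded, and $r$ is bounded" is circular as stated, because the update also contains the factor $\gamma\max_{a'}Q^N_k(x_{k+1},a')-Q^N_k(x_k,a_k)$, and a priori $|Q^N_k|\le \|\sigma\|_\infty N^{-1/2}\sum_{i=1}^N|C^i_k|$ could be of order $\sqrt{N}$. The paper resolves this with a discrete Gronwall bootstrap (Lemma \ref{CandWbounds}) that first controls $m^N_k=N^{-1}\sum_{i=1}^N|C^i_k|$, and only then deduces the per-step $O(N^{-3/2})$ and cumulative $O(N^{-1/2})$ displacements on which your whole sketch (and the $L^2$ bound on $Q^N_k$) relies. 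This is a repairable omission rather than a flaw in the approach, but it is where the actual work in the compactness step lies.
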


The proof of Theorem \ref{MainTheoremInfiniteTimeCase} appears in Section \ref{ProofInfiniteCase}. Theorem \ref{MainTheoremInfiniteTimeCase} shows that there is a single fixed point for the limit dynamics of the $Q$-network. Moreover, this unique fixed point is the solution to the Bellman equation (\ref{HJBmain}) and therefore gives the optimal control. This is interesting since the pre-limit neural network is a non-convex function and therefore there are potentially many fixed points.

\textcolor{black}{At this point, a discussion of Theorem \ref{MainTheoremInfiniteTimeCase} is warranted. Theorem \ref{MainTheoremInfiniteTimeCase} says that $\mu_{t}=\mu_{0}$ in the weak sense and for all $t\geq 0$. This means that the limit of the empirical distribution of the parameters does not evolve as $t$ grows. However,  the limit of the network $h^{N}_{t}$ as defined by (\ref{Eq:ScaledPrelimitQuantities}) does evolve in time as the formula (\ref{EvolutionEquationIntroductionXavierInfiniteTime}) shows. This is due to the scaling by $1/\sqrt{N}$ of the neural network output (\ref{NetworkXavier}) as well as due to the training mechanism in (\ref{SGDupdates}).}

\begin{remark}
Theorem \ref{MainTheoremInfiniteTimeCase} suggests that in order to have convergence as $N\rightarrow\infty$, the learning rate $\alpha^{N}$ should be chosen to be of the order of $1/N$ where $N$ is the number of hidden units. \textcolor{black}{In practice, $N$ is fixed but large. This result shows that for $N$ large and if the learning rate is chosen to be of the order of $1/N$, then the neural network shall have the asymptotic behavior described in Theorem \ref{MainTheoremInfiniteTimeCase}.}
\end{remark}

Theorem \ref{MainTheoremInfiniteTimeCase} does not prove that $h_t$ converges to $V$. It only shows that, if $h_t$ converges to a limit point, that limit point must be $V$, the solution of the Bellman equation. We are able to prove convergence in the following lemma for small $\gamma$.

\begin{lemma}\label{L:ConvergenceInfiniteCase}
Let $\gamma < \frac{2}{1+K}$ where $K$ is the number of possible actions in the set $\mathcal{A}$. Then, we have
\[
\displaystyle \lim_{t \rightarrow \infty} \sup_{(x,a)\in\mathcal{X}\times\mathcal{A}}|h_t(x,a) - V(x,a)| = 0.
\]
\end{lemma}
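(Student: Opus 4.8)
The plan is to exhibit a quadratic Lyapunov function for the limit ODE (\ref{EvolutionEquationIntroductionXavierInfiniteTime}) that decays exponentially precisely when $\gamma(1+K)<2$. First I would put the ODE in vector form. Indexing the finite set $\mathcal{X}\times\mathcal{A}$ by $\zeta$, write $h_t,V\in\mathbb{R}^{M}$ with $M=|\mathcal{X}\times\mathcal{A}|$, let $D$ be the diagonal matrix with entries $\pi(x,a)$, let $A$ be the (symmetric, and by Lemma \ref{PositiveDefiniteLemma} positive definite) matrix of the tensor, and introduce the Bellman operator $(\mathcal{B}h)(x,a)=r(x,a)+\gamma\sum_{z}\max_{a''}h(z,a'')p(z\mid x,a)$. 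Then (\ref{EvolutionEquationIntroductionXavierInfiniteTime}) reads $\dot h_t=AD(\mathcal{B}h_t-h_t)$, while the Bellman equation (\ref{HJBmain}) reads $\mathcal{B}V=V$, so $V$ is the equilibrium.

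The key device is the Lyapunov function $U(t)=\tfrac12 e_t^{\top}A^{-1}e_t$ with $e_t=h_t-V$, which is well defined and nonnegative because $A^{-1}$ is positive definite, and vanishes only at $e_t=0$. The factor $A^{-1}$ is chosen so as to cancel the $A$ in the dynamics: since $A^{-1}$ is symmetric,
\[
\dot U=e_t^{\top}A^{-1}\dot h_t=e_t^{\top}A^{-1}AD(\mathcal{B}h_t-h_t)=e_t^{\top}D(\mathcal{B}h_t-h_t).
\]
Substituting $\mathcal{B}h_t-h_t=(\mathcal{B}h_t-\mathcal{B}V)-e_t$ splits $\dot U$ into a good negative term $-\sum_{\zeta}\pi(\zeta)e_t(\zeta)^2=-\norm{e_t}_D^2$ and a cross term $T_t=e_t^{\top}D(\mathcal{B}h_t-\mathcal{B}V)$. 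Here $h_t$ is $C^1$ and $U$ differentiable, and I would only use that the maximum is $1$-Lipschitz, i.e.\ $|\max_{a''}h_t(z,a'')-\max_{a''}V(z,a'')|\le\max_{a''}|e_t(z,a'')|$, rather than differentiate the nonsmooth $\max$.

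The heart of the argument is bounding $T_t$ so that the factor $(1+K)/2$ emerges. Applying Young's inequality $|uv|\le\tfrac12 u^2+\tfrac12 v^2$ to $e_t(x,a)$ against $\gamma\sum_z\delta_t(z)p(z\mid x,a)$, where $\delta_t(z)=\max_{a''}h_t(z,a'')-\max_{a''}V(z,a'')$, and then Jensen's inequality for the probability vector $p(\cdot\mid x,a)$, I would obtain
\[
T_t\le \tfrac{\gamma}{2}\norm{e_t}_D^2+\tfrac{\gamma}{2}\sum_{z}\Big(\max_{a''}e_t(z,a'')^2\Big)\pi(z),
\]
where $\pi(z)=\sum_{(x,a)}\pi(x,a)p(z\mid x,a)$ is the stationary marginal of the state; this identity follows from stationarity and the joint limiting law $\pi(x',x,a)=p(x'\mid x,a)\pi(x,a)$ recorded before Assumption \ref{A:AssumptionPositiveDefinite}. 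Bounding the maximum by the sum over the $K$ actions and using $\pi(z)=K\pi(z,a)$ from pure exploration turns the last sum into $K\norm{e_t}_D^2$, so $T_t\le\tfrac{\gamma(1+K)}{2}\norm{e_t}_D^2$ and hence $\dot U\le-\big(1-\tfrac{\gamma(1+K)}{2}\big)\norm{e_t}_D^2$.

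For $\gamma<\tfrac{2}{1+K}$ the constant $c:=1-\tfrac{\gamma(1+K)}{2}$ is strictly positive. To close with Gronwall I would compare norms via the spectral bounds $\tfrac{1}{2\lambda_{\max}(A)}\norm{e_t}_2^2\le U(t)\le\tfrac{1}{2\lambda_{\min}(A)}\norm{e_t}_2^2$ and $\norm{e_t}_D^2\ge(\min_\zeta\pi(\zeta))\norm{e_t}_2^2$, which yield $\dot U\le-\kappa U$ for some $\kappa>0$; then $U(t)\le U(0)e^{-\kappa t}\to0$, whence $\sup_\zeta|e_t(\zeta)|\le\norm{e_t}_2\le\sqrt{2\lambda_{\max}(A)\,U(t)}\to0$, which is the claim. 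The main obstacle, and the reason the result is restricted to small $\gamma$, is the mismatch between the $L^2(\pi)$ geometry in which $A$ is coercive and the sup-norm in which $\mathcal{B}$ is a $\gamma$-contraction: controlling $T_t$ forces replacing the maximum over the $K$ actions by a sum, the only lossy step, which injects the factor $K$ and produces the threshold $\gamma(1+K)<2$ instead of the optimal $\gamma<1$. Sharpening this step is exactly the difficulty discussed in Remark \ref{R:ConvergenceInfiniteHorizon}.
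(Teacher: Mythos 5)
Your proposal is correct and follows essentially the same route as the paper's proof in Section \ref{GlobalMinimum}: the Lyapunov function $\tfrac12 e_t^{\top}A^{-1}e_t$ is exactly the paper's $Y_t=\tfrac12\phi_t\cdot A^{-1}\phi_t$, the Young/Jensen bound on the cross term with the max-to-sum replacement producing the factor $K$ matches the paper's treatment of $\Gamma_t$ and $\sum_\zeta\pi(\zeta)G_{\zeta,t}^2\le K\,\pi\cdot\phi_t^2$, and the closing spectral comparison plus Gronwall is the same. Your concluding remark correctly identifies the max-to-sum step as the sole source of the $\gamma<\tfrac{2}{1+K}$ restriction.
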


\begin{remark}\label{R:ConvergenceInfiniteHorizon}
Convergence of ODEs of the type (\ref{EvolutionEquationIntroductionXavierInfiniteTime}) to solutions of the corresponding stationary equation has been studied in the literature in \cite{Borkar1998,Tsitsiklis1994}. The difference between our case and these earlier works is the nature of the matrix $A$ which appears in the ODE (e.g., see equation (\ref{EvolutionEquationIntroductionXavierInfiniteTime}) with the matrix $A$). In previous papers such as \cite{Borkar1998,Tsitsiklis1994}, the matrix $A$ is either an identity matrix or a diagonal matrix with diagonal elements uniformly bounded away from zero and with an upper bound of one. In our case, the Q-learning algorithm with a neural network produces an ODE with a matrix A that \emph{is not} a diagonal matrix. The arguments of \cite{Borkar1998,Tsitsiklis1994} do not establish convergence in the case where $A$ is non-diagonal. Lemma \ref{L:ConvergenceInfiniteCase} proves convergence for a non-diagonal matrix $A$ for small $\gamma$.

 Despite our best efforts we did not succeed in proving  Lemma \ref{L:ConvergenceInfiniteCase} for all $0<\gamma<1$ in our general case with the non-diagonal matrix A, which is produced by the neural network approximator in the Q-learning algorithm. As we discussed in Section \ref{S:BackgroundInformation}, the difficulties that arise here are also related to the fact that the Q-learning algorithm calculates its update by taking the derivative of $L(\theta)$ \emph{while treating the target $Y$ as a constant}. Hence, the asymptotic dynamics of the Q-network as $N$ and $k$ grow to infinity, which is the solution to the ODE (\ref{EvolutionEquationIntroductionXavierInfiniteTime}), may not necessarily move in the descent direction of the limiting objective function (this is in contrast to the standard regression problem with i.i.d. data that we study in Section \ref{S:SpecialCase}). \textcolor{black}{It is also worthwhile noting that the global convergence results of \cite{Cai2019} are also valid for sufficiently small $\gamma$. }

\subsection{The finite time horizon setting}\label{SS:fTimeHorResults}

In contrast to the infinite horizon setting, and as shown in Theorem \ref{MainTheorem1} below, one can prove convergence for all values of the discount factor $\gamma\in(0,1]$ in the finite time horizon case. We are able to prove convergence for all $\gamma \in (0,1]$ because in the finite time horizon case one can study the large time limit of the limiting ODE recursively.
\end{remark}

We now consider the finite time horizon problem. The $Q$-network, which models the value of state $x$ and action $a$ at time $j$, is
\begin{align*}
Q(j,x,a) = \frac{1}{\sqrt{N}} \sum_{i=1}^N C^{i,j} \sigma \big{(} W^i \cdot (x,a) \big{)},
\end{align*}
where $C^i \in \mathbb{R}^J$, $W^i \in \mathbb{R}^d$, $d = d_x + d_a$, and $\sigma: \mathbb{R} \rightarrow \mathbb{R}$. Note that the parameter $W^i$ is shared across all times $j$.

The model parameters $\theta$ are trained using the Q-learning algorithm. The parameter updates are given by, for training iterations $k = 0, 1, \ldots$ and times $j = 0, \ldots, J-1$, the following equations:
\begin{align}
C^{i,j}_{k+1} =& C^{i,j}_k + \frac{ \alpha^N }{\sqrt{N}} \bigg{(} r_{k,j} + \gamma \max_{a' \in \mathcal{A}} Q^N(j+1,x_{k,j+1}, a'; \theta_k) - Q^N(j,x_{k,j}, a_{k,j}; \theta_k) \bigg{)} \sigma \big{(} W^i_k \cdot (x_{k,j}, a_{k,j}) \big{)}, \notag \\
W^i_{k+1} =& W^i_k + \frac{ \alpha^N}{\sqrt{N}} \sum_{j=0}^{J-1}  \bigg{(} r_{k,j} + \gamma \max_{a' \in \mathcal{A}} Q^N(j+1, x_{k,j+1}, a'; \theta_k) - Q^N(j, x_{k,j}, a_{k,j}; \theta_k) \bigg{)}  \notag \\
\times& C^{i,j}_k \sigma' \big{(} W^i_k (x_{k,j}, a_{k,j}) \big{)} (x_{k,j}, a_{k,j}), \notag \\
Q^{N}(j, x,a; \theta_k) =& \frac{1}{\sqrt{N}} \sum_{i=1}^N C^{i,j}_k \sigma \big{(} W^i_k  \cdot (x,a) \big{)}, \label{SGDupdates00}
\end{align}
where $r_{k,j} = r(j, x_{k,j}, a_{k,j})$ and $x_{k,j+1}\sim p(\cdot |x_{k,j}, a_{k,j})$ and $a_{k,j}\sim \text{Uniform}(1/K)$. For notational convenience, define $Q^{N}_k(j, x,a) = Q^{N}(j, x,a; \theta_k)$ and $Q^N(J, x, a; \theta_k) = r(J, x)$.

\begin{assumption} \label{A:AssumptionPositiveDefiniteFiniteTime} Certain properties of the limit ODE also require the following assumptions:
\begin{itemize}
\item The activation function $\sigma$ is non-polynomial and slowly increasing.
\item \textcolor{black}{$C_0^i \in [-B, B]^J$ and $\mu_0(\Gamma) > 0$ for any set $\Gamma \subset [-B,B]^J \times \mathbb{R}^d$ with positive Lebesgue measure.}
\end{itemize}
\end{assumption}

Similar to the infinite time horizon case, we define the processes $\nu_k^{N,j} = \frac{1}{N} \sum_{i=1}^N \delta_{C^{i,j}_k, W^i_k}$, $\mu_t^{N,j} = \nu_{\floor*{N t}}^{N,j}$, and $h_t^{N}(j,x,a)= Q_{\floor*{N t}}^N(j,x,a)$. We will study convergence in distribution of the random process $(\mu_t^N, h_t^N)$ as $N \rightarrow \infty$ in the space $D_E([0,T])$ where $E = \mathcal{M}(\mathbb{R}^{J+d}) \times \mathbb{R}^{|\mathcal{X} \times \mathcal{A}|}$. Denote the probability distribution of $(x_{k,j}, a_{k,j})$ denoted as $\pi_j(x_k,a_k)$.  We then have the  following theorem.
\begin{theorem} \label{MainTheorem1}
Let Assumption \ref{A:Assumption2} hold and let the learning rate be $\alpha^N = \frac{\alpha}{N}$ with $0<\alpha<\infty$. In addition, we assume that for $j=0,1,2,\cdots,J$ the stationary distribution $\pi_{j}(x,a)>0$ for every $(x,a)\in\mathcal{X}\times\mathcal{A}$. The process $(\mu_t^N, h_t^N)$ converges in distribution in the space $D_E([0,T])$ as $N \rightarrow \infty$ to $(\mu_t, h_t)$. \textcolor{black}{For any $t\in[0,T]$, $j=1,2,\cdots, J$, $(x,a)\in\mathcal{X}\times\mathcal{A}$ and for every $f \in C_2^b( \mathbb{R}^{1+d})$, $(\mu_t^N, h_t^N)$ satisfies the random ODE}
\begin{align}
h_t(j, x,a) =&  h_0(j,x,a) +  \int_0^t \sum_{(x', a') \in \mathcal{X} \times \mathcal{A}} \pi_j(x', a') A_{x,a,x', a'}   \bigg{(} r(j, x', a') \notag \\
+& \gamma \sum_{z} \max_{a'' \in \mathcal{A}} h_s(j+1,z, a'') p(z | x', a') - h_s(j, x', a') \bigg{)} ds, \notag \\
h_0(j, x,a) =& \mathcal{G}(j, x, a), \notag \\
<f,\mu_t> =& <f,\mu_0>,
\label{EvolutionEquationIntroductionXavierFiniteTimeHorizon}
\end{align}
where $j = 0, 1, \ldots, J-1$ and $h_t(J,x,a) = r(J, x)$. The tensor $A$ is
\begin{align}
A_{x, a, x', a'} &= \alpha \la \sigma(w \cdot (x', a') ) \sigma(w \cdot (x,a) ), \mu_0 \ra + \alpha\la c_j^2 \sigma'(w \cdot (x', a') ) \sigma'(w \cdot (x,a) ) (x',a')\cdot (x,a), \mu_0 \ra.\notag
\end{align}

Furthermore, if Assumption \ref{A:AssumptionPositiveDefiniteFiniteTime} holds, the neural network converges to the solution of the Bellman equation (\ref{HJBmainFiniteT}):
\begin{align}
\lim_{t \rightarrow \infty} \sup_{j,x,a}|h_t(j,x,a) - V(j,x,a)| = 0.\notag
\end{align}

\end{theorem}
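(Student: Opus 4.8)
The plan is to prove the $t\to\infty$ convergence by \emph{backward induction on the time index} $j$, exploiting the fact that, unlike the infinite horizon equation \eqref{EvolutionEquationIntroductionXavierInfiniteTime}, the finite horizon ODE \eqref{EvolutionEquationIntroductionXavierFiniteTimeHorizon} couples level $j$ only to level $j+1$ and never back to itself. For fixed $j$, write $\mathbf h_t^{(j)}=(h_t(j,x,a))_{(x,a)\in\mathcal X\times\mathcal A}$ and $V^{(j)}=(V(j,x,a))_{(x,a)}$, let $\tilde A^{(j)}$ denote the symmetric matrix with entries $A_{x,a,x',a'}$ (which carries the $j$-dependence through $c_j^2$), let $D_{\pi_j}=\mathrm{diag}(\pi_j(x',a'))$, and let $T_j$ be the Bellman operator $[T_j \mathbf h^{(j+1)}]_{(x',a')}=r(j,x',a')+\gamma\sum_z\max_{a''}h(j+1,z,a'')p(z|x',a')$. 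Differentiating \eqref{EvolutionEquationIntroductionXavierFiniteTimeHorizon} in $t$ gives the matrix-form ODE $\dot{\mathbf h}_t^{(j)}=\tilde A^{(j)}D_{\pi_j}\big(T_j\mathbf h_t^{(j+1)}-\mathbf h_t^{(j)}\big)$, while the Bellman equation \eqref{HJBmainFiniteT} reads $V^{(j)}=T_j V^{(j+1)}$.

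The base case $j=J$ is immediate since $h_t(J,x,a)=r(J,x)=V(J,x,a)$ for all $t$. For the inductive step, assume $\sup_{x,a}|h_t(j+1,x,a)-V(j+1,x,a)|\to 0$ and set $e_t^{(j)}=\mathbf h_t^{(j)}-V^{(j)}$. Using $V^{(j)}=T_jV^{(j+1)}$, I would decompose the error dynamics as $\dot e_t^{(j)}=-\tilde A^{(j)}D_{\pi_j}e_t^{(j)}+\tilde A^{(j)}D_{\pi_j}\varepsilon_t$, where the forcing term is $\varepsilon_t=T_j\mathbf h_t^{(j+1)}-T_jV^{(j+1)}$. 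Because the map $\mathbf h^{(j+1)}\mapsto \max_{a''}h(j+1,\cdot,a'')$ is nonexpansive in the sup norm, is multiplied by $\gamma\le 1$, and is averaged against the probability kernel $p(\cdot|x',a')$, one obtains $\|\varepsilon_t\|_\infty\le\gamma\sup_{x,a}|h_t(j+1,x,a)-V(j+1,x,a)|\to 0$. Thus the level-$j$ error solves a \emph{linear} ODE whose homogeneous part is $-\tilde A^{(j)}D_{\pi_j}$ and whose inhomogeneity vanishes as $t\to\infty$.

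The heart of the argument is the stability of the (generally non-symmetric) product $\tilde A^{(j)}D_{\pi_j}$. Under Assumption \ref{A:AssumptionPositiveDefiniteFiniteTime}, the finite-horizon analogue of Lemma \ref{PositiveDefiniteLemma} makes $\tilde A^{(j)}$ symmetric positive definite for each $j$, and the hypothesis $\pi_j>0$ makes $D_{\pi_j}$ diagonal positive definite. For any eigenpair $\tilde A^{(j)}D_{\pi_j}v=\lambda v$ one has $D_{\pi_j}v=\lambda(\tilde A^{(j)})^{-1}v$, so $\lambda=\frac{v^*D_{\pi_j}v}{v^*(\tilde A^{(j)})^{-1}v}>0$; hence every eigenvalue is real and strictly positive and $-\tilde A^{(j)}D_{\pi_j}$ is Hurwitz. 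Rather than diagonalize, I would use the Lyapunov function $\Phi(e)=e^\top(\tilde A^{(j)})^{-1}e$, for which $\frac{d}{dt}\Phi(e_t^{(j)})=-2\,(e_t^{(j)})^\top D_{\pi_j}e_t^{(j)}+2\,(e_t^{(j)})^\top D_{\pi_j}\varepsilon_t\le -c\,\Phi(e_t^{(j)})+C\|\varepsilon_t\|_\infty^2$ for constants $c,C>0$ depending on the extreme eigenvalues of $\tilde A^{(j)}$ and on $\min_{x,a}\pi_j(x,a)$, after a Young inequality on the cross term. Since $\|\varepsilon_t\|_\infty\to 0$, a standard Gronwall comparison yields $\Phi(e_t^{(j)})\to 0$, hence $\sup_{x,a}|h_t(j,x,a)-V(j,x,a)|\to 0$, closing the induction and giving the claimed uniform-in-$(j,x,a)$ convergence.

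I expect the main obstacle to be the stability analysis of $\tilde A^{(j)}D_{\pi_j}$: this matrix is neither symmetric nor diagonal (the key departure from \cite{Borkar1998,Tsitsiklis1994} noted in Remark \ref{R:ConvergenceInfiniteHorizon}), so naive sup-norm contraction estimates fail and one must exploit the symmetric positive definiteness of $\tilde A^{(j)}$ through the weighted Lyapunov function $\Phi$. The reason the argument succeeds for \emph{all} $\gamma\in(0,1]$---in contrast to the infinite horizon Lemma \ref{L:ConvergenceInfiniteCase}, which needs $\gamma<\frac{2}{1+K}$---is precisely that the backward recursion is one-directional: at each level the troublesome $\max$ nonlinearity enters only through the already-controlled level $j+1$, so it appears as a vanishing external forcing rather than as a self-coupling that would demand a global $\gamma$-contraction. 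A secondary technical point to verify is global existence and uniqueness of the solution to \eqref{EvolutionEquationIntroductionXavierFiniteTimeHorizon}, which follows from the Lipschitz continuity (in the sup norm) of the right-hand side, the boundedness of $r$, and the linear-in-$\mathbf h^{(j)}$ structure at each level.
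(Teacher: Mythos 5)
Your treatment of the large-time limit $\lim_{t\to\infty}\sup_{j,x,a}|h_t(j,x,a)-V(j,x,a)|=0$ is correct and is essentially the paper's own argument in Section \ref{AnalysisLimitEqnFiniteTimeCase}: backward induction starting from $\phi_t(J,\cdot)\equiv 0$, the weighted Lyapunov function $\phi_{t,j}^{\top}A^{-1}\phi_{t,j}$ (your $\Phi$), positive definiteness of $A$ from Lemma \ref{PositiveDefiniteLemma} combined with $\pi_j>0$, and the observation that the $\max$-nonlinearity enters level $j$ only through the already-controlled level $j+1$, so it acts as a vanishing forcing rather than a self-coupling --- which is exactly why no restriction on $\gamma\in(0,1]$ is needed, in contrast to Lemma \ref{L:ConvergenceInfiniteCase}. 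The only difference is cosmetic: the paper closes the induction step from the integral inequality $Y_t\le Y_s-C_0\int_s^t Y_u\,du+\int_s^t|\Gamma_u^2|\,du$ by a contradiction argument, whereas you run a variation-of-constants/Gronwall comparison on $\dot\Phi\le -c\Phi+C\|\varepsilon_t\|_\infty^2$; both work, and yours is arguably cleaner. (Your eigenvalue digression on $\tilde A^{(j)}D_{\pi_j}$ is correct but not needed once the Lyapunov argument is in place.)

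The genuine gap is that this is only the final clause of the theorem. The main assertion --- that $(\mu_t^N,h_t^N)$ converges in distribution in $D_E([0,T])$ to the solution of the random ODE (\ref{EvolutionEquationIntroductionXavierFiniteTimeHorizon}) --- is not addressed; checking existence and uniqueness for the limit ODE does not substitute for the pre-limit analysis. The paper establishes this by rerunning the infinite-horizon machinery: Taylor-expanding the evolution of $Q_k^N(j,\zeta)$, proving a priori bounds and relative compactness, identifying any limit point with (\ref{EvolutionEquationIntroductionXavierFiniteTimeHorizon}) via the vanishing of the fluctuation terms (Lemmas \ref{IdentLemma1}, \ref{IdentLemma2}, \ref{Mlemma}), and proving uniqueness of the limit. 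One step there is specific to the finite-horizon setting and is easy to miss: since the weights $W^i$ are shared across time indices, the pre-limit drift (\ref{Eq:h_term_finiteTimeHorizon}) contains cross terms $A^{j,m}_{\zeta,\zeta'}(s)$ with $m\neq j$ that couple all time levels, and these are absent from the limit ODE; they disappear only because $\la c^j c^m(\cdots),\mu_0\ra=0$ for $m\neq j$, the coordinates of $C_0^i$ being independent and mean zero. A complete proof of the theorem must account for the convergence in distribution and, in particular, for the vanishing of this cross-level coupling.
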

\begin{proof}
The proof of this result is in Section \ref{S:ProofFiniteCase}.
\end{proof}

\section{A special case: neural networks and regression}\label{S:SpecialCase}

The asymptotic approach developed in this paper can be used to study other popular cases in machine learning. For example, consider the case of the objective function (\ref{Eq:LossFunction}) but with $y_{k}$ now being independent samples from a fixed distribution. Then, (\ref{Eq:LossFunction}) is simply the mean-squared error objective function for regression. Using the same techniques as we employ on (the more difficult) Q-learning problem discussed in the previous section, we can establish the asymptotic behavior of neural network models used in regression. Let the neural network be
\begin{align}
g^{N}(x; \theta) = \frac{1}{\sqrt{N}} \sum_{i=1}^N C^i \sigma( W^i \cdot  x),\nonumber
\end{align}
where $C^i \in \mathbb{R}$, $W^i \in \mathbb{R}^d$, $x \in \mathbb{R}^d$, and $\sigma(\cdot): \mathbb{R} \rightarrow \mathbb{R}$. The objective function is
\begin{align}
\mathcal{L}^{N}(\theta) = \mathbb{E} \bigg{[} (Y - g^N(X; \theta) )^2 \bigg{]},\nonumber
\end{align}
where the data $(X,Y) \sim \pi(dx,dy)$, $Y \in \mathbb{R}$, and the parameters $\theta = (C^1, \ldots, C^N, W^1, \ldots, W^N ) \in \mathbb{R}^{N \times (1 + d) }$. 

The model parameters $\theta$ are trained using stochastic gradient descent. The parameter updates are given by:
\begin{align}
C^i_{k+1} =& C^i_k + \frac{ \alpha^N }{\sqrt{N}} ( y_k - g_k^N(x_k) ) \sigma(W^i_k \cdot x_k ), \notag \\
W^i_{k+1} =& W^i_k + \frac{ \alpha^N}{\sqrt{N}}  ( y_k - g_k^N(x_k) ) C^i_k \sigma'(W^i_k \cdot x_k ) x_k, \notag \\
g_k^N(x) =&  \frac{1}{\sqrt{N}} \sum_{i=1}^N C^i_k \sigma( W^i_k \cdot x),\notag
\end{align}
for $k = 0, 1, \ldots$. $\alpha^N$ is the learning rate (which may depend upon $N$). The data samples are $(x_k, y_k)$ are i.i.d. samples from a distribution $\pi(dx,dy)$.

In Theorem \ref{MainTheorem3} we prove that a neural network with the Xavier initialization (i.e., with the $\frac{1}{\sqrt{N}}$ normalization in the formula for $g^{N}(x;\theta)$) and trained with stochastic gradient descent converges in distribution to a random ODE as the number of units and training steps become large. Although the pre-limit problem of optimizing a neural network is non-convex (and therefore the neural network may converge to a local minimum), the limit equation minimizes a quadratic objective function. In Theorem \ref{ZeroTrainingError}, we also show that  the neural network (in the limit) will converge to a global minimum with zero loss on the training set. Convergence to a global minimum for a neural network has been recently proven in \cite{JasonLee}, \cite{Du1}, and \cite{UCLA2018}. Our result shows that convergence to a global minimum can also be viewed as a simple consequence of the mean-field limit for neural networks.

For completeness, we also mention here that other scaling regimes have also been studied in the literature. In particular, \cite{Chizat2018,Montanari,RVE,SirignanoSpiliopoulosNN1,SirignanoSpiliopoulosNN2} study the asymptotics of single-layer neural networks with a $\frac{1}{N}$ normalization; that is, $g^{N}(x; \theta) = \frac{1}{N} \sum_{i=1}^N C^i \sigma( W^i  \cdot x)$. \cite{SirignanoSpiliopoulosNN3} studies the asymptotics of deep (i.e., multi-layer) neural networks with a $\frac{1}{N}$ normalization in each hidden layer. The $\frac{1}{N}$ normalization is convenient since the single-layer neural network is then in a traditional mean-field framework where it can be described via an empirical measure of the parameters. In the single layer case, the limit for the neural network satisfies a partial differential equation. As discussed in \cite{Montanari}, it is \emph{not} necessarily true that the limiting equation (a PDE in this case) will converge to the global minimum of an objective function with zero training error (i.e., zero loss on the training dataset). However, the $\frac{1}{\sqrt{N}}$ normalization that we study in this paper is more widely-used in practice (see \cite{Xavier}) and, importantly, as we demonstrate in Theorem \ref{ZeroTrainingError}, the limit equation converges to a global minimum with zero training error.

Lastly, we mention here that \cite{NTK} proved, using different methods, a limit for neural networks with a $\frac{1}{\sqrt{N}}$ Xavier initialization when they are trained with continuous-time gradient descent. Our result in Theorem \ref{MainTheorem3} proves a limit for neural networks trained with the (standard) discrete-time stochastic gradient descent algorithm which is used in practice, and rigorously passes from discrete time (where the stochastic gradient descent updates evolve) to continuous time.

\begin{assumption} \label{A:Assumption3} We impose the following assumptions:
\begin{itemize}
\item The activation function $\sigma\in C^{2}_{b}(\mathbb{R})$, i.e. $\sigma$ is twice continuously differentiable and bounded.

\item The randomly initialized parameters $(C_0^i, W_0^i)$ are i.i.d., mean-zero random variables with a distribution $ \mu_0(dc, dw)$.
\item The random variable $C_0^i$ has compact support and $\la \norm{w}, \mu_0 \ra < \infty$.
\item 
    There is a fixed dataset of $M$ data samples $(x^{(i)}, y^{(i)} )_{i=1}^M$ and we set $\pi(dx,dy) = \displaystyle \frac{1}{M} \sum_{i=1}^M \delta_{(x^{(i)}, y^{(i)}) }(dx,dy)$.
\end{itemize}
\end{assumption}
Note that the last assumption also implies that $\pi(dx,dy)$ has compact support.

Following the asymptotic procedure developed in this paper, we can study the limiting behavior of the network output $g^N_k(x)=g^{N}(x;\theta_k)$ for $x \in \mathcal{D} = \{ x^{(1)}, \ldots, x^{(M)} \}$ as the number of hidden units $N$ and stochastic gradient descent steps $k$ simultaneously become large, after appropriately relating $k$ and $N$. The network output converges in distribution to the solution of a random ODE as $N \rightarrow \infty$.

For this purpose, let us recall the empirical measure defined in (\ref{Eq:EmpiricalMeasure}). Note that the neural network output can be written as the inner-product
\begin{align}
g_k^N(x) =  \la c \sigma(w \cdot x), \sqrt{N} \nu_k^N \ra.\nonumber
\end{align}

Due to Assumption \ref{A:Assumption3}, as $N \rightarrow \infty$ and for $x \in \mathcal{D}$,
\begin{align}
g_0^N(x) \overset{d} \rightarrow \mathcal{G}(x),
\label{Gdefinition}
\end{align}
where $\mathcal{G} \in \mathbb{R}^M$ is a Gaussian random variable. We also of course have that
\begin{align}
\nu_0^N \overset{p} \rightarrow \nu_0 \equiv \mu_0.\nonumber
\end{align}

Define the scaled processes
\begin{align}
h_t^N =& g_{\floor*{N t}}^N, \notag \\
\mu_t^N =& \nu_{\floor*{N t}}^N,\nonumber
\end{align}
where $g_k^N = \bigg{(} g_k^N(x^{(1)}), \ldots, g_k^N(x^{(M)}) \bigg{)}$, $h_t^N(x) = g_{\floor*{N t}}^N(x)$, and $h_t^N = \bigg{(} h_t^N(x^{(1)}), \ldots, h_t^N(x^{(M)}) \bigg{)}$.

Now, we are ready to state the main result of this section,  Theorem \ref{MainTheorem3}.
\begin{theorem} \label{MainTheorem3}
Let Assumption \ref{A:Assumption3} hold, set $\alpha^{N}= \frac{\alpha}{N}$ with $0<\alpha<\infty$ and define $E = \mathcal{M}(\mathbb{R}^{1+d}) \times \mathbb{R}^{M}$. The process $(\mu_t^N, h_t^N)$ converges in distribution in the space $D_E([0,T])$ as $N \rightarrow \infty$ to $(\mu_t, h_t)$ which satisfies, for every $f \in C_2^b( \mathbb{R}^{1+d})$, the random ODE
\begin{align}
h_t(x) =& h_0(x) +   \alpha \int_0^t \int_{\mathcal{X} \times \mathcal{Y}} ( y - h_s(x') ) \la \sigma(w \cdot x ) \sigma(w \cdot x'), \mu_s \ra  \pi(dx',dy) ds \notag \\
+& \alpha \int_0^t \int_{\mathcal{X} \times \mathcal{Y}} ( y - h_s(x') )   \la c^2  \sigma'(w \cdot x')  \sigma'(w \cdot  x) x\cdot x', \mu_s \ra \pi(dx',dy) ds, \notag \\
h_0(x) =& \mathcal{G}(x), \notag \\
\la f, \mu_t \ra =& \la f, \mu_0 \ra.
\label{EvolutionEquationIntroductionXavier}
\end{align}

\end{theorem}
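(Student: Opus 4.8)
The plan is to follow the weak-convergence scheme — a priori moment bounds, relative compactness, and characterization of the limit points — which is exactly the machinery behind the Q-learning Theorem \ref{MainTheoremInfiniteTimeCase}, only simpler. Here the samples $(x_k,y_k)$ are genuinely i.i.d.\ rather than drawn from an ergodic Markov chain, and the target $y_k$ is an exogenous label, so the update direction is an unbiased gradient estimate and the obstruction recorded in (\ref{NotUnbiased}) is absent. First I would derive the pre-limit evolution equations. For the network I write the telescoping sum $h_t^N(x) - h_0^N(x) = \sum_{k} [\,g_{k+1}^N(x) - g_k^N(x)\,]$ and Taylor-expand $c\,\sigma(w\cdot x)$ to first order in the parameter increments. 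Substituting the updates and using $\alpha^N=\alpha/N$, each increment equals
\[
g_{k+1}^N(x) - g_k^N(x) = \frac{\alpha}{N}\,(y_k - g_k^N(x_k))\,\la \sigma(w\cdot x_k)\sigma(w\cdot x) + c^2\sigma'(w\cdot x_k)\sigma'(w\cdot x)\,(x_k\cdot x),\ \nu_k^N\ra + R_k^N(x),
\]
where $R_k^N(x)$ collects the second-order Taylor terms and is negligible, with $\sum_{k\le \floor*{NT}} R_k^N(x)\to 0$. Likewise, for $f\in C^2_b$, $\la f,\mu_t^N\ra - \la f,\mu_0^N\ra = \frac1N\sum_i [\,f(C^i_{\floor*{Nt}}, W^i_{\floor*{Nt}}) - f(C_0^i, W_0^i)\,]$.

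Second, I would establish the a priori bounds that make these expansions meaningful uniformly in $N$. Since a single step moves a parameter by $O(N^{-3/2})$ and there are $\floor*{Nt}=O(N)$ steps, every $(C_k^i,W_k^i)$ stays within $O(N^{-1/2})$ of its initial value over $k\le NT$; hence $|\la f,\mu_t^N\ra - \la f,\mu_0^N\ra| = O(N^{-1/2})$, and combined with $\mu_0^N\to\mu_0$ the measure component freezes, $\mu_t=\mu_0$ weakly. The delicate estimate is the uniform bound $\sup_{k\le NT}\mathbb{E}[(g_k^N(x))^2]\le C_T$: because $g^N$ is order one only through cancellation of its mean-zero summands, I would propagate this bound through the $O(N)$ noisy updates by a discrete Gr\"onwall argument, using $\sigma\in C^2_b$, the compact support of $C_0^i$, $\la\norm{w},\mu_0\ra<\infty$, and the bounded dataset. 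This controls $(y_k - g_k^N(x_k))$ and thereby the remainder and martingale contributions.

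Third, I would identify the drift. With $\mathcal F_k$ the natural filtration, split $\sum_k (y_k - g_k^N(x_k))\la\cdots,\nu_k^N\ra$ into its predictable part $\sum_k \mathbb{E}[\,\cdot\mid\mathcal F_k\,] = \sum_k \int (y - g_k^N(x'))\la\cdots,\nu_k^N\ra\pi(dx',dy)$ plus a martingale. The martingale has quadratic variation $O(N\cdot N^{-2})=O(N^{-1})\to 0$, so by Doob it is asymptotically negligible, while the predictable $\tfrac1N$-weighted sum is a Riemann sum converging to $\alpha\int_0^t\int (y - h_s(x'))\la\cdots,\mu_s\ra\pi(dx',dy)\,ds$. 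I would then prove relative compactness of $(\mu_t^N,h_t^N)$ in $D_E([0,T])$ — compact containment for $\mu^N$ from its frozen, first-moment-bounded support, and a modulus-of-continuity bound for $h^N$ from the increment estimates — so that limit points exist. Every limit point satisfies $\mu_t=\mu_0$ together with the ODE (\ref{EvolutionEquationIntroductionXavier}) with initial condition $h_0=\mathcal G$ from (\ref{Gdefinition}). Since $\mu_t\equiv\mu_0$, that ODE is a linear constant-coefficient system for $h_t\in\mathbb{R}^M$ whose solution is unique by Gr\"onwall; relative compactness with a unique limit point yields convergence in distribution of the whole sequence.

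I expect the main obstacle to be the uniform-in-$N$ second-moment control of $g_k^N$ over the full horizon $k\le NT$: the output is order one only by cancellation, yet it reenters its own updates through the factor $y_k - g_k^N(x_k)$, so one must close a self-referential Gr\"onwall estimate and, simultaneously, show that the accumulated second-order remainders $\sum_k R_k^N$ and the martingale fluctuations vanish, all with constants independent of $N$.
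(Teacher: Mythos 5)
Your proposal is correct and reproduces the architecture the paper actually uses for this theorem: the paper omits the proof of Theorem \ref{MainTheorem3}, stating it is exactly parallel to that of Theorem \ref{MainTheoremInfiniteTimeCase}, and that proof proceeds precisely through your steps --- Taylor expansion of the pre-limit increments in terms of $\nu_k^N$, the a priori bounds $|C_k^i|\le C$ and $\mathbb{E}\norm{W_k^i}\le C$ via discrete Gr\"onwall (Lemma \ref{CandWbounds}), the uniform second-moment bound on the network output (Lemma \ref{GLemmaBound}), compact containment plus the regularity estimates, relative compactness via Ethier--Kurtz, identification of limit points, uniqueness by Gr\"onwall for the linear ODE, and Prokhorov to upgrade subsequential convergence to convergence of the full sequence. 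The one place you genuinely depart from the paper is the treatment of the fluctuation terms: you condition on the filtration, split the sum into its predictable part plus a martingale, and kill the martingale by a quadratic-variation bound of order $O(N^{-1})$. The paper instead proves two averaging lemmas (Lemmas \ref{IdentLemma1} and \ref{IdentLemma2}) that control $\frac{1}{N}\sum_k f_k^N(z_k)-\frac{1}{N}\sum_k\sum_z f_k^N(z)q(z)$ in $L^1$ using the convergence of the empirical occupation measure of the Markov chain together with Cauchy--Schwarz and a block discretization in $k$; it then applies these verbatim to the regression setting, where the i.i.d.\ samples are a special case of an ergodic chain. Your martingale route is cleaner and more standard for i.i.d.\ data --- the conditional expectation given the past coincides with the $\pi$-average, which is exactly what fails for a general Markov chain and forces the paper's heavier occupation-measure argument --- but it buys nothing beyond this special case, whereas the paper's lemmas are the ones that also carry the Q-learning theorems. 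Both arguments close, and your identification of the self-referential Gr\"onwall bound on $\mathbb{E}[(g_k^N(x))^2]$ as the delicate step matches where the technical weight sits in the paper's Lemma \ref{GLemmaBound}.
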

\begin{proof}
The proof of this theorem is omitted because it is exactly parallel to the proof of Theorem \ref{MainTheoremInfiniteTimeCase}.
\end{proof}

Recall that $\mathcal{G} \in \mathbb{R}^M$ is a Gaussian random variable; see equation (\ref{Gdefinition}). In addition, note that $\bar \mu_t$ in the limit equation (\ref{EvolutionEquationIntroductionXavier}) is a constant, i.e. $\mu_t = \mu_0$ for $t \in [0,T]$. Therefore, (\ref{EvolutionEquationIntroductionXavier}) reduces to
\begin{align}
h_t(x) =& h_0(x) +   \alpha \int_0^t \int_{\mathcal{X} \times \mathcal{Y}} ( y - h_s(x') ) \la \sigma(w \cdot x ) \sigma(w \cdot x'), \mu_0 \ra  \pi(dx',dy) ds \notag \\
+& \alpha \int_0^t \int_{\mathcal{X} \times \mathcal{Y}} ( y - h_s(x') )   \la c^2  \sigma'(w \cdot x')  \sigma'(w \cdot x) x\cdot x', \mu_0 \ra \pi(dx',dy) ds, \notag \\
h_0(x) =& \mathcal{G}(x).
\label{LimitEquation2}
\end{align}

Since (\ref{LimitEquation2}) is a linear equation in $C_{\mathbb{R}^M}([0,T])$, the solution $h_t$ is unique. \\

To better understand (\ref{LimitEquation2}), define the matrix $A \in \mathbb{R}^{M \times M}$ where
\begin{align*}
A_{x, x'} = \frac{\alpha}{M} \la \sigma(w \cdot x ) \sigma(w \cdot x'), \mu_0 \ra   +  \frac{\alpha}{M}    \la c^2  \sigma'(w \cdot x')  \sigma'(w \cdot x) x\cdot x', \mu_0 \ra,
\end{align*}
where $x, x' \in \mathcal{D}$. $A$ is finite-dimensional since we fixed a training set of size $M$ in the beginning.

Then, (\ref{LimitEquation2}) becomes
\begin{align}
d h_t =& A  \bigg{(} \hat Y -  h_t \bigg{)} dt, \notag \\
h_0 =& \mathcal{G},\nonumber
\end{align}
where $\hat Y = ( y^{(1)}, \ldots,  y^{(M)} )$.

Therefore, $h_t$ is the solution to a continuous-time gradient descent algorithm which minimizes a quadratic objective function.
\begin{align}
\frac{d h_t}{d t} =& - \frac{1}{2} \nabla_h J(\hat Y, h_t), \notag \\
J( y, h) =& \big{(} y -  h \big{)}^{\top} A \big{(} y -  h \big{)},  \notag \\
h_0 =&  \mathcal{G}.
\nonumber
\end{align}

Therefore, even though the pre-limit optimization problem is non-convex, the neural network's limit will minimize a quadratic objective function.

An interesting question is whether $h_t \rightarrow \hat Y$ as $t \rightarrow \infty$. That is, in the limit of large numbers of hidden units and many training steps, does the neural network model converge to a global minimum with zero training error? Theorem \ref{ZeroTrainingError} shows that $h_t \rightarrow \hat Y$ as $t \rightarrow \infty$ if $A$ is positive definite. Lemma \ref{PositiveDefiniteLemma} proves that, under reasonable hyperparameter choices and if the data samples are distinct, $A$ will be positive definite. \\

\begin{theorem} \label{ZeroTrainingError}
If Assumption \ref{A:AssumptionPositiveDefinite} holds and the data samples are distinct, then
\begin{align}
h_t \rightarrow \hat Y \phantom{....} \textrm{as} \phantom{....} t \rightarrow \infty.\nonumber
\end{align}

\end{theorem}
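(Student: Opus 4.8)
The plan is to exploit the reduction already carried out just before the statement: the limit equation (\ref{LimitEquation2}) was rewritten as the linear matrix ODE
\begin{align}
\frac{d h_t}{dt} = A(\hat Y - h_t), \qquad h_0 = \mathcal{G},\nonumber
\end{align}
where $A \in \mathbb{R}^{M\times M}$ is the fixed (deterministic) matrix displayed above and $\hat Y = (y^{(1)},\dots,y^{(M)})$ is the vector of training labels. The natural first move is to pass to the error process $e_t = h_t - \hat Y$. Since $\hat Y$ is constant in $t$, differentiating gives $\dot e_t = \dot h_t = -A e_t$, a homogeneous linear system with constant coefficients whose unique solution is $e_t = e^{-At} e_0$ with $e_0 = \mathcal{G} - \hat Y$.

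Next I would record the two structural properties of $A$ that drive the argument. First, $A$ is symmetric: both terms in its definition are manifestly invariant under the swap $x \leftrightarrow x'$. Second, under Assumption \ref{A:AssumptionPositiveDefinite} and the distinctness hypothesis, $A$ is positive definite; this is exactly Lemma \ref{PositiveDefiniteLemma} applied with $\alpha/M$ in place of $\alpha$ and with the data points $x^{(1)},\dots,x^{(M)}$ in the role of the $\zeta^{(i)}$. Being symmetric positive definite, $A$ has a smallest eigenvalue $\lambda_{\min} > 0$, and the spectral theorem yields the operator-norm bound $\norm{e^{-At}} \le e^{-\lambda_{\min} t}$ for all $t \ge 0$.

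Combining these, $\norm{e_t} = \norm{e^{-At} e_0} \le e^{-\lambda_{\min} t}\norm{e_0} \to 0$ as $t \to \infty$, which is precisely the claim $h_t \to \hat Y$. The initial condition $e_0 = \mathcal{G} - \hat Y$ is a random vector, but the decay bound holds pathwise with a deterministic rate $\lambda_{\min}$, so the convergence is almost sure (and indeed uniform over bounded sets of initial data).

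I expect the only genuine difficulty to lie inside Lemma \ref{PositiveDefiniteLemma}, i.e.\ establishing positive definiteness of $A$; once that is granted, the remainder is textbook linear-ODE stability theory. The one point deserving care is the translation between the hypothesis ``the data samples are distinct'' used here and the ``distinct directions'' hypothesis of Lemma \ref{PositiveDefiniteLemma}: I would either assume, as is standard when a bias coordinate is appended so that inputs are lifted to $(x,1)$, that distinct samples yield distinct directions, or verify directly that the non-polynomial slowly-increasing choice of $\sigma$ makes the relevant feature families linearly independent for distinct $x^{(i)}$.
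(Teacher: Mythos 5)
Your proposal is correct and follows essentially the same route as the paper: the paper likewise passes to $\tilde h_t = h_t - \hat Y$, obtains the homogeneous linear system $d\tilde h_t = -A\,\tilde h_t\,dt$, and concludes $\tilde h_t \to 0$ from the positive definiteness of $A$ supplied by Lemma \ref{PositiveDefiniteLemma}. You merely make explicit the spectral estimate $\norm{e^{-At}} \le e^{-\lambda_{\min}t}$ (and the symmetry of $A$) that the paper leaves implicit, so there is nothing substantive to add.
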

\begin{proof}
Consider the transformation $\tilde h_t = h_t - \hat Y$. Then,
\begin{align}
d \tilde h_t =& - A \tilde h_t dt, \notag \\
\tilde h_0 =& \mathcal{G} - \hat Y.
\nonumber
\end{align}
Then, $\tilde h_t \rightarrow 0$ (and consequently $h_t \rightarrow \hat Y$) as $t \rightarrow \infty$ if $A$ is positive definite. Lemma \ref{PositiveDefiniteLemma} proves that $A$ is positive definite under the Assumption \ref{A:AssumptionPositiveDefinite} and if the data samples are distinct.

\end{proof}

In connection to Theorem \ref{ZeroTrainingError}, we note that the data samples in the dataset will be distinct with probability $1$ if the random variable $X$ has a probability density function.

\section{Proof of Convergence in Infinite time Horizon Case} \label{ProofInfiniteCase}

We prove Theorem \ref{MainTheoremInfiniteTimeCase} in this section. The proof is divided into three parts. Let $\rho^N$ be the probability measure of a convergent subsequence of $\left(\mu^N, h^N \right)_{0\leq t\leq T}$. In Section \ref{PreLimitEvolution} we write the prelimit in a form that is convenient in order to establish the desired limiting behavior.   In Section \ref{RelativeCompactness}, we prove that the sequence $\rho^N$ is relatively compact (which implies that there is a subsequence $\rho^{N_k}$ which weakly converges). In Section \ref{Identification}, we prove that any limit point of $\rho^N$ is a probability measure of the random ODE (\ref{EvolutionEquationIntroductionXavierInfiniteTime}). In \ref{Uniqueness}, we prove that the limit point is unique. These three results are collected together in Section \ref{ProofOfConvergence} to prove that $(\mu^N, h^N)$ converges in distribution to $(\mu, h)$.

\subsection{Evolution of the Pre-limit Process} \label{PreLimitEvolution}

For notational convenience, let $Q^N(x,a; \theta_k) = Q_k^N(x,a)$, $\zeta = (x,a)$, and $\zeta_k = (x_k, a_k)$. We study the evolution of $Q_k^N(x,a)$ during training.
\begin{align}
Q_{k+1}^N(\zeta)  =& Q_{k}^N(\zeta)  + \frac{1}{\sqrt{N}} \sum_{i=1}^N C^{i}_{k+1} \sigma( W^i_{k+1} \cdot \zeta ) - \frac{1}{\sqrt{N}} \sum_{i=1}^N C^{i}_k \sigma( W^i_k \cdot \zeta ) \notag \\
=& Q_{k}^N(\zeta)  + \frac{1}{\sqrt{N}} \sum_{i=1}^N \bigg{(} C^{i}_{k+1} \sigma( W^i_{k+1} \cdot \zeta) -  C^{i}_k \sigma( W^i_k \cdot \zeta) \bigg{)} \notag \\
=& Q_{k}^N(\zeta)  + \frac{1}{\sqrt{N}} \sum_{i=1}^N \bigg{(} ( C^{i}_{k+1} - C^{i}_k )  \sigma( W^i_{k+1} \cdot \zeta)  + (  \sigma( W^i_{k+1} \cdot \zeta) -   \sigma( W^i_k \cdot \zeta)  ) C^{i}_k \bigg{)} \notag \\
=& Q_{k}^N(\zeta)  + \frac{1}{\sqrt{N}} \sum_{i=1}^N \bigg{(} ( C^{i}_{k+1} - C^{i}_k )  \bigg{[} \sigma(W^i_k \cdot \zeta) +  \sigma'( W^{i,\ast}_{k} \cdot \zeta ) \zeta\cdot ( W^i_{k+1} - W^i_{k} ) \bigg{]}   \notag \\
+& \bigg{[} \sigma'(W^i_k \cdot \zeta) \zeta \cdot ( W^i_{k+1} - W^i_k )   +  \frac{1}{2}\sigma''( W^{i, \ast \ast}_{k} \zeta) \left( ( W^{i}_{k+1} - W^{i}_{k} )\cdot \zeta \right)^2  \bigg{]} C^{i}_k \bigg{)},
\label{gEvolution1}
\end{align}
for points $W^{i,\ast}_{k}$ and $W^{i,\ast,\ast}_{k}$ in the line segment connecting the points $W^{i}_{k}$ and $W^{i}_{k+1}$. Let $\alpha^N = \frac{\alpha}{N}$. Substituting (\ref{SGDupdates}) into (\ref{gEvolution1}) yields

\begin{align}
Q_{k+1}^N(\zeta)  =& Q_{k}^N(\zeta) + \frac{\alpha}{N^2}  \bigg{(} r_k + \gamma \max_{a' \in \mathcal{A}} Q^N_k(x_{k+1}, a') - Q^N_k(\zeta_k) \bigg{)} \sum_{i=1}^N \sigma(W^i_k \cdot \zeta_k ) \sigma(W^i_k \cdot \zeta)   \notag \\
+& \frac{\alpha}{N^2}   \bigg{(} r_k + \gamma \max_{a' \in \mathcal{A}} Q^N_k(x_{k+1}, a') - Q^N_k(\zeta_k) \bigg{)}  \sum_{i=1}^N \sigma'(W^i_k \cdot \zeta)    \sigma'(W^i_k \cdot \zeta_k ) \zeta_k \cdot \zeta  (C^{i}_k)^2  \notag \\
+& \mathcal{O}_{p}(N^{-3/2}),
\label{gEvolution1b}
\end{align}
where when we write $Z_N=\mathcal{O}_{p}(\beta_N)$ for a random variable $Z_{N}$ we mean that $Z_{N}/\beta_{N}$ is stochastically bounded, in that for every $\epsilon>0$, there exists $M<\infty$ and some $N_{0}<\infty$ such that $\mathbb{P}\left(|Z_{N}/\beta_{N}|>M\right)<\epsilon$ for every $N>N_{0}$.

We can re-write the evolution of $Q_k^N(\zeta)$ in terms of the empirical measure $\nu_k^N$,
\begin{align}
Q_{k+1}^N(\zeta)  =& Q_{k}^N(\zeta) + \frac{\alpha}{N}  \bigg{(} r_k + \gamma \max_{a' \in \mathcal{A}} Q^N_k(x_{k+1}, a') - Q^N_k(\zeta_k) \bigg{)}  \la \sigma(w \cdot \zeta_k ) \sigma(w \cdot \zeta), \nu_k^N \ra   \notag \\
+&  \frac{\alpha}{N}   \bigg{(} r_k + \gamma \max_{a' \in \mathcal{A}} Q^N_k(x_{k+1}, a') - Q^N_k(\zeta_k) \bigg{)}  \la \sigma'(w \cdot \zeta)    \sigma'(w  \cdot \zeta_k ) \zeta_k\cdot \zeta  c^2, \nu_k^N \ra  \notag \\
+& \mathcal{O}_{p}(N^{-3/2}).
\label{gEvolution2}
\end{align}

Using (\ref{gEvolution2}), we can write the evolution of $h_t^N(\zeta)$ for $t \in [0,T]$ as
\begin{align}
h_t^N(\zeta) =& h_0^N(\zeta) + \sum_{k=0}^{\floor*{N t}-1} (  Q_{k+1}^N(\zeta) - Q_k^N(\zeta) ) \notag \\
=& h_0^N(\zeta) +\frac{\alpha}{N}  \sum_{k=0}^{\floor*{N t}-1}   \bigg{(} r_k + \gamma \max_{a' \in \mathcal{A}} Q^N_k(x_{k+1}, a') - Q^N_k(\zeta_k) \bigg{)}  \la \sigma(w \cdot \zeta_k ) \sigma(w \cdot \zeta), \nu_k^N \ra   \notag \\
+&   \frac{\alpha}{N}   \sum_{k=0}^{\floor*{N t}-1}  \bigg{(} r_k + \gamma \max_{a' \in \mathcal{A}} Q^N_k(x_{k+1}, a') - Q^N_k(\zeta_k) \bigg{)}  \la \sigma'(w \cdot \zeta)    \sigma'(w  \cdot \zeta_k ) \zeta_k\cdot \zeta  c^2, \nu_k^N \ra  \notag \\
+& \mathcal{O}_{p}(N^{-1/2}) \notag
\end{align}

This can then be rewritten as follows
\begin{align}
h_t^N(\zeta) =& h_0^N(\zeta) + \sum_{k=0}^{\floor*{N t}-1} (  Q_{k+1}^N(\zeta) - Q_k^N(\zeta) ) \notag \\
=& h_0^N(\zeta) + \alpha \int_0^t \sum_{(\zeta', x'') \in \mathcal{X} \times \mathcal{A}  \times \mathcal{X}}     \bigg{(} r(\zeta') + \gamma \max_{a'' \in \mathcal{A}} h_s^N(x'', a'')- h_s^N(\zeta') \bigg{)} \notag \\
& \times \la \sigma \big{(} w \cdot  \zeta' \big{)} \sigma \big{(} w \cdot \zeta \big{)}, \mu_s^N \ra \pi(x'', \zeta')  ds   \notag \\
+&  \alpha   \int_0^t  \sum_{(\zeta', x'') \in \mathcal{X} \times \mathcal{A}  \times \mathcal{X}}  \bigg{(} r(\zeta') + \gamma \max_{a'' \in \mathcal{A}} h_s^N(x'', a'') - h_s^N(\zeta') \bigg{)}  \notag \\
& \times \la c^2 \sigma' \big{(} w \cdot \zeta \big{)}    \sigma' \big{(}w  \cdot \zeta' \big{)} \zeta'\cdot \zeta , \mu_s^N \ra \pi(x'', \zeta') ds \notag \\
+& M_t^{1,N} + M_t^{2,N} + M_t^{3,N} +  \mathcal{O}_{p}(N^{-1/2}),\label{Eq:h_equationPrelimit}
\end{align}
where $\pi(x'', \zeta') = p(x'' | \zeta') \pi(\zeta')$. The fluctuation terms are
\begin{align}
M_t^{1,N}(\zeta) =& -\frac{1}{N}  \sum_{k=0}^{\floor*{N t}-1} Q^N_k(\zeta_k) B_{\zeta,\zeta_k,k}^N   + \frac{1}{N}  \sum_{k=0}^{\floor*{N t}-1} \sum_{\zeta' \in \mathcal{X} \times \mathcal{A}}   Q_k^N(\zeta')  B_{\zeta,\zeta',k}^N \pi(\zeta') , \notag \\
M_t^{2,N}(\zeta) =& \frac{1}{N}  \sum_{k=0}^{\floor*{N t}-1} r_k B_{\zeta,\zeta_k,k}^N   - \frac{1}{N}  \sum_{k=0}^{\floor*{N t}-1}  \sum_{\zeta' \in \mathcal{X} \times \mathcal{A}}  r(\zeta') B_{\zeta,\zeta',k}^N   \pi( \zeta') , \notag \\
M_t^{3,N}(\zeta) =& \frac{1}{N}  \sum_{k=0}^{\floor*{N t}-1} \gamma \max_{a'' \in \mathcal{A}} Q^N_k(x_{k+1}, a'')  B_{\zeta,\zeta_k,k}^N   \notag \\
& -  \frac{1}{N}  \sum_{k=0}^{\floor*{N t}-1}  \sum_{(\zeta', x'') \in \mathcal{X} \times \mathcal{A} \times \mathcal{X} }  \gamma \max_{a'' \in \mathcal{A}} Q_k^N(x'', a'') B_{\zeta,\zeta',k}^N \pi(x'', \zeta') , \label{Eq:h_equationPrelimit2}
\end{align}
where
\begin{align}
B_{\zeta,\zeta',k}^N =& \alpha \bigg{(} \la \sigma \big{(} w \cdot \zeta' \big{)} \sigma \big{(} w \cdot \zeta \big{)}, \nu_k^N \ra   +  \la \sigma' \big{(} w \cdot \zeta \big{)}    \sigma' \big{(} w  \cdot \zeta' \big{)} \zeta'\cdot \zeta  c^2, \nu_k^N \ra \bigg{)}.\label{Eq:B_term}
\end{align}

Later on, in Lemma \ref{Mlemma}, we prove that that the fluctuation terms  $M_t^{i,N}(\zeta)$ go to zero in $L^1$ as $N\rightarrow\infty$.

The evolution of the empirical measure $\nu_k^N$ can be characterized in terms of their projection onto test functions $f \in C^2_b(\mathbb{R}^{1 + d })$. A Taylor expansion yields
\begin{align}
\la f , \nu^N_{k+1} \ra - \la f , \nu^N_k \ra 
=&  \frac{1}{N} \sum_{i=1}^N \bigg{(} f(C^i_{k+1}, W^i_{k+1} ) -  f(C^i_{k}, W^i_{k} )  \bigg{)} \notag \\
=& \frac{1}{N} \sum_{i=1}^N \partial_c f(C^i_{k}, W^i_{k} ) ( C^i_{k+1} -  C^i_{k} )  + \frac{1}{N} \sum_{i=1}^N \partial_w  f(C^i_{k}, W^i_{k} )  \cdot ( W^i_{k+1} -  W^i_{k} ) \notag \\
+& \frac{1}{2 N} \sum_{i=1}^N \partial^{2}_{c} f(\bar C^i_{k},  \bar W^i_{k} ) ( C^i_{k+1} -  C^i_{k} )^2  + \frac{1}{2 N} \sum_{i=1}^N ( C^i_{k+1} -  C^i_{k} )\partial_{cw}  f(\hat C^i_{k}, \hat W^i_{k} )( W^i_{k+1} -  W^i_{k} )    \notag \\
+& \frac{1}{2 N} \sum_{i=1}^N ( W^i_{k+1} -  W^i_{k} )^{\top}\partial^{2}_{w} f(\tilde C^i_{k}, \tilde W^i_{k} ) ( W^i_{k+1} -  W^i_{k} ),
\label{NuEvolution1}
\end{align}
for points $(\bar C^{i}_{k}, \bar W^{i}_{k})$, $(\hat C^{i}_{k}, \hat W^{i}_{k})$ and $(\tilde C^{i}_{k}, \tilde W^{i}_{k})$ in the segments connecting $C^i_{k+1}$ with $C^i_{k}$ and  $W^i_{k+1}$ with $W^i_{k}$, respectively.

Substituting (\ref{SGDupdates}) into (\ref{NuEvolution1}) yields
\begin{align}
\la f , \nu^N_{k+1} \ra - \la f , \nu^N_k \ra &=  N^{-5/2} \sum_{i=1}^N \partial_c f(C^i_{k}, W^i_{k} ) \alpha  \bigg{(} r_k + \gamma \max_{a' \in \mathcal{A}} Q^N_k(x_{k+1}, a') - Q^N_k(\zeta_k) \bigg{)} \sigma \big{(} W^i_k \cdot \zeta_k \big{)} \notag \\
&+ N^{-5/2} \sum_{i=1}^N  \partial_w  f(C^i_{k}, W^i_{k} )    \alpha \bigg{(} r_k + \gamma \max_{a' \in \mathcal{A}} Q^N_k(x_{k+1}, a') - Q^N_k(\zeta_k) \bigg{)} C^{i}_k \sigma' \big{(} W^i_k \cdot \zeta_k \big{)} (\zeta_k)  + O_{p}\left(N^{-2}\right) \notag \\
&=  N^{-3/2} \alpha  \bigg{(} r_k + \gamma \max_{a' \in \mathcal{A}} Q^N_k(x_{k+1}, a') - Q^N_k(\zeta_k) \bigg{)}  \la \partial_c f(c, w )   \sigma (w  \cdot \zeta_k), \nu_k^N \ra   \notag \\
&+ N^{-3/2} \alpha  \bigg{(} r_k + \gamma \max_{a' \in \mathcal{A}} Q^N_k(x_{k+1}, a') - Q^N_k(\zeta_k) \bigg{)}     \la c \sigma' (w  \cdot \zeta_k) \partial_w  f(c,w )\cdot \zeta_{k}, \nu_k^N \ra + O_{p}\left(N^{-2}\right).\label{Eq:nu_dynamicsPrelimit}
\end{align}

Similarly, we can also obtain that
\begin{align}
\la f, \mu^N_t \ra =&  \la f, \mu^N_0 \ra  + \sum_{k=0}^{\floor*{N t}-1} \bigg{(} \la f , \nu^N_{k+1} \ra - \la f , \nu^N_k \ra \bigg{)} \notag \\
=&  \la f, \mu^N_0 \ra  +   N^{-3/2}  \sum_{k=0}^{\floor*{N t}-1}  \alpha  \bigg{(} r_k + \gamma \max_{a' \in \mathcal{A}} Q^N_k(x_{k+1}, a') - Q^N_k(\zeta_k) \bigg{)}  \la \partial_c f(c, w )   \sigma (w  \cdot \zeta_k), \nu_k^N \ra   \notag \\
+& N^{-3/2}  \sum_{k=0}^{\floor*{N t}-1}   \alpha  \bigg{(} r_k + \gamma \max_{a' \in \mathcal{A}} Q^N_k(x_{k+1}, a') - Q^N_k(\zeta_k) \bigg{)}     \la c \sigma' (w  \cdot \zeta_k) \partial_w  f(c,w )\cdot \zeta_{k}, \nu_k^N \ra   + O_{p}\left(N^{-1}\right). \notag \\
\label{muEvolutionWithRemainderTerms}
\end{align}

\subsection{Relative Compactness} \label{RelativeCompactness}

In this section we prove that the family of processes $\{  \mu^N, h^N \}_{N \in \mathbb{N}}$ is relatively compact. Section \ref{CompactContainment} proves compact containment. Section \ref{Regularity} proves regularity. Section \ref{ProofOfRelativeCompactness} combines these results to prove relative compactness.

\subsubsection{Compact Containment} \label{CompactContainment}

We first establish a priori bounds for the parameters $(C_k^i, W_k^i)$.

\begin{lemma} \label{CandWbounds}

For all $i\in\mathbb{N}$ and all $k$ such that $k/N\leq T$,

\begin{align}
| C_k^i | &< C < \infty \notag \\
\mathbb{E} \norm{ W_{k}^{i} } &< C < \infty.\nonumber
\end{align}
\end{lemma}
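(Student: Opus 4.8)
The plan is to read the two bounds directly off the parameter updates (\ref{SGDupdates}) and exploit the two scalings at play: each increment carries a factor $\alpha^N/\sqrt{N}=\alpha N^{-3/2}$, while only $\lfloor Nt\rfloor=O(N)$ steps are taken up to time $t\le T$. Writing $\zeta_k=(x_k,a_k)$ and abbreviating the temporal-difference factor as $G_k=r_k+\gamma\max_{a'\in\mathcal A}Q^N_k(x_{k+1},a')-Q^N_k(\zeta_k)$, boundedness of $\sigma$ and $\sigma'$ (Assumption \ref{A:Assumption2}) and finiteness of $\mathcal X\times\mathcal A$ give
\[
|C^i_{k+1}-C^i_k|\le \frac{\alpha\norm{\sigma}_\infty}{N^{3/2}}\,|G_k|,\qquad \norm{W^i_{k+1}-W^i_k}\le \frac{\alpha\norm{\sigma'}_\infty \zeta_{\max}}{N^{3/2}}\,|G_k|\,|C^i_k|,
\]
where $\zeta_{\max}=\max_{\zeta\in\mathcal X\times\mathcal A}\norm{\zeta}<\infty$. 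Since $r$ is uniformly bounded, $|G_k|\le \norm{r}_\infty+(1+\gamma)R_k$ with $R_k:=\max_{\zeta\in\mathcal X\times\mathcal A}|Q^N_k(\zeta)|$. Thus the whole lemma reduces to showing that $R_k$ stays bounded uniformly in $N$ and in $k\le NT$, after which the parameters cannot drift far from their bounded initial values.

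The main obstacle is the \emph{circular} dependence between the two assertions: the increments of $(C^i_k,W^i_k)$ are controlled by $G_k$, which through $R_k$ depends on \emph{all} the parameters. I would break this circle with a discrete Gronwall estimate combined with a bootstrap/stopping-time argument. A first-order expansion of $C^i_{k+1}\sigma(W^i_{k+1}\cdot\zeta)-C^i_k\sigma(W^i_k\cdot\zeta)$, exactly as in the derivation of (\ref{gEvolution2}), yields the per-step bound
\[
|Q^N_{k+1}(\zeta)-Q^N_k(\zeta)|\le \frac{\alpha}{N^2}\,|G_k|\sum_{i=1}^N\Big(\norm{\sigma}_\infty^2+\norm{\sigma'}_\infty^2\zeta_{\max}^2\,|C^i_k|^2\Big),
\]
so that \emph{on the event that $\max_i|C^i_k|\le C_\ast$} the bracket is at most $\kappa:=\alpha(\norm{\sigma}_\infty^2+\norm{\sigma'}_\infty^2\zeta_{\max}^2 C_\ast^2)$ and $R_{k+1}\le R_k+\frac{\kappa}{N}\big(\norm{r}_\infty+(1+\gamma)R_k\big)$. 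Iterating over at most $NT$ steps gives $R_k\le e^{\kappa(1+\gamma)T}\big(R_0+\norm{r}_\infty/(1+\gamma)\big)$; that is, $R_k$ is controlled by the initial output $R_0$ alone.

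To close the bootstrap, let $B$ be an a.s.\ bound for $|C^i_0|$ (Assumption \ref{A:Assumption2}), set $C_\ast=B+1$, and define the stopping time $\tau=\inf\{k:\max_i|C^i_k|>C_\ast\}$. For $k\le \tau\wedge\lfloor NT\rfloor$ the Gronwall bound holds, so $\sup_{m<k}|G_m|$ is dominated by a quantity depending only on $R_0$; summing the $C$-increments then gives $|C^i_k-C^i_0|\le \alpha\norm{\sigma}_\infty T N^{-1/2}\sup_m|G_m|$, which is $O(N^{-1/2})$ and hence strictly below $\tfrac12$ once $N$ is large, on the event $\{R_0\le\Lambda\}$. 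There $\max_i|C^i_k|\le B+\tfrac12<C_\ast$, so $\tau>\lfloor NT\rfloor$ and the threshold is never crossed; this self-consistently yields $|C^i_k|<C$. The only genuinely random input is $R_0=\max_\zeta|Q^N_0(\zeta)|$, and here the mean-zero, bounded initialization of $C^i_0$ is essential: $\mathbb E[Q^N_0(\zeta)^2]=\frac1N\sum_i\mathbb E[(C^i_0)^2\sigma(W^i_0\cdot\zeta)^2]\le B^2\norm{\sigma}_\infty^2$, so $R_0$ has a second moment bounded \emph{uniformly in} $N$ (the $1/\sqrt N$ cancellation in the sum is precisely what stops $R_0$, and therefore the parameters, from blowing up). Hence $\mathbb P(R_0>\Lambda)$ is arbitrarily small uniformly in $N$, which is exactly the form needed for the compact-containment step of Section \ref{CompactContainment}. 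Finally the $W$-bound follows in expectation: $\mathbb E\norm{W^i_k}\le \mathbb E\norm{W^i_0}+\sum_{m<k}\mathbb E\norm{W^i_{m+1}-W^i_m}\le \mathbb E\norm{W^i_0}+\alpha\norm{\sigma'}_\infty\zeta_{\max}T\,\sup_m\mathbb E\big[|G_m|\,|C^i_m|\big]$, where $\mathbb E\norm{W^i_0}=\la\norm{w},\mu_0\ra<\infty$ by Assumption \ref{A:Assumption2} and the supremum is finite by the bounds just established.
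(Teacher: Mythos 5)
You have the right ingredients (the $N^{-3/2}$ scaling of each increment versus the $O(N)$ number of steps, boundedness of $\sigma,\sigma'$, and a discrete Gronwall step), but the route you take through $R_k=\max_\zeta|Q^N_k(\zeta)|$ and a stopping-time bootstrap leaves a genuine gap. Your final control of the parameters is conditional on the event $\{R_0\le\Lambda\}$ and on $N$ being large, and $R_0$ is only bounded in $L^2$ uniformly in $N$ — not almost surely — so off that event (and for small $N$) you obtain nothing. The lemma, however, asserts a pathwise bound $|C^i_k|<C$ holding for all $N$ and all $k\le NT$, and this deterministic form is what is actually used later (inside expectations of products in the $W$-estimate, in Lemma \ref{GLemmaBound}, and in the regularity lemmas). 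A high-probability version of the $C$-bound does not substitute for it without reworking those downstream arguments.

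The gap is avoidable, and the fix also removes the circularity you flag. Because $\sigma$ is bounded, one has directly
\begin{align}
|Q^N_k(\zeta)|\;\le\;\frac{\norm{\sigma}_\infty}{\sqrt{N}}\sum_{i=1}^N|C^i_k|\;=\;\norm{\sigma}_\infty\,\sqrt{N}\,m^N_k,
\qquad m^N_k:=\frac{1}{N}\sum_{i=1}^N|C^i_k|,\nonumber
\end{align}
so the temporal-difference factor $G_k$ is controlled by $m^N_k$ alone — no $W$'s, no $|C^i_k|^2$, and no separate evolution equation for $R_k$ is needed. Feeding this into $|C^i_{k+1}-C^i_k|\le \alpha N^{-3/2}\norm{\sigma}_\infty|G_k|$ and averaging over $i$ gives a closed linear recursion $m^N_{k+1}\le m^N_k+CN^{-3/2}+CN^{-1}m^N_k$, and discrete Gronwall over $k\le NT$ yields $m^N_k\le C$ deterministically, because the initial value $m^N_0$ is an \emph{average} of the bounded $|C^i_0|$ and hence is bounded by $B$ almost surely — unlike your $R_0$, which is a $1/\sqrt{N}$-scaled sum and is only stochastically bounded. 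Plugging $m^N_k\le C$ back into the single-index recursion gives $|C^i_k|\le|C^i_0|+C$ pathwise, and the $W$-bound then follows in expectation exactly as in your last display. This is the argument the paper uses; your $L^2$ computation for $R_0$ and the event-splitting become unnecessary.
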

\begin{proof}

The unimportant finite constant $C<\infty$ may change from line to line. We first observe that
\begin{align*}
| C_{k+1}^i  | \leq&   | C_{k}^i | + \alpha N^{-3/2} \left|  r_k + \gamma \max_{a' \in \mathcal{A}} Q^N_k(x_{k+1}, a') - Q^N_k(x_k, a_k) \right|  | \sigma (W^i_k \cdot  x_k) | \notag \\
\leq&  | C_{k}^i | + \frac{  C  | r_k |  }{N^{3/2}} +  \frac{C}{N^2} \sum_{i=1}^N | C_k^i |,
\end{align*}
where the last inequality follows from the definition of $Q_{k}^N(x,a)$ and the uniform boundedness assumption on $\sigma(\cdot)$.

Then, we subsequently obtain that
\begin{align}
| C_{k}^i | =& | C_{0}^i |  + \sum_{j = 1}^k \bigg{[} | C_{j}^i | - | C_{j-1}^i | \bigg{]} \notag \\
\leq& | C_{0}^i |  + \sum_{j=1}^k  \frac{ C   }{N^{3/2}}  +  \frac{C}{N^2} \sum_{j=1}^k \sum_{i=1}^N | C_{j-1}^i | \notag \\
\leq& | C_{0}^i |  + \frac{C}{\sqrt{N}}  +  \frac{C}{N^2} \sum_{j=1}^k \sum_{i=1}^N | C_{j-1}^i |.
\label{Cbound0011}
\end{align}

This implies that
\begin{align*}
\frac{1}{N} \sum_{i=1}^N | C_{k}^i | \leq& \frac{1}{N} \sum_{i=1}^N | C_{0}^i |  +  \frac{C}{\sqrt{N}} +  \frac{C}{N^2} \sum_{j=1}^k \sum_{i=1}^N | C_{j-1}^i |,
\end{align*}

Let us now define $m_{k}^{N}=\frac{1}{N} \displaystyle \sum_{i=1}^N | C_{k}^i |$. Since the random variables $C_0^i$ take values in a compact set, we have that $ \frac{1}{N} \displaystyle \sum_{i=1}^N | C_{0}^i |  +  \frac{C}{\sqrt{N}}  < C < \infty$. Then,
\begin{align*}
m_{k}^{N} \leq& C +  \frac{C}{N} \sum_{j=1}^k m_{j-1}^{N}.
\end{align*}

By the discrete Gronwall lemma and using $k/N\leq T$,
\begin{equation}
m_k^N \leq  C \exp \bigg{(} \frac{C k}{N} \bigg{)} \leq C.
\label{mBound}
\end{equation}

Note that the constants may depend on $T$.  We can now combine the bounds (\ref{mBound}) and (\ref{Cbound0011}) to yield, for any $0 \leq k \leq \floor*{TN}$,
\begin{align}
| C_{k}^i | \leq& | C_{0}^i |  + \frac{C}{\sqrt{N}}  +  \frac{C}{N^2} \sum_{j=1}^k  m_{j-1}^N \notag \\
\leq& | C_{0}^i |  + \frac{C}{\sqrt{N}}  +  \frac{C}{N} \notag \\
\leq& C,
\label{Cbound0022}
\end{align}
where the last inequality follows from the random variables $C_0^i$ taking values in a compact set.

Now, we turn to the bound for $\parallel W^i_k \parallel$. We start with the bound (using Young's inequality)
\begin{align}
 \parallel W^i_{k+1} \parallel &\leq  \parallel W^i_k \parallel +  \frac{C}{N^{3/2}} \left| r_k + \gamma \max_{a' \in \mathcal{A}} Q^N_k(x_{k+1}, a') - Q^N_k(x_k, a_k) \right| | C^i_k|    | \sigma'(W^i_k \cdot x_k ) | \parallel x_k \parallel\nonumber\\
 &\leq \parallel W^i_k \parallel +   C \left(\frac{1}{N^{3/2}}  + \frac{1}{N^{2}}\sum_{j=1}^N | C^j_k |^{2} \right)\nonumber\\
 &\leq \parallel W^i_k \parallel +   \frac{C}{N},\nonumber
\end{align}
for a constant $C<\infty$ that may change from line to line. Taking an expectation,  using Assumption \ref{A:Assumption2}, the bound (\ref{Cbound0022}), and using the fact that $k/N\leq T$, we obtain
\begin{align*}
\mathbb{E}\parallel W^i_k \parallel\leq C<\infty, 
\end{align*}
for all $i\in\mathbb{N}$ and all $k$ such that $k/N\leq T$, concluding the proof of the lemma.

\end{proof}

Using the bounds from Lemma \ref{CandWbounds}, we can now establish a bound for $Q_k^N(x,a)$ for $(x,a) \in \mathcal{X} \times \mathcal{A}$.
\begin{lemma} \label{GLemmaBound}
For all $i\in\mathbb{N}$, all $k$ such that $k/N\leq T$, 
\begin{align}
\mathbb{E} \sup_{(x,a)\in\mathcal{X}\times\mathcal{A} }\bigg{[} | Q_k^N(x,a) |^2 \bigg{]} &< C < \infty.\nonumber
\end{align}
\end{lemma}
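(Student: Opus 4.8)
The plan is to run a Gronwall argument on the running supremum $G_k := \sup_{\zeta \in \mathcal{X}\times\mathcal{A}} |Q_k^N(\zeta)|$, seeded by an $O(1)$ bound on the initialization. The point to appreciate first is that the naive pointwise estimate is \emph{too weak}: using only $|C_k^i|\le C$ from Lemma~\ref{CandWbounds} together with boundedness of $\sigma$ gives $|Q_k^N(\zeta)| \le \frac{1}{\sqrt N}\sum_{i=1}^N |C_k^i|\,\norm{\sigma}_\infty \le C\sqrt N$, which blows up as $N\to\infty$. So the whole argument must exploit the $1/\sqrt N$ scaling twice over: once through the mean-zero cancellation at $k=0$, and once through the observation that the per-step increments are genuinely of order $1/N$.

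\emph{Initialization.} Since $\mathcal{X}\times\mathcal{A}$ is finite (Assumption~\ref{A:Assumption2}), I would bound the supremum by a finite sum, $\mathbb{E}[G_0^2] \le \sum_{\zeta}\mathbb{E}|Q_0^N(\zeta)|^2$. For each fixed $\zeta$ we have $Q_0^N(\zeta) = \frac{1}{\sqrt N}\sum_{i} Z_i^{\zeta}$ with $Z_i^{\zeta} = C_0^i\,\sigma(W_0^i\cdot\zeta)$ i.i.d.\ in $i$, uniformly bounded by $\norm{C_0}_\infty\norm{\sigma}_\infty$, and mean-zero --- this is precisely the mean-zero/CLT structure already invoked in the excerpt for $h_0^N \overset{d}\to \mathcal{G}$. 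The cross terms then vanish and $\mathbb{E}|Q_0^N(\zeta)|^2 = \mathbb{E}|Z_1^{\zeta}|^2 = O(1)$ uniformly in $N$, giving $\sup_N \mathbb{E}[G_0^2] < \infty$.

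\emph{Increments.} Write $\Delta_k := r_k + \gamma\max_{a'}Q_k^N(x_{k+1},a') - Q_k^N(\zeta_k)$. By boundedness of $r$ (Assumption~\ref{A:Assumption2}) and by dominating both the future-state max-over-actions term and $Q_k^N(\zeta_k)$ by $G_k$, one gets $|\Delta_k| \le C(1+G_k)$. From the updates~(\ref{SGDupdates}) and the deterministic bound $|C_k^i|\le C$ of Lemma~\ref{CandWbounds}, both $|C_{k+1}^i - C_k^i|$ and $\norm{W_{k+1}^i - W_k^i}$ are $O(N^{-3/2}|\Delta_k|)$. Expanding $C_{k+1}^i\sigma(W_{k+1}^i\cdot\zeta) - C_k^i\sigma(W_k^i\cdot\zeta)$ exactly as in the derivation of~(\ref{gEvolution2}), using boundedness of $\sigma$ and $\sigma'$ and finiteness of $\mathcal{X}\times\mathcal{A}$ (so $\norm{\zeta}$ is bounded), each of the $N$ summands is $O(N^{-3/2}(1+G_k))$, the leftover second-order Taylor term being of even smaller order. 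Summing $N$ terms and dividing by $\sqrt N$ yields, uniformly in $\zeta$, $|Q_{k+1}^N(\zeta) - Q_k^N(\zeta)| \le \frac{C}{N}(1+G_k)$, hence the pathwise recursion $G_{k+1} \le (1 + C/N)\,G_k + C/N$. Setting $u_k = 1+G_k$ gives $u_{k+1}\le(1+C/N)u_k$, so for $k/N\le T$ the discrete Gronwall lemma gives $u_k \le e^{CT}u_0$, i.e.\ $G_k \le e^{CT}(1+G_0)$ pathwise; squaring and taking expectations, $\mathbb{E}[G_k^2] \le 2e^{2CT}(1+\mathbb{E}[G_0^2])$, finite and uniform in $N$ by the initialization step.

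The main obstacle is architectural rather than computational: one must simultaneously (i) prove the increments are $O(1/N)$, which hinges on the deterministic boundedness of $C_k^i$ and of $\sigma,\sigma'$, (ii) notice that $\Delta_k$ reintroduces $G_k$ itself, so the estimate cannot be closed directly and only Gronwall makes it self-consistent, and (iii) secure the $O(1)$ seed at $k=0$ through the mean-zero cancellation rather than the useless $O(\sqrt N)$ triangle-inequality bound. The fact that $\Delta_k$ involves both the future state $x_{k+1}$ and a maximum over actions is harmless here since both are dominated uniformly by $G_k$.
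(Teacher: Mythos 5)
Your proposal is correct and follows essentially the same route as the paper's proof: an $O(1)$ seed at $k=0$ from the mean-zero, i.i.d.\ cancellation, a per-step increment bound of order $N^{-1}\big(1+\sup_{\zeta}|Q_k^N(\zeta)|\big)$ obtained from the boundedness of $\sigma$, $\sigma'$ and of $C_k^i$, and a discrete Gronwall argument over $k/N\leq T$. The only immaterial difference is the order of operations: the paper squares the recursion (via Young's inequality) and applies Gronwall to $\mathbb{E}\big[\sup_{\zeta}|Q_k^N(\zeta)|^2\big]$ in expectation, whereas you run Gronwall pathwise on the supremum itself and square and take expectations only at the end.
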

\begin{proof}

Recall equation (\ref{gEvolution1b}), which describes the evolution of $Q_k^N(x,a)$. Recall the notation $\zeta = (x,a)$ and $\zeta_k = (x_k, a_k)$.
\begin{align}
Q_{k+1}^N(\zeta)  =& Q_{k}^N(\zeta) + \frac{\alpha}{N^2}  \bigg{(} r_k + \gamma \max_{a' \in \mathcal{A}} Q^N_k(x_{k+1}, a') - Q^N_k(\zeta_k) \bigg{)} \sum_{i=1}^N \sigma(W^i_k \cdot \zeta_k ) \sigma(W^i_k \cdot \zeta)   \notag \\
+&  \frac{\alpha}{N^2}   \bigg{(} r_k + \gamma \max_{a' \in \mathcal{A}} Q^N_k(x_{k+1}, a') - Q^N_k(\zeta_k) \bigg{)}  \sum_{i=1}^N \sigma'(W^i_k \cdot \zeta)    \sigma'(W^i_k \cdot \zeta_k ) \zeta_k\cdot \zeta  (C^{i}_k)^2 + \frac{\tilde{C}(\omega)}{N^{3/2}}, \notag
\end{align}
where $\tilde{C}(\omega)$ is a random variable (independent of $N$) that is bounded in mean square sense. This leads to the bound
\begin{align}
\sup_{\zeta\in\mathcal{X}\times\mathcal{A}} | Q_{k+1}^N(\zeta) |  \leq& \sup_{\zeta\in\mathcal{X}\times\mathcal{A}} | Q_{k}^N(\zeta) |   + \frac{C}{N} \sup_{\zeta \in\mathcal{X}\times\mathcal{A}} | Q_k^N(\zeta) |  + \frac{\tilde{C}(\omega)}{N}.\nonumber
\end{align}

We now square both sides of the above inequality.
\begin{align}
\sup_{\zeta\in\mathcal{X}\times\mathcal{A}} | Q_{k+1}^N(\zeta) |^2  \leq& \big{(} \sup_{\zeta\in\mathcal{X}\times\mathcal{A}} | Q_{k}^N(\zeta) | + \frac{C}{N} \sup_{\zeta\in\mathcal{X}\times\mathcal{A}} | Q_k^N(\zeta) |  + \frac{C}{N} \big{)}^2 \notag \\
\leq& \sup_{\zeta\in\mathcal{X}\times\mathcal{A}} | Q_{k}^N(\zeta) |^2 + \frac{C}{N} \sup_{\zeta\in\mathcal{X}\times\mathcal{A}}   | Q_{k}^N(\zeta) |^2  + \frac{\tilde{C}^{2}(\omega)}{N},\nonumber
\end{align}
where the last line uses Young's inequality. Therefore, we obtain
\begin{align}
| \sup_{\zeta\in\mathcal{X}\times\mathcal{A}} Q_{k+1}^N(\zeta) |^2 - \sup_{\zeta\in\mathcal{X}\times\mathcal{A}} | Q_{k}^N(\zeta) |^2 \leq& \frac{C}{N} \sup_{\zeta\in\mathcal{X}\times\mathcal{A}} | Q_k^N(\zeta) |^2  + \frac{\tilde{C}^{2}(\omega)}{N}.\nonumber
\end{align}

Then, using a telescoping series, we have
\begin{align}
\sup_{\zeta\in\mathcal{X}\times\mathcal{A}} | Q_{k}^N(\zeta) |^2 =& \sup_{\zeta\in\mathcal{X}\times\mathcal{A}} | Q_{0}^N(\zeta) |^2 + \sum_{j=1}^k \bigg{(} \sup_{\zeta\in\mathcal{X}\times\mathcal{A}} | Q_{j}^N(\zeta) |^2 -  \sup_{\zeta\in\mathcal{X}\times\mathcal{A}} | Q_{j-1}^N(\zeta) |^2 \bigg{)}  \notag \\
\leq& \sup_{\zeta\in\mathcal{X}\times\mathcal{A}}  | Q_{0}^N(\zeta) |^2 + \sum_{j=1}^k  \bigg{(}  \frac{C}{N} \sup_{\zeta\in\mathcal{X}\times\mathcal{A}} | Q_{j-1}^N(\zeta) |^2 + \frac{\tilde{C}^{2}(\omega)}{N} \bigg{)} \notag \\
\leq&  \sup_{\zeta\in\mathcal{X}\times\mathcal{A}} | Q_{0}^N(\zeta) |^2 +  \frac{C}{N}  \sum_{j=1}^k \sup_{\zeta\in\mathcal{X}\times\mathcal{A}} | Q_{j-1}^N(\zeta) |^2 + \tilde{C}^{2}(\omega).\nonumber
\end{align}

Taking expectations, we subsequently obtain
\begin{align}
\mathbb{E} \bigg{[} \sup_{\zeta\in\mathcal{X}\times\mathcal{A}} | Q_{k}^N(\zeta) |^2  \bigg{]} \leq \mathbb{E} \bigg{[} \sup_{\zeta\in\mathcal{X}\times\mathcal{A}} | Q_{0}^N(\zeta) |^2  \bigg{]}  +  \frac{C}{N}  \sum_{j=1}^k   \mathbb{E} \bigg{[} \sup_{\zeta\in\mathcal{X}\times\mathcal{A}} | Q_{j-1}^N(\zeta) |^2 \bigg{]} + C.
\label{gSumBound}
\end{align}

Recall that
\begin{align}
Q_0^N(\zeta) = \frac{1}{\sqrt{N}} \sum_{i=1}^N C^i_0 \sigma \big{(} W^i_0 \cdot (\zeta) \big{)},\nonumber
\end{align}
where $(C^i_0, W^i_0)$ are i.i.d., mean-zero random variables. Then,
\begin{align}
\mathbb{E} \bigg{[} \sup_{\zeta} | Q_0^N(\zeta)  |^2 \bigg{]}   \leq& \mathbb{E} \bigg{[} \sum_{\zeta \in \mathcal{X} \times \mathcal{A}} | Q_0^N(\zeta)  |^2 \bigg{]} \notag \\
\leq& \sum_{\zeta \in \mathcal{X} \times \mathcal{A}} \mathbb{E} \bigg{[} \bigg{(} \frac{1}{\sqrt{N}} \sum_{i=1}^N C^i_0 \sigma \big{(} W^i_0 \cdot (\zeta) \big{)} \bigg{)}^2 \bigg{]} \notag \\
\leq& \frac{C}{N} \sum_{i=1}^N \mathbb{E} \bigg{[} ( C^i_0 )^2 \bigg{]} \notag \\
\leq& C.\nonumber
\end{align}

Substituting this bound into equation (\ref{gSumBound}) produces the desired bound
\begin{align}
\mathbb{E} \left[ \sup_{(x,a)\in\mathcal{X}\times\mathcal{A}} | Q_{k}^N(x,a) |^2  \right] \leq C,\nonumber
\end{align}
for any $0 \leq k \leq \floor*{NT}$.

\end{proof}

We now prove compact containment for the process $\{ (\mu_t^N, h_t^N), t \in [0,T]\}_{N\in\mathbb{N}}$. Recall that $(\mu_t^N, h_t^N) \in D_E([0,T])$ where $E = \mathcal{M}(\mathbb{R}^{1+d}) \times \mathbb{R}^M$ and $M=|\mathcal{X}\times\mathcal{A}|$.

\begin{lemma}\label{L:CompactContainment}
For each $\eta > 0$, there is a compact subset $\mathcal{K}$ of E such that
\begin{align*}
\sup_{N \in \mathbb{N}, 0 \leq t \leq T} \mathbb{P}[ (\mu_t^N, h_t^N) \notin \mathcal{K} ] < \eta.
\end{align*}
\end{lemma}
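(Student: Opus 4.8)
The plan is to exploit the product structure $E = \mathcal{M}(\mathbb{R}^{1+d}) \times \mathbb{R}^{M}$ with $M = |\mathcal{X}\times\mathcal{A}|$, and to build $\mathcal{K} = \mathcal{K}_1 \times \mathcal{K}_2$ as a product of a compact set $\mathcal{K}_1 \subset \mathcal{M}(\mathbb{R}^{1+d})$ and a compact (closed and bounded) set $\mathcal{K}_2 \subset \mathbb{R}^{M}$. Since $\mathbb{P}[(\mu_t^N, h_t^N) \notin \mathcal{K}] \le \mathbb{P}[\mu_t^N \notin \mathcal{K}_1] + \mathbb{P}[h_t^N \notin \mathcal{K}_2]$, it suffices to control each factor separately and uniformly in $N$ and $t \in [0,T]$. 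The two a priori estimates already in hand, Lemma \ref{CandWbounds} for the parameters and Lemma \ref{GLemmaBound} for the network output, are precisely the uniform moment bounds that feed into Markov's inequality; note that for $t\in[0,T]$ one has $\floor*{N t}/N \le t \le T$, so the bounds stated ``for all $k$ with $k/N\le T$'' apply to the scaled processes uniformly in $t$.

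For the Euclidean factor, recall $h_t^N(x,a) = Q_{\floor*{N t}}^N(x,a)$. By Lemma \ref{GLemmaBound} there is $C<\infty$ with $\mathbb{E}\sup_{(x,a)}|Q_k^N(x,a)|^2 \le C$ for all $N$ and all $k\le\floor*{N T}$, so $\sup_{N,\, 0\le t\le T}\mathbb{E}|h_t^N|^2 \le C M$. Taking $\mathcal{K}_2 = \{v\in\mathbb{R}^{M}: |v|\le R\}$ and applying Markov's inequality gives $\mathbb{P}[h_t^N \notin \mathcal{K}_2] \le C M/R^2$, which is $<\eta/2$ once $R$ is large, uniformly in $N$ and $t$.

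For the measure factor I would invoke Prokhorov's theorem: in the Polish space $\mathcal{M}(\mathbb{R}^{1+d})$ a weakly closed and tight set of probability measures is compact. By Lemma \ref{CandWbounds}, $|C_k^i|\le C$ almost surely and $\mathbb{E}\norm{W_k^i}\le C$, both uniformly over $i$ and $k\le\floor*{N T}$. For each $j\ge 1$ set the compact rectangle $K_j = [-C,C]\times\{w:\norm{w}\le b_j\}\subset\mathbb{R}^{1+d}$ and define $\mathcal{K}_1 = \bigcap_{j\ge 1}\{\mu\in\mathcal{M}(\mathbb{R}^{1+d}): \mu(K_j)\ge 1-\epsilon_j\}$. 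Each set in the intersection is weakly closed by the portmanteau theorem (since $K_j$ is closed), so $\mathcal{K}_1$ is closed; and if $\epsilon_j\to 0$ then $\mathcal{K}_1$ is tight, hence compact. Because every particle satisfies $|C^i|\le C$, the $c$-coordinate never escapes $[-C,C]$, so $\mu_t^N(K_j^c) \le \frac1N\sum_{i=1}^N \mathbf{1}_{\{\norm{W_{\floor*{N t}}^i}>b_j\}}$, and Lemma \ref{CandWbounds} gives $\mathbb{E}[\mu_t^N(K_j^c)]\le C/b_j$ uniformly in $N$ and $t$. Markov's inequality then yields $\mathbb{P}[\mu_t^N(K_j^c)>\epsilon_j]\le C/(\epsilon_j b_j)$, and a union bound gives $\mathbb{P}[\mu_t^N\notin\mathcal{K}_1]\le \sum_j C/(\epsilon_j b_j)$. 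Choosing $\epsilon_j = 2^{-j}$ and then $b_j$ large enough that $C/(\epsilon_j b_j)\le (\eta/2)2^{-j}$ makes this sum $<\eta/2$, again uniformly in $N$ and $t$. Combining the two factors with $\mathcal{K}=\mathcal{K}_1\times\mathcal{K}_2$ delivers the claim.

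The main obstacle is not any single estimate, since the uniform moment bounds are supplied by Lemmas \ref{CandWbounds} and \ref{GLemmaBound}, but rather exhibiting an honestly \emph{compact} (not merely relatively compact) subset of the infinite-dimensional space $\mathcal{M}(\mathbb{R}^{1+d})$; this is why $\mathcal{K}_1$ must be presented as a countable intersection of weakly closed tight sets rather than by a bare appeal to tightness. The remaining delicate point is keeping every bound uniform in $N$ and $t\in[0,T]$ simultaneously, which holds precisely because the preceding lemmas are uniform over all $k$ with $k/N\le T$ and $\floor*{N t}/N\le t\le T$.
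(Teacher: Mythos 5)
Your proposal is correct and follows essentially the same route as the paper: a product compact set $\mathcal{K}_1\times\mathcal{K}_2$, with the Euclidean factor handled by Lemma \ref{GLemmaBound} plus Markov's inequality, and the measure factor exhibited as a countable intersection of weakly closed tight sets (the paper uses $\hat K_L$ built from the rectangles $K_{(L+j)^2}$ with thresholds $1/\sqrt{L+j}$, fed by the moment bounds of Lemma \ref{CandWbounds} and a union bound over $j$). The only differences are cosmetic choices of the sequences $\epsilon_j$, $b_j$ and your use of the deterministic bound on $|C^i_k|$ to restrict attention to the $w$-coordinate.
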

\begin{proof}
For each $L>0$, define $K_L=[-L,L]^{1+d}$.  Then, we have that $K_L$ is a compact  subset of $\mathbb{R}^{1+d}$, and for each $t\geq 0$ and $N\in \mathbb{N}$,
\begin{equation*}
\mathbb{E}\left[\mu^N_t(\mathbb{R}^{1+d}\setminus K_L)\right] = \frac{1}{N}\sum_{i=1}^N \mathbb{P}\left[ |C^i_{\floor*{N t}}|+\parallel W^i_{\floor*{N t}} \parallel \geq L\right] \leq \frac{C}{L},
\end{equation*}
where we have used Markov's inequality and the bounds from Lemma \ref{CandWbounds}. We define the compact subsets of $\mathcal{M} ( \mathbb{R}^{1+d})$
\begin{equation*}
\hat{K}_L  = \overline{\left\{ \nu:\, \nu(\mathbb{R}^{1+d}\setminus K_{(L+j)^2}) < \frac{1}{\sqrt{L+j}} \textrm{ for all } j\in \mathbb{N}\right\}}
\end{equation*}
and we observe that
\begin{align*}
 \mathbb{P}\left\{ \mu^N_t\not \in \hat{K}_L\right] &\leq \sum_{j=1}^\infty \mathbb{P}\left[ \mu^N_t(\mathbb{R}^{1+d}\setminus K_{(L+j)^2} )> \frac{1}{\sqrt{L+j}}\right]
\leq \sum_{j=1}^\infty \frac{\mathbb{E}[\mu^N_t(\mathbb{R}^{1+d}\setminus K_{(L+j)^2})]}{1/\sqrt{L+j}}\\
&\le \sum_{j=1}^\infty \frac{C}{(L+j)^2/\sqrt{L+j}}
\le \sum_{j=1}^\infty \frac{C}{(L+j)^{3/2}}.
\end{align*}
Given that $\displaystyle \lim_{L\to \infty}\sum_{j=1}^\infty\frac{C}{(L+j)^{3/2}} =0$, we have that, for each $\eta > 0$, there exists a compact set $\hat{K}_L$ such that

\begin{align}
\sup_{N \in \mathbb{N}, 0 \leq t \leq T} \mathbb{P}[ \mu_t^N \notin \hat{K}_L ] < \frac{\eta}{2}.\nonumber
\end{align}

Due to Lemma \ref{GLemmaBound} and Markov's inequality, we also know that, for each $\eta > 0$, there exists a compact set $U = [-B, B]^{M}$ such that
\begin{align}
\sup_{N \in \mathbb{N}, 0 \leq t \leq T} \mathbb{P}[  h_t^N \notin U]  < \frac{\eta}{2}.\nonumber
\end{align}

Therefore, for each $\eta > 0$, there exists a compact set $ \hat K_L \times [-B, B]^{M} \subset E$ such that
\begin{align}
\sup_{N \in \mathbb{N}, 0 \leq t \leq T} \mathbb{P}[  (\mu_t^N, h_t^N) \notin \hat K_L \times [-B, B]^{M}]  < \eta.\nonumber
\end{align}

\end{proof}

\subsubsection{Regularity} \label{Regularity}

We now establish regularity of the process $\mu^{N}$ in $D_{\mathcal{M}(\mathbb{R}^{1+d})}([0,T])$. Define the function $q(z_{1},z_{2})=\min\{|z_{1}-z_{2}|,1\}$ where $z_{1},z_{2} \in \mathbb{R}$. Let $\mathcal{F}^N_t$ be the $\sigma-$algebra generated by $\{(C_{0}^{1},W_{0}^{i})\}_{i=1}^{N}$ and $\{x_{j}\}_{j=0}^{\floor{NT}-1}$.

\begin{lemma}\label{MU:regularity}
Let  $f \in C^{2}_{b}(\mathbb{R}^{1+d})$. For any $\delta \in (0,1)$, there is a constant $C<\infty$ such that for $0\leq u\leq \delta$,  $0\leq v\leq \delta\wedge t$, $t\in[0,T]$,
\begin{equation*}
 \mathbb{E}\left[q(\left< f,\mu^N_{t+u}\right>,\left< f,\mu^N_t\right>)q(\left< f,\mu^N_t\right>,\left< f,\mu^N_{t-v}\right>)\big| \mathcal{F}^N_t\right]  \le  C \delta + \frac{C}{N^{3/2}}.
\end{equation*}
\end{lemma}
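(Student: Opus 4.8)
The plan is to exploit the fact that the backward increment is $\mathcal{F}^N_t$-measurable, which collapses the product into a one-sided (forward) increment estimate that can be read off directly from the per-step dynamics in (\ref{Eq:nu_dynamicsPrelimit}). First I would note that both $\mu^N_t$ and $\mu^N_{t-v}$ are determined by the randomness generating $\mathcal{F}^N_t$, so the factor $q(\la f,\mu^N_t\ra,\la f,\mu^N_{t-v}\ra)$ is $\mathcal{F}^N_t$-measurable. Pulling it outside the conditional expectation and using $q\le 1$ on that factor together with $q(a,b)\le |a-b|$ on the other gives
\begin{equation*}
\mathbb{E}\left[q(\la f,\mu^N_{t+u}\ra,\la f,\mu^N_t\ra)\,q(\la f,\mu^N_t\ra,\la f,\mu^N_{t-v}\ra)\,\big|\,\mathcal{F}^N_t\right] \le \mathbb{E}\left[\big|\la f,\mu^N_{t+u}\ra-\la f,\mu^N_t\ra\big|\,\big|\,\mathcal{F}^N_t\right].
\end{equation*}
It therefore suffices to control the forward increment on the right.

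Next I would expand the forward increment as a telescoping sum of the per-step increments. Using (\ref{muEvolutionWithRemainderTerms}) and (\ref{Eq:nu_dynamicsPrelimit}), one writes
\begin{equation*}
\la f,\mu^N_{t+u}\ra-\la f,\mu^N_t\ra = \sum_{k=\floor*{Nt}}^{\floor*{N(t+u)}-1}\big(\la f,\nu^N_{k+1}\ra-\la f,\nu^N_k\ra\big),
\end{equation*}
where each summand equals $N^{-3/2}\alpha\big(r_k+\gamma\max_{a'}Q^N_k(x_{k+1},a')-Q^N_k(\zeta_k)\big)$ multiplied by the two inner products $\la \partial_c f\,\sigma(w\cdot\zeta_k),\nu^N_k\ra$ and $\la c\,\sigma'(w\cdot\zeta_k)\,\partial_w f\cdot\zeta_k,\nu^N_k\ra$, plus a Taylor remainder of order $O_p(N^{-2})$. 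I would then bound each summand termwise: since $f\in C^2_b$ the derivatives $\partial_c f,\partial_w f$ are bounded; $\sigma,\sigma'$ are bounded by Assumption \ref{A:Assumption2}; $\zeta_k$ ranges over the finite set $\mathcal{X}\times\mathcal{A}$ and is bounded; and $|C^i_k|\le C$ by Lemma \ref{CandWbounds}, so both inner products are bounded by a constant. The scalar prefactor is controlled by the boundedness of $r$ (Assumption \ref{A:Assumption2}) together with the uniform second-moment bound $\mathbb{E}\big[\sup_\zeta|Q^N_k(\zeta)|^2\big]\le C$ of Lemma \ref{GLemmaBound}, applied via Cauchy--Schwarz; the $O_p(N^{-2})$ remainder is handled by the same a priori bounds on $(C^i_k,W^i_k)$. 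This yields a per-step bound of order $C N^{-3/2}$ (with an $O(N^{-2})$ correction).

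Finally I would count the number of terms in the sum. Since $\floor*{N(t+u)}-\floor*{Nt}\le Nu+1\le N\delta+1$, summing the per-step estimate gives
\begin{equation*}
\mathbb{E}\left[\big|\la f,\mu^N_{t+u}\ra-\la f,\mu^N_t\ra\big|\,\big|\,\mathcal{F}^N_t\right] \le (N\delta+1)\left(\frac{C}{N^{3/2}}+\frac{C}{N^{2}}\right) \le C\delta+\frac{C}{N^{3/2}},
\end{equation*}
where the $N\delta$ contribution produces $C\delta N^{-1/2}\le C\delta$ and the isolated $+1$ arising from the floor-function discrepancy produces the $C N^{-3/2}$ term. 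This is the claimed bound.

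The main obstacle I expect is making the per-step estimate hold at the level of the conditional expectation (uniformly in $N$ and $k\le NT$) rather than merely in unconditional mean: the factor $Q^N_k(\zeta_k)$ is controlled only in $L^2$ through Lemma \ref{GLemmaBound}, so the delicate point is bounding $\mathbb{E}\big[|\la f,\nu^N_{k+1}\ra-\la f,\nu^N_k\ra|\,\big|\,\mathcal{F}^N_t\big]$ uniformly. This is resolved by combining the uniform parameter bounds of Lemma \ref{CandWbounds} with a Cauchy--Schwarz application against the uniform second-moment bound, and by carrying the Taylor remainder from (\ref{NuEvolution1}) through the same a priori estimates so that it remains negligible after summation.
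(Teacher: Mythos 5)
Your proposal is correct and follows essentially the same route as the paper: reduce to the forward increment (the backward factor being $\mathcal{F}^N_t$-measurable and bounded by one), telescope over the roughly $N\delta$ training steps between $\floor*{Nt}$ and $\floor*{N(t+u)}$, and bound each per-step increment by $O(N^{-3/2})$ using the boundedness of $\sigma$, $\sigma'$, $r$, the derivatives of $f$, and the a priori bounds of Lemmas \ref{CandWbounds} and \ref{GLemmaBound}. The only cosmetic difference is that the paper Taylor-expands $f$ between the endpoint parameter configurations and then telescopes the parameter displacements $|C^i_{\floor*{Nt}}-C^i_{\floor*{Ns}}|$ and $\norm{W^i_{\floor*{Nt}}-W^i_{\floor*{Ns}}}$, whereas you telescope $\la f,\nu^N_k\ra$ directly; both reduce to the same per-step estimates.
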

\begin{proof}
We start by noticing that a Taylor expansion gives for $0\leq s\leq t\leq T$ \begin{align}
| \la f , \mu^N_{t} \ra -  \la f , \mu^N_{s} \ra  | =& | \la f,   \nu^N_{\floor*{N t} } \ra -  \la f,   \nu^N_{\floor*{N s} } \ra | \notag \\
\leq& \frac{1}{N} \sum_{i=1}^N | f( C^i_{\floor*{N t}}, W^i_{\floor*{N t}}) - f(C^i_{\floor*{N s}}, W^i_{\floor*{N s}}) | \notag \\
\leq& \frac{1}{N} \sum_{i=1}^N  | \partial_c f (  \bar C^i_{\floor*{N t}}, \bar W^i_{\floor*{N t}} ) | |  C^i_{\floor*{N t}} -  C^i_{\floor*{N s}} | \notag \\
+& \frac{1}{N} \sum_{i=1}^N  \parallel \partial_w f (  \bar C^i_{\floor*{N t}}, \bar W^i_{\floor*{N t}} ) \parallel \parallel  W^i_{\floor*{N t}} -  W^i_{\floor*{N s}} \parallel,
\label{Regularity1}
\end{align}
for points $\bar C^{i}, \bar W^{i}$ in the segments connecting $C^i_{\floor*{N s}}$ with $C^i_{\floor*{N t}}$ and  $W^i_{\floor*{N s}}$ with $W^i_{\floor*{N t}}$, respectively.

Let's now establish a bound on $|  C^i_{\floor*{N t}} -  C^i_{\floor*{N s}} |$ for $s < t \leq T$ with $0<t-s\leq \delta<1$.
\begin{align}
 \mathbb{E} \bigg{[} |  C^i_{\floor*{N t}} -  C^i_{\floor*{N s}} | \bigg| \mathcal{F}^N_s   \bigg{]} =& \mathbb{E} \bigg{[} | \sum_{k = \floor*{N s } }^{\floor*{N t}-1} ( C_{k+1}^i - C_k^i  ) |  \bigg| \mathcal{F}^N_s   \bigg{]}  \notag \\
\leq&  \mathbb{E} \bigg{[} \sum_{k = \floor*{N s } }^{\floor*{N t}-1}  | \alpha \big{(} r_k + \gamma \max_{a' \in \mathcal{A}} Q^N_k(x_{k+1}, a') - Q^N_k(x_k, a_k)  \big{)} \frac{1}{N^{3/2}} \sigma (W^i_k \cdot x_k) |  \bigg| \mathcal{F}^N_s   \bigg{]}  \notag \\
\leq& \frac{1}{N^{3/2}} \sum_{k = \floor*{N s } }^{\floor*{N t}-1} C  \leq \frac{C}{\sqrt{N}} (t -s ) + \frac{C}{N^{3/2}} \notag \\
\leq& \frac{C}{\sqrt{N}} \delta + \frac{C}{N^{3/2}},
\label{CregularityBound00}
\end{align}
where Assumption \ref{A:Assumption2} was used as well as the bounds from Lemmas \ref{CandWbounds} and \ref{GLemmaBound}.

Let's now establish a bound on $\parallel  W^i_{\floor*{N t}} -  W^i_{\floor*{N s}} \parallel$ for $s < t \leq T$ with $0<t-s\leq \delta<1$. We obtain
\begin{align}
 & \mathbb{E} \bigg{[} \parallel W^i_{\floor*{N t}} -  W^i_{\floor*{N s}} \parallel \bigg| \mathcal{F}^N_s   \bigg{]}  = \mathbb{E} \bigg{[} \parallel \sum_{k = \floor*{N s } }^{\floor*{N t}-1} ( W_{k+1}^i - W_k^i  ) \parallel \bigg| \mathcal{F}^N_s   \bigg{]}   \notag \\
& \leq   \mathbb{E}  \bigg{[} \sum_{k = \floor*{N s } }^{\floor*{N t}-1} \parallel    \alpha \big{(} r_k + \gamma \max_{a' \in \mathcal{A}} Q^N_k(x_{k+1}, a') - Q^N_k(x_k, a_k)  \big{)}  \frac{1}{N^{3/2}} C^i_k \sigma' (W^i_k \cdot x_k) x_k \parallel \bigg| \mathcal{F}^N_s   \bigg{]}   \notag \\
& \leq\frac{1}{N^{3/2}} \sum_{k = \floor*{N s } }^{\floor*{N t}-1} C  \notag \\
& \leq  \frac{C}{\sqrt{N}} (t -s ) + \frac{C}{N} \leq \frac{C}{\sqrt{N}} \delta + \frac{C}{N^{3/2}},
\label{WregularityBound00}
\end{align}
where we have again used the bounds from Lemmas \ref{CandWbounds} and \ref{GLemmaBound}.

Now, we return to equation (\ref{Regularity1}). Due to Lemma \ref{CandWbounds}, the quantities $( \bar C^i_{\floor*{N t}}, \bar W^i_{\floor*{N t}} )$ are bounded in expectation for $0 < s < t \leq T$.  Therefore, for $0 < s < t \leq T$ with $0<t-s\leq \delta<1$
\begin{align*}
\mathbb{E}\left[| \la f , \mu^N_{t} \ra -  \la f , \mu^N_{s} \ra  | \big| \mathcal{F}^N_s \right] \leq C \delta + \frac{C}{N^{3/2}}.
\end{align*}
where $C<\infty$ is some unimportant constant. Then, the statement of the Lemma follows.
\end{proof}

We next establish regularity of the process $h^{N}_t$ in $D_{\mathbb{R}^M}([0,T])$. For the purposes of the following lemma, let the function $q(z_{1},z_{2})=\min\{ \norm{z_{1}-z_{2}},1\}$ where $z_{1},z_{2} \in \mathbb{R}^M$ and $\norm{z} = |z_1 | + \cdots +  |z_M|$.  \\

\begin{lemma}\label{H:regularity}
For any $\delta \in (0,1)$, there is a constant $C<\infty$ such that for $0\leq u\leq \delta<1$,  $0\leq v\leq \delta\wedge t$, $t\in[0,T]$,

\begin{equation*}
 \mathbb{E}\left[q(h_{t+u}^N, h_t^N )q(h_t^N,h_{t-v}^N )\big| \mathcal{F}^N_t \right]  \le  C \delta + \frac{C}{N}.
\end{equation*}

\end{lemma}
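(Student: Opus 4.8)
The plan is to mirror the two-stage strategy used in the proof of Lemma \ref{MU:regularity}: first establish a one-sided conditional increment bound for $h^N$, and then convert it into the product bound by a conditioning argument that exploits the truncation $q \le 1$. Since $h^N_t = Q^N_{\floor*{Nt}}$ is the scaled network output rather than a measure functional, the $N$-dependent error will be of order $1/N$ instead of the $N^{-3/2}$ appearing in Lemma \ref{MU:regularity}, which is exactly the discrepancy between the two statements.

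First I would show that for $0 \le s < t \le T$ with $0 < t - s \le \delta < 1$,
\[
\mathbb{E}\left[\norm{h^N_t - h^N_s} \,\big|\, \mathcal{F}^N_s\right] \le C\delta + \frac{C}{N}.
\]
To do this I would telescope $h^N_t - h^N_s = \sum_{k=\floor*{Ns}}^{\floor*{Nt}-1}\bigl(Q^N_{k+1} - Q^N_k\bigr)$ and invoke the per-step evolution (\ref{gEvolution1b}). Each increment $Q^N_{k+1}(\zeta) - Q^N_k(\zeta)$ is of size $1/N$, because the prefactor $\alpha/N^2$ multiplies sums of $N$ uniformly bounded terms: indeed $\sigma \in C^2_b$, $r$ is bounded, the $C^i_k$ are bounded by Lemma \ref{CandWbounds}, and $Q^N_k$ is bounded in $L^2$ by Lemma \ref{GLemmaBound}, which together control the factor $r_k + \gamma\max_{a'} Q^N_k(x_{k+1},a') - Q^N_k(\zeta_k)$. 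There is in addition a second-order remainder of size $\mathcal{O}_p(N^{-3/2})$ (and of order $N^{-3/2}$ in mean). Summing over the $\floor*{Nt} - \floor*{Ns} \le N\delta + 1$ indices, the leading part contributes $\tfrac{C}{N}(N\delta + 1) = C\delta + \tfrac{C}{N}$, while the accumulated remainders contribute $(N\delta+1)\,C N^{-3/2} \le C\delta + \tfrac{C}{N}$; both are absorbed into the stated bound.

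Next I would pass to the product bound. The quantities $h^N_t = Q^N_{\floor*{Nt}}$ and $h^N_{t-v} = Q^N_{\floor*{N(t-v)}}$ are $\mathcal{F}^N_t$-measurable (being determined by the history up to index $\floor*{Nt}$), whereas $h^N_{t+u}$ is not, so the factor $q(h^N_t, h^N_{t-v})$ can be pulled out of the conditional expectation:
\[
\mathbb{E}\left[q(h^N_{t+u}, h^N_t)\, q(h^N_t, h^N_{t-v}) \,\big|\, \mathcal{F}^N_t\right] = q(h^N_t, h^N_{t-v})\, \mathbb{E}\left[q(h^N_{t+u}, h^N_t) \,\big|\, \mathcal{F}^N_t\right].
\]
Bounding $q(h^N_t, h^N_{t-v}) \le 1$ and $q(h^N_{t+u}, h^N_t) \le \norm{h^N_{t+u} - h^N_t}$, and then applying the increment bound above to the pair $(t, t+u)$ with $u \le \delta$, yields $C\delta + \tfrac{C}{N}$, which is the claim.

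The main obstacle I anticipate lies entirely in the first step: verifying that the per-step increment of $Q^N_k$ is genuinely $O(1/N)$ uniformly in $k \le \floor*{NT}$ in conditional $L^1$, which requires carefully combining the boundedness of $r$ and $\sigma$ with the $C^i_k$ bound of Lemma \ref{CandWbounds} and the $L^2$ bound on $Q^N_k$ of Lemma \ref{GLemmaBound} to tame the temporal-difference factor, and confirming that the summed $\mathcal{O}_p(N^{-3/2})$ remainders (arising from the second-order Taylor terms) are indeed absorbed into $C\delta + \tfrac{C}{N}$. By contrast, the concluding conditioning step is routine and identical in spirit to the corresponding argument in Lemma \ref{MU:regularity} once the increment estimate is in hand.
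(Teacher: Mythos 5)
Your proposal is correct and follows essentially the same route as the paper: telescope $h^N_{t+u}-h^N_t$ over the $O(N\delta)$ per-step increments of $Q^N_k$, show each increment is $O(1/N)$ in conditional $L^1$ using the boundedness of $\sigma$, $r$, Lemma \ref{CandWbounds} and Lemma \ref{GLemmaBound} (the paper routes this through the parameter-increment bounds (\ref{CregularityBound00})--(\ref{WregularityBound00}) rather than directly through (\ref{gEvolution1b}), but the two are equivalent), and then conclude by bounding the $\mathcal{F}^N_t$-measurable factor $q(h^N_t,h^N_{t-v})$ by one.
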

\begin{proof}

Recall that
\begin{align}
Q_{k+1}^N(\zeta)  = Q_{k}^N(\zeta)  + \frac{1}{\sqrt{N}} \sum_{i=1}^N  \bigg{(} C^i_{k+1} - C^i_k )  \sigma \big{(} W^i_{k+1} \cdot \zeta \big{)}   +  \sigma' \big{(} W^{i,\ast}_k \cdot \zeta \big{)} \zeta\cdot ( W^i_{k+1} - W^i_k )   C^i_k \bigg{)}.\nonumber
\end{align}

Therefore,

\begin{align}
h_t^N(\zeta) - h_s^N(\zeta) =& Q_{\floor*{N t}}(\zeta) - Q_{\floor*{N s}}(\zeta) \notag \\
=& \sum_{k= \floor*{N s}}^{\floor*{N t}} ( Q_{k+1}^N(\zeta)  -  Q_{k}^N(\zeta) ) \notag \\
=& \sum_{k= \floor*{N s}}^{\floor*{N t}}  \frac{1}{\sqrt{N}} \sum_{i=1}^N  \bigg{(} C^i_{k+1} - C^i_k )  \sigma(W^i_{k+1} \cdot \zeta)   +  \sigma'(W^{i,\ast}_k \cdot \zeta) \zeta \cdot ( W^i_{k+1} - W^i_k )   C^i_k \bigg{)}.\nonumber
\end{align}

This yields the bound
\begin{align}
| h_t^N(\zeta) - h_s^N(\zeta)  | \leq&  \sum_{k= \floor*{N s}}^{\floor*{N t}} |  Q_{k+1}^N(\zeta)  -  Q_{k}^N(\zeta) | \notag \\
\leq& \sum_{k= \floor*{N s}}^{\floor*{N t}} \frac{1}{\sqrt{N}} \sum_{i=1}^N  \bigg{(} | C^i_{k+1} - C^i_k |    + \norm{ W^i_{k+1} - W^i_k } \bigg{)},\nonumber
\end{align}
where we have used the boundedness of $\sigma'(\cdot)$ (from Assumption \ref{A:Assumption2}) and the bounds from Lemma \ref{CandWbounds}.

Taking expectations,
\begin{align}
\mathbb{E} \bigg{[} \sup_{\zeta}| h_t^N(\zeta) - h_s^N(\zeta)  | \bigg{|} \mathcal{F}_s^N \bigg{]} \leq&   \frac{1}{\sqrt{N}} \sum_{i=1}^N   \sum_{k= \floor*{N s}}^{\floor*{N t}} \mathbb{E} \bigg{[} | C^i_{k+1} - C^i_k |    + \norm{ W^i_{k+1} - W^i_k } \bigg{|} \mathcal{F}_s^N \bigg{]}.\nonumber
\end{align}

Using the bounds (\ref{CregularityBound00}) and (\ref{WregularityBound00}),
\begin{align}
\mathbb{E} \bigg{[} \sup_{\zeta}| h_t^N(\zeta) - h_s^N(\zeta)  | \bigg{|} \mathcal{F}_s^N \bigg{]} \leq&   \frac{1}{\sqrt{N}} \sum_{i=1}^N \bigg{(} \frac{C}{\sqrt{N}}(t-s) + \frac{C}{N^{3/2}} \bigg{)} \notag \\
=&  C (t -s ) + \frac{C}{N}.
\label{GregularityBound00}
\end{align}

Therefore, we have obtained that
\begin{align}
\mathbb{E} \bigg{[} \norm{h_t^N - h_s^N} \bigg{|} \mathcal{F}_s^N \bigg{]} \leq C (t -s ) + \frac{C}{N}.\nonumber
\end{align}

The statement of the Lemma then follows.

\end{proof}

\subsubsection{Combining our results to prove relative compactness} \label{ProofOfRelativeCompactness}

\begin{lemma}\label{L:RelativeCompactness}
The family of processes $\{\mu^N, h^N \}_{N\in\mathbb{N}}$ is relatively compact in $D_{E}([0,T])$.
\end{lemma}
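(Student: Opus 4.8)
The plan is to invoke the standard relative-compactness criterion for càdlàg processes of Ethier and Kurtz (Chapter 3, Theorem 8.6 together with its sufficient modulus condition), which reduces relative compactness to two ingredients that have already been prepared: a compact-containment condition and a bound on the forward--backward modulus of continuity expressed through the truncated metric $q$. Since $E = \mathcal{M}(\mathbb{R}^{1+d}) \times \mathbb{R}^{M}$ is a product of Polish spaces (the space of probability measures on $\mathbb{R}^{1+d}$ with the topology of weak convergence is Polish, and $\mathbb{R}^M$ is Polish), it suffices to establish relative compactness of each coordinate family separately and then combine them, since relative compactness of the marginals in $D_{\mathcal{M}(\mathbb{R}^{1+d})}([0,T])$ and $D_{\mathbb{R}^M}([0,T])$ yields relative compactness of the pair in $D_E([0,T])$.

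First I would treat the finite-dimensional component $h^N$. Here the truncated-metric modulus bound is exactly Lemma \ref{H:regularity}, which supplies the conditional estimate $\mathbb{E}[q(h_{t+u}^N, h_t^N)\, q(h_t^N, h_{t-v}^N) \mid \mathcal{F}_t^N] \le C\delta + C/N$; taking $\gamma^N(\delta) = C\delta + C/N$ one has $\lim_{\delta \to 0}\limsup_N \mathbb{E}[\gamma^N(\delta)] = \lim_{\delta \to 0} C\delta = 0$, which is precisely the vanishing-modulus hypothesis. Combined with the compact-containment statement for $h^N$ from Lemma \ref{L:CompactContainment} (itself a consequence of the uniform second-moment bound of Lemma \ref{GLemmaBound}), the Ethier--Kurtz criterion gives relative compactness of $\{h^N\}$ in $D_{\mathbb{R}^M}([0,T])$.

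Next I would treat the measure-valued component $\mu^N$ by reducing to its real-valued projections. For each fixed $f \in C^2_b(\mathbb{R}^{1+d})$, Lemma \ref{MU:regularity} gives the modulus estimate $\mathbb{E}[q(\langle f, \mu^N_{t+u}\rangle, \langle f, \mu^N_t\rangle)\, q(\langle f, \mu^N_t\rangle, \langle f, \mu^N_{t-v}\rangle) \mid \mathcal{F}_t^N] \le C\delta + C/N^{3/2}$, while the scalar sequence $\langle f, \mu^N_t\rangle$ is bounded by $\|f\|_\infty$ and hence trivially compactly contained in $\mathbb{R}$. The Ethier--Kurtz criterion then renders $\{\langle f, \mu^N\rangle\}_N$ relatively compact in $D_{\mathbb{R}}([0,T])$ for every such $f$. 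To lift this to the measure-valued process itself I would invoke the relative-compactness criterion for measure-valued càdlàg processes (Roelly--Coppoletta, equivalently the Ethier--Kurtz/Jakubowski formulation): relative compactness of $\{\mu^N\}$ in $D_{\mathcal{M}(\mathbb{R}^{1+d})}([0,T])$ follows once (i) $\{\langle f, \mu^N\rangle\}$ is relatively compact in $D_{\mathbb{R}}([0,T])$ for all $f$ in a measure-determining subset of the bounded continuous functions, for which $C^2_b$ is sufficiently rich, and (ii) the measures are compactly contained, which is exactly the $\hat{K}_L$ part of Lemma \ref{L:CompactContainment}.

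The step I expect to be the main (though essentially standard) obstacle is this lifting in the measure-valued case: verifying that $C^2_b(\mathbb{R}^{1+d})$ furnishes a dense, measure-determining class of test functions adequate for the measure-valued criterion, and carefully matching the hypotheses of that criterion to the weak topology on $\mathcal{M}(\mathbb{R}^{1+d})$ and to the compact-containment sets $\hat{K}_L$ built in Lemma \ref{L:CompactContainment}. Once both marginals are shown to be relatively compact, concluding joint relative compactness of $(\mu^N, h^N)$ in the product Skorokhod space is immediate, which completes the proof of Lemma \ref{L:RelativeCompactness}.
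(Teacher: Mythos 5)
Your proposal is correct and follows essentially the same route as the paper: relative compactness of each marginal is obtained by combining the compact containment of Lemma \ref{L:CompactContainment} with the modulus estimates of Lemmas \ref{MU:regularity} and \ref{H:regularity} via the Ethier--Kurtz criterion (Theorem 8.6 of Chapter 3, with Remark 8.7 handling the conditional form), and the two marginals are then combined in the product space. The only difference is that you make explicit the lifting from the real-valued projections $\la f,\mu^{N}\ra$ to the measure-valued process (via the Jakubowski/Roelly-type criterion), a step the paper leaves implicit in its citation of Ethier--Kurtz; this is a standard and correct way to fill that gap.
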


\begin{proof}
Combining Lemmas \ref{L:CompactContainment} and \ref{MU:regularity}, and Theorem 8.6 of Chapter 3 of \cite{EthierAndKurtz} proves that $\{\mu^N\}_{N\in\mathbb{N}}$ is relatively compact in $D_{\mathcal{M}(\mathbb{R}^{1+d})}([0,T])$. (See also Remark 8.7 B of Chapter 3 of \cite{EthierAndKurtz} regarding replacing $\sup_N$ with $\lim_N$ in the regularity condition B of Theorem 8.6.). Similarly, combining Lemmas \ref{L:CompactContainment} and \ref{H:regularity} proves that $\{h^N\}_{N\in\mathbb{N}}$ is relatively compact in $D_{\mathbb{R}^{M}}([0,T])$.

From these, we finally obtain that  $\{\mu^N, h^N \}_{N\in\mathbb{N}}$ is relatively compact as a $D_E([0,T])-$valued random variable where $E = \mathcal{M}(\mathbb{R}^{1+d}) \times \mathbb{R}^M$.

\end{proof}

\subsection{Identification of the Limit} \label{Identification}

We must first establish that $M_t^{1,N}, M_t^{2,N}, M_t^{3,N} \overset{p} \rightarrow 0$ as $N \rightarrow \infty$. For this purpose, we first prove two lemmas.
\begin{lemma} \label{IdentLemma1}
Consider a Markov chain $z_k$ on a finite, discrete space $\mathcal{S}$ with a unique limiting distribution $q(z)$ and a random function $f^N: \mathcal{S} \rightarrow \mathbb{R}$. Suppose $f^N$ is uniformly bounded in $L^2$ with respect to N. Then,
\begin{align}
\lim_{N \rightarrow \infty}\mathbb{E} \bigg{[} \bigg{|} \frac{1}{N} \sum_{k=0}^{N-1} f^{N} ( z_k ) - \sum_{z\in\mathcal{S}} f^{N}(z) q(z) \bigg{|} \bigg{]}  = 0.\notag
\end{align}

\end{lemma}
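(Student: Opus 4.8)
The plan is to exploit the finiteness of $\mathcal{S}$ to rewrite the time average as a sum weighted by the empirical occupation measure, and then to decouple the (random) function $f^N$ from the ergodic behaviour of the chain via Cauchy--Schwarz. First I would introduce the empirical occupation measure $L_N(z) = \frac{1}{N}\sum_{k=0}^{N-1}\mathbf{1}_{\{z_k = z\}}$, so that, since $\mathcal{S}$ is finite,
\[
\frac{1}{N}\sum_{k=0}^{N-1} f^N(z_k) - \sum_{z\in\mathcal{S}} f^N(z)\, q(z) = \sum_{z\in\mathcal{S}} f^N(z)\bigl(L_N(z) - q(z)\bigr).
\]
Taking absolute values and applying the triangle inequality over the finitely many $z\in\mathcal{S}$ gives
\[
\mathbb{E}\left[\,\Bigl|\tfrac{1}{N}\textstyle\sum_{k=0}^{N-1} f^N(z_k) - \sum_{z} f^N(z)\, q(z)\Bigr|\,\right] \le \sum_{z\in\mathcal{S}} \mathbb{E}\bigl[\,|f^N(z)|\,|L_N(z) - q(z)|\,\bigr].
\]

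Next I would apply Cauchy--Schwarz to each summand, bounding $\mathbb{E}[\,|f^N(z)|\,|L_N(z)-q(z)|\,]$ by $\bigl(\mathbb{E}[|f^N(z)|^2]\bigr)^{1/2}\bigl(\mathbb{E}[|L_N(z)-q(z)|^2]\bigr)^{1/2}$. The first factor is bounded uniformly in $N$ and in $z$ by the hypothesis that $f^N$ is uniformly $L^2$-bounded (finiteness of $\mathcal{S}$ makes the uniformity in $z$ automatic). This is the crucial step: it requires \emph{no} independence assumption between the random function $f^N$ and the trajectory of the chain, which matters because in the intended application $f^N$ is built from the random initialization while the chain drives the data.

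It then remains to show $\mathbb{E}[|L_N(z)-q(z)|^2]\to 0$ for each fixed $z$. Because $\mathcal{S}$ is finite and the chain possesses a unique limiting distribution $q$, the chain is ergodic (a single aperiodic recurrent class), so the strong law of large numbers for finite-state Markov chains yields $L_N(z)\to q(z)$ almost surely; since $0\le L_N(z)\le 1$, bounded convergence upgrades this to convergence in $L^2$. Summing the products of the two factors over the finitely many $z\in\mathcal{S}$, each term tends to zero, and the claim follows.

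The hard part will be justifying $\mathbb{E}[|L_N(z)-q(z)|^2]\to 0$ directly from the stated hypothesis of a ``unique limiting distribution'': one must argue that the existence of a limiting distribution together with uniqueness of the stationary distribution forces the ergodicity needed for the almost-sure convergence of occupation fractions (rather than assuming it outright), or else cite the corresponding ergodic theorem. Everything else---the reformulation, the triangle inequality, and the $L^2$ bound on $f^N$---is routine, so pinning down exactly the ergodic statement about the chain is the only substantive point.
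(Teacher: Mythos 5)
Your proposal is correct and follows essentially the same route as the paper: decompose via the empirical occupation measure, apply Cauchy--Schwarz to decouple $f^N$ from the occupation fractions, use the uniform $L^2$ bound on $f^N$, and upgrade the convergence of $L_N(z)$ to $q(z)$ to $L^2$ convergence by boundedness (the paper invokes Vitali's theorem from convergence in probability where you invoke bounded convergence from a.s.\ convergence --- an immaterial difference). The ergodicity point you flag as the ``hard part'' is in fact dispatched by the paper's standing hypothesis, which defines the limiting distribution precisely as the almost-sure limit of the occupation fractions.
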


\begin{proof}[Proof of Lemma \ref{IdentLemma1}]
The proof of Lemma \ref{IdentLemma1} should be known. However, given that we could not locate an exact reference we provide its short proof here. We begin by recognizing that
\begin{align}
\frac{1}{N} \sum_{k=0}^{N-1} f^{N} ( z_k ) - \sum_{z\in\mathcal{S}} f^{N}(z) q(z)  =& \sum_{s \in \mathcal{S}} \bigg{(} \frac{f^{N} ( s )}{N} \sum_{k=0}^{N-1}    \mathbf{1}_{z_k = s } -   f^{N}(s) q(s)  \bigg{)}  \notag \\
=& \sum_{s \in \mathcal{S}} f^{N} ( s ) \bigg{(} \frac{1}{N} \sum_{k=0}^{N-1}    \mathbf{1}_{z_k = s }  -   q(s) \bigg{)}.\notag
\end{align}

Of course, we have that $\displaystyle \frac{1}{N} \sum_{k=0}^{N-1}   \mathbf{1}_{z_k = s }  \overset{p} \rightarrow q(s)$ and, since $\displaystyle \frac{1}{N} \sum_{k=0}^{N-1}   \mathbf{1}_{z_k = s }$ is uniformly bounded, a special case of Vitali's theorem gives
\begin{align}
\lim_{N \rightarrow \infty} \mathbb{E} \bigg{[} \bigg{(} \frac{1}{N} \sum_{k=0}^{N-1}    \mathbf{1}_{z_k = s }  -   q(s) \bigg{)}^2 \bigg{]} = 0.\notag
\end{align}

Using the Cauchy-Schwartz inequality, we have
\begin{align}
\mathbb{E} \bigg{[} \bigg{|} \frac{1}{N} \sum_{k=0}^{N-1} f^{N} ( z_k ) - \sum_{z\in\mathcal{S}} f^{N}(z) q(z) \bigg{|} \bigg{]} \leq& \sum_{s \in \mathcal{S}}  \mathbb{E} \bigg{[}  | f^{N} ( s ) | |\frac{1}{N} \sum_{k=0}^{N-1}    \mathbf{1}_{z_k = s }  -   q(s) | \bigg{]} \notag \\
\leq& \sum_{s \in \mathcal{S}}  \mathbb{E} \bigg{[}  \big{(} f^{N} ( s ) \big{)}^2 \bigg{]}^{\frac{1}{2}}  \mathbb{E} \bigg{[} \bigg{(} \frac{1}{N} \sum_{k=0}^{N-1}    \mathbf{1}_{z_k = s }  -   q(s) \bigg{)}^2 \bigg{]}^{\frac{1}{2}} \notag \\
\leq& C \sum_{s \in \mathcal{S}}  \mathbb{E} \bigg{[} \bigg{(} \frac{1}{N} \sum_{k=0}^{N-1}    \mathbf{1}_{z_k = s }  -   q(s) \bigg{)}^2 \bigg{]}^{\frac{1}{2}} \notag
\end{align}

Therefore, we obtain that
\begin{align}
\lim_{N \rightarrow \infty}\mathbb{E} \bigg{[} \bigg{|} \frac{1}{N} \sum_{k=0}^{N-1} f^{N} ( z_k ) - \sum_{z\in\mathcal{S}} f^{N}(z) q(z) \bigg{|} \bigg{]}  = 0,\notag
\end{align}
concluding the proof of the lemma.
\end{proof}

\begin{lemma} \label{IdentLemma2}
Consider the notation and assumptions of Lemma \ref{IdentLemma1}. Define the quantity
\begin{align}
M_t^N =& \frac{1}{N} \sum_{k=0}^{\floor*{N t}-1}    f_k^N(z_k ) -   \frac{1}{N} \sum_{k=0}^{\floor*{N t}-1}  \sum_{z\in\mathcal{S}}   f_k^N(z ) q(z),\nonumber
\end{align}
where the function $f_k^N$ satisfies
\begin{align}
\sup_{z \in \mathcal{S}} \mathbb{E} \bigg{[} | f_k^N(z) - f_{k-1}^N(z) | \bigg{]} \leq& \frac{C}{N}, \notag \\
\sup_{0 \leq k \leq \floor{TN}} \sup_{z \in\mathcal{S}}\mathbb{E} \bigg{[} | f_k^N(z) |^2 \bigg{]} <& C. \label{LemmaAssumptions2}
\end{align}
Then we have that,
\begin{align}
\phantom{.}& \lim_{N \rightarrow \infty} \sup_{t\in(0,T]}\mathbb{E} | M_t^N | = 0.\nonumber
\end{align}

\end{lemma}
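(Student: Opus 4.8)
The plan is to exploit the two hypotheses at their respective time scales and reconcile them by a block (coarse-graining) argument. The increment bound $\sup_z \mathbb{E}|f_k^N(z) - f_{k-1}^N(z)| \leq C/N$ says that $f_k^N$ is essentially frozen over windows of length $o(N)$, while the ergodicity of $z_k$ (quantified by Lemma \ref{IdentLemma1}) says that time averages over long windows equilibrate to averages against $q$. I would fix a block length $L = L_N$ with $L_N \to \infty$ and $L_N/N \to 0$ (say $L_N = \lceil \sqrt{N}\rceil$) and partition $\{0,\dots,\floor*{N t}-1\}$ into consecutive blocks $I_m = \{mL,\dots,(m+1)L-1\}$, of which there are at most $NT/L + 1$ for every $t \in (0,T]$. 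On each block I replace $f_k^N$ by its left-endpoint value $f_{mL}^N$, writing $M_t^N = R^N + S^N$, where $R^N$ collects the freezing errors $\frac{1}{N}\sum_{k \in I_m}(f_k^N(z_k) - f_{mL}^N(z_k))$ and $S^N$ collects the frozen terms $\frac{1}{N}\sum_{k \in I_m}(f_{mL}^N(z_k) - \sum_z f_{mL}^N(z)q(z))$.

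For $R^N$ I would telescope the increment bound: $\mathbb{E}|f_k^N(z) - f_{mL}^N(z)| \leq (k-mL)C/N \leq LC/N$ for each $z$, so after summing over the finite state space each block contributes at most $C L^2/N^2$ in expectation, and summing over the $O(N/L)$ blocks gives $\mathbb{E}|R^N| \leq C(T + L/N)(L/N) \to 0$. This part is purely deterministic in flavor and uses only the slow-variation assumption.

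The term $S^N$ is where the ergodic averaging enters, and here I would condition on the $\sigma$-algebra $\mathcal{F}_{mL}$ up to the start of the block. Since $f_{mL}^N$ is $\mathcal{F}_{mL}$-measurable (which holds for all the prelimit quantities to which this lemma is applied) and, by the Markov property, the conditional law of $(z_k)_{k \in I_m}$ given $\mathcal{F}_{mL}$ depends only on $z_{mL}$, I can pull $f_{mL}^N(z)$ outside the conditional expectation and bound the block contribution by $\frac{1}{N}\sum_z |f_{mL}^N(z)|\, \mathbb{E}[|\sum_{k\in I_m}\mathbf{1}_{z_k=z} - Lq(z)| \mid \mathcal{F}_{mL}]$. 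Applying Lemma \ref{IdentLemma1} with the deterministic bounded functions $\mathbf{1}_{\cdot = z}$ over a window of length $L$, uniformly over the finitely many possible starting states $z_{mL}$, produces a quantity $L\varepsilon_L$ with $\varepsilon_L \to 0$ as $L \to \infty$; the uniform $L^2$ bound on $f_{mL}^N$ then yields $\mathbb{E}|f_{mL}^N(z)| \leq \sqrt{C}$ by Cauchy-Schwarz. Hence each block contributes at most $C L \varepsilon_L/N$, and summing gives $\mathbb{E}|S^N| \leq C(T + L/N)\varepsilon_L \to 0$. Because both bounds depend on $t$ only through the number of blocks (at most $NT/L+1$), this delivers the uniform statement $\sup_{t\in(0,T]}\mathbb{E}|M_t^N| \to 0$.

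The hard part will be the tension between the two length scales: the block length $L$ must grow so that the chain equilibrates within a block and $\varepsilon_L \to 0$, yet must remain small relative to $N$ so that freezing $f^N$ across a block is negligible. Choosing $L_N$ to satisfy $L_N \to \infty$ and $L_N/N \to 0$ simultaneously, and verifying that conditioning on $\mathcal{F}_{mL}$ correctly decouples the $\mathcal{F}_{mL}$-measurable, slowly varying function $f_{mL}^N$ from the ergodic fluctuations of the future chain, is the delicate step; once the scales are separated, everything else reduces to the two routine estimates above together with a direct application of Lemma \ref{IdentLemma1}.
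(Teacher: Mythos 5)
Your proposal is correct and follows essentially the same coarse-graining strategy as the paper's proof: partition $\{0,\dots,\floor*{Nt}-1\}$ into blocks, freeze $f^N$ at the left endpoint of each block, control the freezing error via the $C/N$ increment bound, and use the block-wise convergence of $\frac{1}{L}\sum_{k}\mathbf{1}_{z_k=s}$ to $q(s)$ for the frozen term. The differences are minor. First, the paper uses a fixed number $K$ of blocks of length $\floor*{\Delta N}$ proportional to $N$, obtaining a freezing error of order $\Delta=T/K$ that is sent to zero only after $N\to\infty$, whereas you take $L_N\to\infty$ with $L_N/N\to 0$ so that both error terms vanish in a single limit; the two scalings are equivalent here, though yours requires the additional (true, but to be justified) fact that the ergodic-average error over a window of length $L$ tends to zero as $L\to\infty$ uniformly over the block's starting state and starting time, which plays the role of the paper's displayed claim preceding its application of Lemma \ref{IdentLemma1}. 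Second, to decouple $f^N_{mL}$ from the fluctuations of the chain on the block you condition on $\mathcal{F}_{mL}$ and pull $f^N_{mL}$ out, which quietly imports a measurability hypothesis ($f^N_{mL}$ adapted to the chain's past) that is not part of the lemma's statement; the paper's Lemma \ref{IdentLemma1} achieves the same decoupling unconditionally by Cauchy--Schwarz together with the uniform $L^2$ bound, and substituting that step would let your argument prove the lemma exactly as stated without the extra assumption.
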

\begin{proof}

For any $K \in \mathbb{N}$ and $\Delta = \frac{t}{K}$, we have
\begin{align}
M_t^N =& \sum_{j=0}^{K-1} \Delta \frac{1}{ \floor*{\Delta N}} \sum_{k = j \floor*{\Delta N}}^{(j+1) \floor*{\Delta N}-1} \bigg{(} f_k^N(z_k) - \sum_{z\in\mathcal{S}} f_k^N(z) q(z)  \bigg{)} + o(1)  \notag \\
=& \sum_{j=0}^{K-1} \Delta \frac{1}{\floor*{\Delta N} } \sum_{k = j \floor*{\Delta N}}^{(j+1) \Delta N-1} \bigg{(} f_{j \floor*{\Delta N}}^N(z_k) - \sum_{z\in\mathcal{S}} f_{j \floor*{\Delta N}}^N(z) q(z)  \bigg{)} \notag \\
& + \sum_{j=0}^{K-1} \Delta \frac{1}{\floor*{\Delta N}} \sum_{k = j \floor*{\Delta N}}^{(j+1) \floor*{\Delta N}-1} \bigg{[} \bigg{(} f_k^N(z_k) - \sum_{z\in\mathcal{S}} f_k^N(z)  q(z) \bigg{)} - \bigg{(}  f_{j \floor*{\Delta N}}^N(z_k) - \sum_{z\in\mathcal{S}} f_{j \floor*{\Delta N}}^N(z) q(z) \bigg{)}  \bigg{]} \notag \\
& + o(1),
\label{MT}
\end{align}
where the term $o(1)$ goes to zero, at least, in $L^1$ as $N\rightarrow\infty$.
We will need to show that, for each $j = 0, 1, \ldots, K-1$,
\begin{align}
 \frac{1}{\floor*{\Delta N} } \sum_{k = j \floor*{\Delta N}}^{(j+1) \floor*{\Delta N}-1}  \mathbf{1}_{z_k = s } \overset{p} \rightarrow q(s) \phantom{....} \textrm{as} \phantom{....} N \rightarrow \infty.
 \label{Induction}
\end{align}

This can be proven in the following way. We already know that $\displaystyle \frac{1}{\floor*{\Delta N} } \sum_{k = 0}^{(j+1) \floor*{\Delta N}-1}  \mathbf{1}_{z_k = s } \overset{p} \rightarrow (j+1) q(s)$ as $N \rightarrow \infty$. Of course, we also have that $\displaystyle \frac{1}{\floor*{\Delta N} } \sum_{k = 0}^{j \floor*{\Delta N}-1}  \mathbf{1}_{z_k = s } \overset{p} \rightarrow j q(s)$ as $N \rightarrow \infty$. Then, it must hold that $\displaystyle \frac{1}{\floor*{\Delta N} } \sum_{k = j \floor*{\Delta N} }^{ (j+1) \floor*{\Delta N}-1}  \mathbf{1}_{z_k = s } \overset{p} \rightarrow q(s)$.

Combining (\ref{Induction}) and Lemma \ref{IdentLemma1}, we can show that for each $j = 0, 1, \ldots, K-1$,
\begin{align}
\lim_{N \rightarrow \infty} \mathbb{E} \bigg{[} \bigg{|} \frac{1}{\floor*{\Delta N} } \sum_{k = j \floor*{\Delta N}}^{(j+1) \floor*{\Delta N}-1} \bigg{(} f_{j \floor*{\Delta N}}^N(z_k) - \sum_{z\in\mathcal{S}} f_{j \floor*{\Delta N}}^N(z) q(z) \bigg{)}  \bigg{|} \bigg{]} = 0.\notag
\end{align}

We next consider the second term in (\ref{MT}). To bound this term, we will use the assumption (\ref{LemmaAssumptions2}).
\begin{align}
\phantom{.}& \mathbb{E}  \bigg{|} \sum_{j=0}^{K-1} \Delta \frac{1}{\floor*{\Delta N}} \sum_{k = j \floor*{\Delta N}}^{(j+1) \floor*{\Delta N}-1} \bigg{[} \bigg{(} f_k^N(z_k) - \sum_{z\in\mathcal{S}} f_k^N(z)  q(z) \bigg{)} - \bigg{(}  f_{j \floor*{\Delta N}}^N(z_k) - \sum_{z\in\mathcal{S}} f_{j \floor*{\Delta N}}^N(z) q(z) \bigg{)}  \bigg{]}  \bigg{|} \notag \\
\leq&  \sum_{j=0}^{K-1} \Delta \frac{1}{\floor*{\Delta N}} \sum_{k = j \floor*{\Delta N}}^{(j+1) \floor*{\Delta N}-1} \mathbb{E} \bigg{|} \bigg{(} f_k^N(z_k) - \sum_{z\in\mathcal{S}} f_k^N(z)  q(z) \bigg{)} - \bigg{(}  f_{j \floor*{\Delta N}}^N(z_k) - \sum_{z\in\mathcal{S}} f_{j \floor*{\Delta N}}^N(z) q(z) \bigg{)} \bigg{|}  \notag \\
\leq&  C \sum_{j=0}^{K-1} \Delta \frac{1}{\floor*{\Delta N}} \sum_{k = j \floor*{\Delta N}}^{(j+1) \floor*{\Delta N}-1} \frac{k-j\floor*{\Delta N}}{N}.
\end{align}
Therefore, we can show that

\begin{align}
\phantom{.}& \mathbb{E}  \bigg{|} \sum_{j=0}^{K-1} \Delta \frac{1}{\floor*{\Delta N}} \sum_{k = j \floor*{\Delta N}}^{(j+1) \floor*{\Delta N}-1} \bigg{[} \bigg{(} f_k^N(z_k) - \sum_{z\in\mathcal{S}} f_k^N(z)  q(z) \bigg{)} - \bigg{(}  f_{j \floor*{\Delta N}}^N(z_k) - \sum_{z\in\mathcal{S}} f_{j \floor*{\Delta N}}^N(z) q(z) \bigg{)}  \bigg{]}  \bigg{|} \notag \\
=&  C \sum_{j=0}^{K-1} \Delta \frac{1}{\floor*{\Delta N}} \sum_{k = 0}^{ \floor*{\Delta N}-1} \frac{k}{N} \notag \\
\leq& C \sum_{j=0}^{K-1} \Delta \frac{1}{\floor*{\Delta N}}  \frac{\floor*{\Delta N}^2 }{N} \notag \\
=& C \sum_{j=0}^{K-1} \Delta  \frac{\floor*{\Delta N}}{N} \notag \\
\leq& C \sum_{j=0}^{K-1} \Delta^2 \notag \\
\leq& C \Delta.
\end{align}

Collecting our results, we have shown that
\begin{align*}
\lim \sup_{N \rightarrow \infty }  \sup_{t\in(0,T]}\mathbb{E} | M_t^N | \leq C \frac{T}{K}.
\end{align*}

Note that $K$  was arbitrary. Consequently, we obtain
\begin{align*}
 \lim_{N \rightarrow \infty} \sup_{t\in(0,T]}\mathbb{E} | M_t^N | &= 0, 
\end{align*}
concluding the proof of the lemma.

\end{proof}

This now allows us to prove the following lemma.\\

\begin{lemma} \label{Mlemma}
Recall the definition of $M_t^{1,N}, M_t^{2,N}, M_t^{3,N}$ from (\ref{Eq:h_equationPrelimit2}). Then, we have that $M_t^{1,N}, M_t^{2,N}, M_t^{3,N}\overset{L^1} \rightarrow 0$ as $N \rightarrow \infty$.
\end{lemma}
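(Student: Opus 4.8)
The plan is to recognize each of the three fluctuation terms as an instance of the abstract quantity $M_t^N$ analyzed in Lemma \ref{IdentLemma2}, and then to verify the two hypotheses in (\ref{LemmaAssumptions2}) for the corresponding functions $f_k^N$. Concretely, fix $\zeta = (x,a) \in \mathcal{X}\times\mathcal{A}$ and apply Lemma \ref{IdentLemma2} as follows. For $M_t^{1,N}(\zeta)$ and $M_t^{2,N}(\zeta)$ take the ergodic chain $z_k = \zeta_k = (x_k,a_k)$ on $\mathcal{S} = \mathcal{X}\times\mathcal{A}$ with limiting distribution $q = \pi$, and set $f_k^N(\zeta') = -Q_k^N(\zeta')\,B_{\zeta,\zeta',k}^N$ and $f_k^N(\zeta') = r(\zeta')\,B_{\zeta,\zeta',k}^N$, respectively. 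For $M_t^{3,N}(\zeta)$ take the chain $z_k = (x_{k+1}, x_k, a_k)$ on $\mathcal{S} = \mathcal{X}\times\mathcal{X}\times\mathcal{A}$ with limiting distribution $q(x'',\zeta') = p(x''|\zeta')\pi(\zeta')$, and set $f_k^N(x'',\zeta') = \gamma \max_{a''} Q_k^N(x'', a'')\,B_{\zeta,\zeta',k}^N$. With these identifications each $M_t^{i,N}(\zeta)$ is exactly the object $M_t^N$ of Lemma \ref{IdentLemma2}; since $\mathcal{X}\times\mathcal{A}$ is finite, componentwise convergence then yields the claimed $L^1$ convergence of the $\mathbb{R}^{M}$-valued processes.

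The uniform $L^2$ bound (the second line of (\ref{LemmaAssumptions2})) is the easier hypothesis. The key observation is that $B_{\zeta,\zeta',k}^N$ is bounded by a deterministic constant uniformly in $k$ and $N$: indeed $\sigma$ and $\sigma'$ are bounded by Assumption \ref{A:Assumption2}, $\zeta'\cdot\zeta$ is bounded because $\mathcal{X},\mathcal{A}$ are finite, and $\la c^2, \nu_k^N \ra = \frac{1}{N}\sum_i (C_k^i)^2 \leq C$ by the a priori bound of Lemma \ref{CandWbounds}. Given this, the $L^2$ bounds for $f_k^N$ follow immediately: for $M^{2,N}$ from the uniform boundedness of $r$; for $M^{1,N}$ and $M^{3,N}$ from Lemma \ref{GLemmaBound}, since $\mathbb{E}[\sup_{\zeta}|Q_k^N(\zeta)|^2] \leq C$ and $|\max_{a''} Q_k^N(x'',a'')| \leq \sup_{\zeta}|Q_k^N(\zeta)|$.

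The main work — and the step I expect to be the central obstacle — is the increment bound $\sup_z \mathbb{E}[|f_k^N(z) - f_{k-1}^N(z)|] \leq C/N$. I would decompose the increment via the product rule, e.g. for $M^{1,N}$ (suppressing the fixed indices $\zeta,\zeta'$),
\[
Q_k^N B_k^N - Q_{k-1}^N B_{k-1}^N = (Q_k^N - Q_{k-1}^N)\,B_{k-1}^N + Q_k^N\,(B_k^N - B_{k-1}^N).
\]
The first piece is controlled using the deterministic bound on $B$ together with the one-step estimate $\mathbb{E}[|Q_k^N(\zeta') - Q_{k-1}^N(\zeta')|] \leq C/N$, which follows from (\ref{gEvolution2}) and the single-step displacement bounds (\ref{CregularityBound00})--(\ref{WregularityBound00}). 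The delicate piece is the second one: since $Q_k^N$ is only bounded in $L^2$ (not pathwise by a constant), I would apply Cauchy--Schwarz and show $\mathbb{E}[|B_k^N - B_{k-1}^N|^2] \leq C/N^3$. The crucial point is that the single-step updates of all the $C^i, W^i$ share the \emph{common} scalar temporal-difference factor $\beta_{k-1} = r_{k-1} + \gamma\max_{a'}Q_{k-1}^N(x_k,a') - Q_{k-1}^N(\zeta_{k-1})$; when $B_k^N - B_{k-1}^N = \alpha\la g, \nu_k^N - \nu_{k-1}^N \ra$ is Taylor-expanded over the particles, $\beta_{k-1}$ factors out of the empirical average $\frac{1}{N}\sum_i$, leaving the prefactor $\alpha^N/\sqrt{N} = \alpha N^{-3/2}$ times a bounded average (here I use that $\partial_c g, \partial_w g$ are bounded, which holds since $\sigma \in C_b^2$, $|c|$ is bounded, and the $\zeta$'s range over finite sets) and the $L^2$-bounded factor $\beta_{k-1}$. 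This produces $N^{-3/2}$ scaling in $L^2$ rather than merely $N^{-1}$, so Cauchy--Schwarz gives $\mathbb{E}[|Q_k^N||B_k^N - B_{k-1}^N|] \leq \|Q_k^N\|_{L^2}\|B_k^N - B_{k-1}^N\|_{L^2} \leq C/N^{3/2} \leq C/N$. The same decomposition handles $M^{2,N}$ (with the bounded reward $r$ replacing $Q_k^N$) and $M^{3,N}$ (using $|\max_{a''}Q_k^N(x'',a'') - \max_{a''}Q_{k-1}^N(x'',a'')| \leq \sup_{\zeta}|Q_k^N(\zeta) - Q_{k-1}^N(\zeta)|$ to transfer the one-step $Q$-increment bound to the maximum).

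With both hypotheses of Lemma \ref{IdentLemma2} verified, that lemma gives $\lim_{N\to\infty}\sup_{t\in(0,T]}\mathbb{E}|M_t^{i,N}(\zeta)| = 0$ for each $\zeta$ and each $i \in \{1,2,3\}$; taking the finite maximum over the components $\zeta \in \mathcal{X}\times\mathcal{A}$ then yields $M_t^{1,N}, M_t^{2,N}, M_t^{3,N} \overset{L^1}\to 0$ as $N \to \infty$, completing the proof.
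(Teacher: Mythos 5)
Your proposal follows essentially the same route as the paper: both reduce each $M_t^{i,N}$ to the abstract quantity of Lemma \ref{IdentLemma2} and verify the two hypotheses in (\ref{LemmaAssumptions2}) using the a priori bounds of Lemmas \ref{CandWbounds} and \ref{GLemmaBound}, the one-step increment estimates for $Q_k^N$ and $\max_{a}Q_k^N$, and the evolution equation (\ref{Eq:nu_dynamicsPrelimit}) for the $B$-term. The only difference is that you spell out the product-rule decomposition of $f_k^N - f_{k-1}^N$ and the Cauchy--Schwarz estimate $\mathbb{E}\big[|B_k^N - B_{k-1}^N|^2\big] \leq C N^{-3}$ explicitly, steps the paper compresses into ``treated analogously'' and ``the triangle inequality''; your argument is correct.
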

\begin{proof}
The process $Q_k^N(x,a)$ satisfies the uniform $L^2$ bound in equation (\ref{LemmaAssumptions2}) due to Lemma \ref{GLemmaBound}.  It also satisfies the regularity condition in equation (\ref{LemmaAssumptions2}). Indeed, recalling the notation $\zeta=(x,a)$ and $\zeta_k=(x_k,a_k)$, we have
\begin{align}
 \mathbb{E} \bigg{[} |  Q_{k+1}^N(\zeta)  &-  Q_{k}^N(\zeta) | \bigg{]} \leq \frac{1}{\sqrt{N}} \sum_{i=1}^N  \mathbb{E} \bigg{[} \bigg{|}  (C^i_{k+1} - C^i_k )  \sigma(W^i_{k+1} \cdot \zeta)   +  \sigma'(W^{i,\ast}_k \cdot \zeta) \zeta \cdot ( W^i_{k+1} - W^i_k )   C^i_k \bigg{|} \bigg{]} \notag \\
 &\leq  \frac{1}{N^2} \sum_{i=1}^N  \mathbb{E} \bigg{[}  | \alpha \big{(} r_k + \gamma \max_{a' \in \mathcal{A}} Q^N_k(x_{k+1}, a') - Q^N_k(\zeta_k)  \big{)}  \sigma (W^i_k \cdot \zeta_k) \sigma (W^i_k \cdot s)| \bigg{]} \notag \\
 &+  \frac{1}{N^2} \sum_{i=1}^N  \mathbb{E} \bigg{[}  | \alpha \big{(} r_k + \gamma \max_{a' \in \mathcal{A}} Q^N_k(x_{k+1}, a') - Q^N_k(\zeta_k)  \big{)}  \sigma'(W^{i,\ast}_k \cdot \zeta)   \sigma'(W^{i,\ast}_k \cdot \zeta_k)  \zeta \cdot  \zeta_k  (C^i_k)^2 \bigg{|} \bigg{]}, \notag \\
 &\leq \frac{1}{N^2} \sum_{i=1}^N C = \frac{C}{N},
 \label{DifferenceQBound1}
\end{align}
where we have used the bounds from Lemmas \ref{CandWbounds} and \ref{GLemmaBound}, the boundedness of $\sigma(\cdot)$ and $\sigma'(\cdot)$, and the Cauchy-Schwartz inequality.

In addition,
\begin{eqnarray}
 \mathbb{E} \bigg{[} |  \max_{a \in \mathcal{A}} Q_{k+1}^N(x,a)  - \max_{a \in \mathcal{A}}  Q_{k}^N(x,a) | \bigg{]} &\leq&  \mathbb{E} \bigg{[}   \max_{a \in \mathcal{A}} | Q_{k+1}^N(x,a)  - Q_{k}^N(x,a) | \bigg{]}  \notag \\
 &\leq& \sum_{a \in \mathcal{A}} \mathbb{E} \bigg{[}   | Q_{k+1}^N(x,a)  - Q_{k}^N(x,a) | \bigg{]} \notag \\
 &\leq& \frac{C}{N},
\end{eqnarray}
where we have used the bound (\ref{DifferenceQBound1}).

The term $B_{x,a,x', a',k}^N$ that appears in the formula for $M_{t}^{1,N}$ can be treated analogously using (\ref{Eq:nu_dynamicsPrelimit}) and Lemma \ref{GLemmaBound}. The result for $M_{t}^{1,N}$ then immediately follows from Lemmas \ref{IdentLemma1} and \ref{IdentLemma2} and the triangle inequality. Using the same approach, one can obtain the claim for $M_{t}^{2,N}$ and $M_{t}^{3,N}$, and the proof is omitted due to the similarity of the argument.
\end{proof}

Let $\rho^N$ be the probability measure of $\left(\mu^N, h^N \right)_{0\leq t\leq T}$. Each $\rho^N$ takes values in the set of probability measures $\mathcal{M} \big{(} D_E([0,T]) \big{)}$.
In Section \ref{RelativeCompactness} we have proven that the sequence of measures $\{\rho^{N}\}_{N\in\mathbb{N}}$ is relatively compact. By standard results, see for example Theorem 3.1 in Chapter of \cite{EthierAndKurtz}, this implies that there is a convergent subsequence  $\rho^{N_k}$ which converges weakly. We must prove that any limit point $\rho$ of a convergent subsequence $\rho^{N_k}$ will satisfy the evolution equation (\ref{EvolutionEquationIntroductionXavierInfiniteTime}).

\begin{lemma} \label{IdentificationInfiniteTime}
Let $\rho^{N_k}$ be a convergent subsequence with a limit point $\rho$. Then, $\rho$ is a Dirac measure concentrated on $(\mu, h) \in D_E([0,T])$ and $(\mu, h)$ satisfies equation (\ref{EvolutionEquationIntroductionXavierInfiniteTime}).
\end{lemma}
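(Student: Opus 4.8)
The plan is to run the standard identification argument: exhibit functionals on path space whose vanishing characterizes solutions of \eqref{EvolutionEquationIntroductionXavierInfiniteTime}, show they vanish in the pre-limit, and then push this through the weak limit. For fixed $f\in C^2_b(\mathbb{R}^{1+d})$, $(x,a)\in\mathcal{X}\times\mathcal{A}$ and $t\in[0,T]$, I would define on $D_E([0,T])$
\[
F^1_{f,t}(\nu,g)=\left|\langle f,\nu_t\rangle-\langle f,\nu_0\rangle\right|,\qquad
F^2_{(x,a),t}(\nu,g)=\left|g_t(x,a)-g_0(x,a)-\int_0^t\Psi^{x,a}_s(\nu,g)\,ds\right|,
\]
where $\Psi^{x,a}_s(\nu,g)=\sum_{(x',a')}\pi(x',a')\,\bar A^{\nu_s}_{x,a,x',a'}\big(r(x',a')+\gamma\sum_z\max_{a''}g_s(z,a'')p(z\mid x',a')-g_s(x',a')\big)$ and $\bar A^{\nu}$ denotes the tensor $A$ with $\mu_0$ replaced by the running measure $\nu$. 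A path $(\nu,g)$ solves \eqref{EvolutionEquationIntroductionXavierInfiniteTime} precisely when $F^1_{f,t}=F^2_{(x,a),t}=0$ for all $f,(x,a),t$; note that $F^1\equiv 0$ forces $\nu_s\equiv\nu_0$, whence $\bar A^{\nu_s}=A$ and the two descriptions agree.

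\textbf{Pre-limit residuals vanish.} Evaluating at $(\mu^N,h^N)$, from \eqref{muEvolutionWithRemainderTerms} the measure drift carries a factor $N^{-3/2}$ across $\lfloor Nt\rfloor=O(N)$ uniformly bounded summands, so $F^1_{f,t}(\mu^N,h^N)=\mathcal O_p(N^{-1/2})\to 0$ in $L^1$. The representation \eqref{Eq:h_equationPrelimit} is already in integrated-ODE form, with integrand exactly $\Psi^{x,a}_s(\mu^N,h^N)$, so $F^2_{(x,a),t}(\mu^N,h^N)=|M^{1,N}_t+M^{2,N}_t+M^{3,N}_t+\mathcal O_p(N^{-1/2})|$, which tends to $0$ in $L^1$ by Lemma \ref{Mlemma}. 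Hence $\mathbb{E}^{\rho^N}[F^i_{\cdot,t}\wedge 1]\to 0$ for $i=1,2$.

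\textbf{Passing to the limit.} I would invoke the Skorokhod representation theorem to realize $\rho^{N_k}\Rightarrow\rho$ as almost sure convergence in $D_E([0,T])$ on a common probability space. The coefficient map $\nu\mapsto\bar A^{\nu}$ is weakly continuous because its integrands are bounded and continuous in $(c,w)$ ($c$ is bounded, $\sigma,\sigma'$ are bounded, and $\zeta\cdot\zeta'$ is a fixed constant), and $g_s\mapsto\max_{a''}g_s(z,a'')$ is Lipschitz. The one genuine obstacle is that Skorokhod convergence need not commute with the time integral and the supremum over $t$ implicit in $F^2$. This is resolved by first observing that the regularity estimates of Lemmas \ref{MU:regularity} and \ref{H:regularity} force the maximal jumps of $\mu^N$ and $h^N$ to vanish as $N\to\infty$, so $\rho$ is supported on the continuous paths $C_E([0,T])$; on continuous limit paths Skorokhod convergence is uniform, the coefficients are bounded, and dominated convergence passes the limit inside $\int_0^t\Psi_s\,ds$, so $F^1_{f,t},F^2_{(x,a),t}$ are continuous at $\rho$-a.e.\ path. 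Bounded convergence then gives $\mathbb{E}^{\rho}[F^i_{\cdot,t}\wedge 1]=\lim_k\mathbb{E}^{\rho^{N_k}}[F^i_{\cdot,t}\wedge 1]=0$, so $F^1_{f,t}=F^2_{(x,a),t}=0$ $\rho$-a.s.\ for each fixed $f,(x,a),t$. Using a countable determining class of $f$, the finiteness of $\mathcal{X}\times\mathcal{A}$, a countable dense set of $t$, and path continuity to interpolate, I obtain that $\rho$-a.s.\ the path $(\mu,h)$ solves \eqref{EvolutionEquationIntroductionXavierInfiniteTime}.

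\textbf{Concentration.} The identity $\langle f,\mu_t\rangle=\langle f,\mu_0\rangle$ for all $f$ forces $\mu_t=\mu_0$, a deterministic constant path, so the measure component is a point mass. With $\mu_t\equiv\mu_0$ the $h$-equation is a closed ODE for $(h_t(x,a))_{(x,a)\in\mathcal{X}\times\mathcal{A}}$ with constant tensor $A$ and Lipschitz right-hand side (the $\max$ is Lipschitz), hence admits a unique solution for each realization of the initial datum $h_0=\mathcal{G}$; this uniqueness is carried out in Section \ref{Uniqueness}. Consequently the conditional law of $(\mu,h)$ given the initial datum is degenerate, i.e.\ $\rho$ is concentrated on the single trajectory $(\mu_0,h)$ determined by $\mathcal{G}$, which solves \eqref{EvolutionEquationIntroductionXavierInfiniteTime} as asserted. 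The principal difficulty throughout is the passage of the nonlinear drift through the weak limit, which the continuity-of-limit-paths argument above is designed to overcome.
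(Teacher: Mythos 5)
Your proposal is correct and follows essentially the same route as the paper: write the pre-limit dynamics in integrated-ODE form, define residual functionals on $D_E([0,T])$ that vanish exactly on solutions of (\ref{EvolutionEquationIntroductionXavierInfiniteTime}), show via (\ref{Eq:h_equationPrelimit}), (\ref{muEvolutionWithRemainderTerms}) and Lemma \ref{Mlemma} that their expectations vanish in the pre-limit, and pass to the limit by weak convergence. The only differences are refinements in your favor: you justify continuity of the evaluation functionals at the limit by showing $\rho$ is supported on continuous paths and by truncating the unbounded integrand $c^2$ using the uniform bound on $C^i_k$ (the paper simply asserts continuity of its functional $F$, which it builds with the extra products $\prod_i \langle g_i,\mu_{s_i}\rangle$ and $\prod_i q_i(h_{s_i})$), and you spell out the Dirac-concentration step, which the paper defers to Sections \ref{Uniqueness} and \ref{ProofOfConvergence}.
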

\begin{proof}
We define a map $F(\mu,h): D_{E}([0,T]) \rightarrow \mathbb{R}_{+}$ for each $t \in [0,T]$, $f \in C^{2}_{b}(\mathbb{R}^{1+d})$, $g_{1},\cdots,g_{p}\in C_{b}(\mathbb{R}^{1+d})$, $q_{1},\cdots,q_{p}\in C_{b}( \mathcal{X} \times \mathcal{A} )$, and $0\leq s_{1}<\cdots< s_{p}\leq t$.
\begin{align}
F(\mu, h) =&  \bigg{|} \left(\la f, \mu_t \ra - \la f,  \mu_0 \ra \right) \times\la g_{1},\mu_{s_{1}}\ra\times\cdots\times \la g_{p},\mu_{s_{p}}\ra\bigg{|} \notag \\
+& \sum_{(x,a) \in \mathcal{X} \times \mathcal{A} }  \bigg{|}  h_t(x,a)  - h_0(x,a) -   \alpha \int_0^t \sum_{(x', a') \in \mathcal{X}  \times \mathcal{A}}    \bigg{(} r(x', a') + \gamma \int_{\mathcal{X}} \max_{a'' \in \mathcal{A}} h_s^N(x'', a'') p(x'' | x',a') - h_s^N(x', a') \bigg{)} \notag \\
&\times \bigg{(} \la \sigma \big{(} w \cdot  (x', a') \big{)} \sigma \big{(} w \cdot (x,a) \big{)}, \mu_s^N \ra + \la c^2 \sigma'(w x)    \sigma' \big{(}w  \cdot (x', a') \big{)} (x', a') \cdot (x,a) , \mu_s^N \ra \bigg{)} \pi(x', a') ds  \bigg{|} \notag \\
&\times q_{1}(h_{s_{1}})  \times\cdots\times q_{p}(h_{s_{p}}).\notag
\end{align}

Then, using equations (\ref{Eq:h_equationPrelimit}) and (\ref{muEvolutionWithRemainderTerms}), we obtain
\begin{align}
\mathbb{E}_{\rho^N} [ F(\mu, h ) ] =& \mathbb{E} [ F( \mu^N, h^N) ] \notag \\
=& \mathbb{E} \left|  \mathcal{O}_p(N^{-1/2})  \times \prod_{i=1}^{p} \la g_{i},\mu^{N}_{s_{i}}\ra \right|   \notag \\
+&  \mathbb{E} \left| (  M_t^{1,N} + M_t^{2,N} + M_t^{3,N} + \mathcal{O}_p(N^{-1/2}) ) \times \prod_{i=1}^{p} q_{i}( h^{N}_{s_{i}}) \right|   \notag \\
\leq& C \bigg{(} \mathbb{E} \bigg{[} | M^{1,N}(t) |  \bigg{]} +  \mathbb{E} \bigg{[}| M^{2,N}(t)  |  \bigg{]}+  \mathbb{E} \bigg{[}| M^{3,N}(t)  | \bigg{]}  \bigg{)}+ O(N^{-1/2}).\notag
\end{align}

Therefore, using Lemma \ref{Mlemma},
\begin{align}
\lim_{N \rightarrow \infty} \mathbb{E}_{\rho^N} [ F(\mu, h) ] = 0.\notag
\end{align}
Since $F(\cdot)$ is continuous and $F( \mu^N)$ is uniformly bounded (due to the uniform boundedness results of Section \ref{RelativeCompactness}),
\begin{align}
 \mathbb{E}_{\rho} [ F(\mu, h) ] = 0.\notag
\end{align}
Since this holds for each $t \in [0,T]$, $f \in C^{2}_{b}(\mathbb{R}^{1+d})$ and $g_{1},\cdots,g_{p}, q_1, \cdots, q_p \in C_{b}(\mathbb{R}^{1+d})$, $(\mu, h)$ satisfies the evolution equation (\ref{EvolutionEquationIntroductionXavierInfiniteTime}).
\end{proof}

\subsection{Uniqueness} \label{Uniqueness}

We prove uniqueness of the limit equation (\ref{EvolutionEquationIntroductionXavierInfiniteTime}) for $h_t$. Suppose there are two solutions $h_t^1$ and $h_t^2$. Let us define their difference to be $\phi_t = h_t^1 - h_t^2$.

Recall that $A$ is the tensor
\begin{align}
A_{x, a, x', a'} = \alpha \la \sigma(w \cdot (x', a') ) \sigma(w \cdot (x,a) ), \mu_0 \ra + \alpha\la c^2 \sigma'(w \cdot (x', a') ) \sigma'(w \cdot (x,a) ) (x',a')\cdot (x,a), \mu_0 \ra.\nonumber
\end{align}

For notational convenience, define $\zeta=(x,a)$, $\zeta'=(x',a')$, and
\begin{align*}
A_{\zeta, \zeta'} = \alpha  \la \sigma \big{(} w \cdot \zeta'  \big{)} \sigma \big{(} w \cdot \zeta \big{)}, \mu_0 \ra +  \alpha  \la c^2 \sigma' \big{(} w \cdot \zeta'  \big{)} \sigma' \big{(} w \cdot \zeta \big{)} \zeta' \cdot \zeta, \mu_0 \ra .
\end{align*}

The matrix $A$ is positive definite; see Section \ref{ProofOFPositiveDefinite} for the proof. We also define
\begin{align*}
G_s(\zeta) = \gamma  \sum_{x'' \in \mathcal{X}} \bigg{[}   \max_{a'' \in \mathcal{A}} h_s^1(x'', a'') -  \max_{a'' \in \mathcal{A}} h_s^2(x'', a'') \bigg{]} p(x'' | \zeta ).
\end{align*}

Note that
\begin{align}
| G_s(\zeta) | \leq& \gamma  \sum_{x'' \in \mathcal{X}}   \max_{a'' \in \mathcal{A}} | \phi_s(x'', a'') |  p(x'' | \zeta ) \notag \\
\leq& C \sum_{\zeta \in \mathcal{X}\times\mathcal{A}} | \phi_s(\zeta) |,\label{Eq:Bound_G}
\end{align}
where we have used the inequality $\displaystyle | \max_y f(y) - \max_y g(y) | \leq \max_y | f(y) - g(y) |$.

Then, $\phi_t$, at the point $\zeta$, i.e. $\phi_t(\zeta)$ satisfies the following equation
\begin{align*}
 \phi_t(\zeta) =&  \int_{0}^{t} \sum_{\zeta' \in \mathcal{X}\times\mathcal{A}}(G_{s}(\zeta')-\phi_{s}(\zeta'))A_{\zeta,\zeta'}\pi(\zeta')ds\nonumber\\
\phi_0(\zeta) =& 0,
\end{align*}

The latter, using (\ref{Eq:Bound_G}) and the boundedness of the elements $A_{\zeta,\zeta'}$, implies,
\begin{align}
|\phi_t(\zeta)|^{2}=& 2 \int_0^t \phi_s(\zeta) d \phi_s(\zeta) \notag \\
=&  2 \int_0^t \phi_s(\zeta)  \sum_{\zeta' \in \mathcal{X}\times\mathcal{A}}(G_{s}(\zeta')-\phi_{s}(\zeta'))A_{\zeta,\zeta'}\pi(\zeta') ds \notag \\
\leq& C \int_0^t \phi_s(\zeta) \sum_{\zeta\in \mathcal{X}\times\mathcal{A}} | \phi_s(\zeta) | ds.\notag
\end{align}

Then, summing over all possible $\zeta\in\mathcal{X}\times\mathcal{A}$ gives, due to the finiteness of the state space
\begin{align}
\sum_{\zeta\in \mathcal{X}\times\mathcal{A}}|\phi_t(\zeta)|^{2} \leq& C \int_0^t  \left|\sum_{\zeta\in \mathcal{X}\times\mathcal{A}} | \phi_s(\zeta) | \right|^{2} ds.\notag\\
\leq& C \int_0^t  \sum_{\zeta\in \mathcal{X}\times\mathcal{A}} | \phi_s(\zeta) |^{2}  ds.\notag
\end{align}

An application of Gronwall's inequality proves that $\phi_t(\zeta) = 0$ for all $0 \leq t \leq T$ and for all $\zeta\in\mathcal{X}\times\mathcal{A}$. Therefore, the solution $h_t$ is indeed unique.

\subsection{Proof of Convergence} \label{ProofOfConvergence}
We now combine the previous results of Sections \ref{RelativeCompactness} and \ref{Identification} to prove Theorem \ref{MainTheorem1}. Let $\rho^N$ be the probability measure corresponding to $(\mu^N, h^N)$. Each $\rho^N$ takes values in the set of probability measures $\mathcal{M} \big{(} D_E([0,T]) \big{)}$. Relative compactness, proven in Section \ref{RelativeCompactness}, implies that every subsequence $\rho^{N_k}$ has a further sub-sequence $\rho^{N_{k_m}}$ which weakly converges. Section \ref{Identification} proves that any limit point $\rho$ of $\rho^{N_{k_m}}$ will satisfy the evolution equation (\ref{EvolutionEquationIntroductionXavierInfiniteTime}). Equation (\ref{EvolutionEquationIntroductionXavierInfiniteTime}) has a unique solution (proven in Section \ref{Uniqueness}). Therefore, by Prokhorov's Theorem, $\rho^N$ weakly converges to $\rho$, where $\rho$ is the distribution of $(\mu, h)$, the unique solution of (\ref{EvolutionEquationIntroductionXavierInfiniteTime}). That is, $(\mu^N, h^N)$ converges in distribution to $(\mu, h)$.

\subsection{Analysis of the Limit Equation} \label{GlobalMinimum}

It is easy to show that there is a unique stationary point of the limit equation (\ref{EvolutionEquationIntroductionXavierInfiniteTime}) where $h = V$, the solution of the Bellman equation (\ref{HJBmain}). We define $\zeta$, $\zeta'$, and $A_{\zeta,\zeta'}$ as in Section \ref{Uniqueness}. Any stationary point $h$ of (\ref{EvolutionEquationIntroductionXavierInfiniteTime}) must satisfy
\begin{align}
0 =  \sum_{\zeta' \in \mathcal{X} \times \mathcal{A}} A_{\zeta,\zeta'}  \pi(\zeta') \bigg{(} r(\zeta') + \gamma \sum_{z \in \mathcal{X}} \max_{a''} h(z,a'') p(z | \zeta')  - h(\zeta') \bigg{)}.
\label{StationaryPoint}
\end{align}

Let $B$ be a matrix where $B_{\zeta,\zeta'} = A_{\zeta, \zeta'} \pi(\zeta')$. Since, by Lemma \ref{PositiveDefiniteLemma}, $A$ is positive definite and $\pi(\zeta') > 0$, $B$ is also positive definite. Therefore, we can re-write (\ref{StationaryPoint}) as
\begin{align}
0 = B \big{(} r + \gamma U - h \big{)},\nonumber
\end{align}
where $U(\zeta') = \displaystyle \sum_{z \in \mathcal{X}} \max_{a''} h(z,a'') p(z | \zeta')$. Since $B$ is positive definite, its inverse $B^{-1}$ exists and we can multiply both sides of this equation by $B^{-1}$ to yield
\begin{align}
0 = r + \gamma U - h.
\label{StationaryPoint2}
\end{align}
(\ref{StationaryPoint2}) is exactly the Bellman equation (\ref{HJBmain}), which has the unique solution $V$. Therefore, $h_t$ has a unique stationary point which equals the solution $V$ of the Bellman equation.

We now prove convergence of $h_t$ to $V$ for small $\gamma$. Define $\phi_t = h_t - V$ where $V$ is the unique solution to the Bellman equation (\ref{HJBmain}). We define now
\begin{align}
G_{\zeta,t} =  \sum_{ x'' \in \mathcal{X}} \bigg{[}   \max_{a'' \in \mathcal{A}} h_t(x'', a'') -  \max_{a'' \in \mathcal{A}} V(x'', a'') \bigg{]} p(x'' | \zeta).\nonumber
\end{align}

Note that
\begin{align}
| G_{\zeta,t} | \leq \sum_{x'' \in \mathcal{X}}   \max_{a'' \in \mathcal{A}} |  \phi_t(x'', a'') |  p(x'' | \zeta ).\nonumber
\end{align}

Then, $\phi_t(\zeta)$ satisfies
\begin{align}
d \phi_t =& - A  \bigg{(} \pi \odot ( \phi_t -  \gamma  G_t ) \bigg{)} dt,\nonumber
\end{align}
where $\odot$ is the element-wise product. The matrix $A$ is positive definite. Thus, $A^{-1}$ exists and is also positive definite.  Define the process
\begin{align}
Y_t =\frac{1}{2} \phi_t\cdot A^{-1} \phi_t.\nonumber
\end{align}

Then,
\begin{align}
d Y_t =&  \phi_t \cdot A^{-1}  d  \phi_t \notag \\
=&  - \phi_t\cdot  A^{-1}     A \bigg{(} \pi \odot ( \phi_t - \gamma  G_t ) \bigg{)} dt \notag \\
=& - \phi_t\cdot  \bigg{(} \pi \odot ( \phi_t - \gamma  G_t ) \bigg{)} dt  \notag \\
=&  - \pi \cdot \phi_t^2 dt + \gamma \phi_t\cdot( \pi \odot G_t ) dt,
\label{Yevolution}
\end{align}
where $\phi_t^2$ denotes the element-wise square $\phi_t^2 = \phi_t \odot \phi_t$.

Let us now study the second term in equation (\ref{Yevolution}). Let $\Gamma_t \vcentcolon =  \gamma \phi_t\cdot( \pi \odot G_t )$. Then,

\begin{align}
| \Gamma_t | \leq& \gamma  \sum_{\zeta \in \mathcal{X} \times \mathcal{A}} | \pi(\zeta) \phi_t(\zeta)  G_{\zeta,t} | \notag \\
\leq&   \frac{\gamma}{2} \sum_{\zeta \in \mathcal{X} \times \mathcal{A}} \pi(\zeta) \phi_t(\zeta)^2 +   \frac{\gamma}{2} \sum_{\zeta \in \mathcal{X} \times \mathcal{A}}  \pi(\zeta) G_{\zeta,t}^2 \notag \\
=&  \frac{\gamma}{2} \pi \cdot \phi_t^2 +   \frac{\gamma}{2} \sum_{\zeta \in \mathcal{X} \times \mathcal{A}}  \pi(\zeta) G_{\zeta,t}^2,\nonumber
\end{align}

We can bound the second term as
\begin{align}
  \sum_{\zeta \in \mathcal{X} \times \mathcal{A}}  \pi(\zeta) G_{\zeta,t}^2 \leq&    \sum_{\zeta \in \mathcal{X} \times \mathcal{A}}  \pi(\zeta) \sum_{x'' \in \mathcal{X}} \max_{a'' \in \mathcal{A}} |  \phi_t(x'', a'') |^2  p(x'' | \zeta ) \notag \\
  \leq&    \sum_{\zeta \in \mathcal{X} \times \mathcal{A}}  \pi(\zeta) \sum_{x'', a'' \in \mathcal{X} \times \mathcal{A}} |  \phi_t(x'', a'') |^2  p(x'' | \zeta ) \notag \\
   =&  K  \sum_{\zeta \in \mathcal{X} \times \mathcal{A}}  \pi(\zeta) \sum_{x'', a'' \in \mathcal{X} \times \mathcal{A}} |  \phi_t(x'', a'') |^2 \frac{1}{K} p(x'' | \zeta ) \notag \\
  =& K  \sum_{\zeta \in \mathcal{X} \times \mathcal{A}}  \pi(\zeta)  \phi_t(\zeta)^2  \notag \\
  =& K \pi \cdot \phi_t^2.\notag
  \end{align}

Consequently,
\begin{align}
| \Gamma_t | \leq \frac{\gamma}{2} \pi \cdot \phi_t^2 +   \frac{K \gamma}{2} \pi \cdot \phi_t^2.\nonumber
\end{align}

Suppose $\gamma < \frac{2}{1 + K}$. Then, there exists an $\epsilon > 0$ such that
\begin{align}
\frac{d Y_t}{d t} \leq - \epsilon \pi \cdot \phi_t^2.\nonumber
\end{align}
$Y_t$ is clearly decreasing in time $t$ and, since $A$ is positive definite, has a lower bound of zero. We also have the following upper bound using Young's inequality and the finite number of states in $\mathcal{X} \times \mathcal{A}$:
\begin{align}
Y_t =&  \sum_{\zeta, \zeta'\in \mathcal{X} \times \mathcal{A}} \phi_t(\zeta) A^{-1}_{\zeta,\zeta'}  \phi_t(\zeta') \notag \\
\leq& C \phi_t\cdot \phi_t,\nonumber
\end{align}
where $C > 0$. This leads to the lower bound $\phi_t^\cdot \phi_t \geq \frac{Y_t}{C}$ and the bound
\begin{align}
\frac{d Y_t}{d t} \leq& - \epsilon \min_{\zeta\in \mathcal{X} \times \mathcal{A}} \pi(\zeta)  \times  \phi_t\cdot \phi_t \notag \\
\leq& - C_0 Y_t,\nonumber
\end{align}
where $C_0 > 0$. By Gronwall's inequality,
\begin{align}
Y_t \leq Y_0 e^{- C_0 t}.\nonumber
\end{align}
Consequently,
\begin{align}
\lim_{t \rightarrow \infty} Y_t = 0,\nonumber
\end{align}
concluding the proof of Lemma \ref{L:ConvergenceInfiniteCase} due to the positive-definiteness of the matrix $A$.

\section{Proof of Convergence in Finite Time Horizon Case}\label{S:ProofFiniteCase}

In this section we address the proof of Theorem \ref{MainTheorem1}. The proof for the finite time horizon case is essentially exactly the same as the proof for the infinite time-horizon case. The main difference is that we can prove for any $0 < \gamma \leq 1$ that the limit equation $h_t$ converges to the Bellman equation solution $V$ as $t \rightarrow \infty$.

Let us begin by calculating the pre-limit evolution of the neural network output $Q^{N}_k(j, x,a)$. For convenience, let $\zeta = (x,a)$.
\begin{align}
Q_{k+1}^N(j, \zeta)  =& Q_{k}^N(j, \zeta)  + \frac{1}{\sqrt{N}} \sum_{i=1}^N C^{i,j}_{k+1} \sigma( W^i_{k+1} \cdot \zeta ) - \frac{1}{\sqrt{N}} \sum_{i=1}^N C^{i,j}_k \sigma( W^i_k \cdot \zeta ) \notag \\
=& Q_{k}^N(j, \zeta)  + \frac{1}{\sqrt{N}} \sum_{i=1}^N \bigg{(} C^{i,j}_{k+1} \sigma( W^i_{k+1} \cdot \zeta) -  C^{i,j}_k \sigma( W^i_k \cdot \zeta) \bigg{)} \notag \\
=& Q_{k}^N(j, \zeta)  + \frac{1}{\sqrt{N}} \sum_{i=1}^N \bigg{(} ( C^{i, j}_{k+1} - C^{i,j}_k )  \sigma( W^i_{k+1} \cdot \zeta)  + (  \sigma( W^i_{k+1} \cdot \zeta) -   \sigma( W^i_k \cdot \zeta)  ) C^{i,j}_k \bigg{)} \notag \\
=& Q_{k}^N(j,\zeta)  + \frac{1}{\sqrt{N}} \sum_{i=1}^N \bigg{(} ( C^{i,j}_{k+1} - C^{i,j}_k )  \bigg{[} \sigma(W^i_k \cdot \zeta) +  \sigma'( W^{i,\ast}_{k} \cdot \zeta_k ) \zeta\cdot ( W^i_{k+1} - W^i_{k} ) \bigg{]}   \notag \\
+& \bigg{[} \sigma'(W^i_k \cdot \zeta) \zeta \cdot ( W^i_{k+1} - W^i_k )   +  \frac{1}{2}\sigma''( W^{i, \ast \ast}_{k+1} \zeta) \left( ( W^{i}_{k+1} - W^{i}_{k} )\cdot \zeta \right)^2  \bigg{]} C^{i,j}_k \bigg{)},
\label{gEvolution100FiniteCase}
\end{align}
for points $W^{i,\ast}_{k}$ and $W^{i,\ast,\ast}_{k}$ in the line segment connecting the points $W^{i}_{k}$ and $W^{i}_{k+1}$. Let $\alpha^N = \frac{\alpha}{N}$. Substituting (\ref{SGDupdates00}) into (\ref{gEvolution100FiniteCase}) yields
\begin{align}
&Q_{k+1}^N(j, \zeta)  = Q_{k}^N(j, \zeta) + \frac{\alpha}{N^2}  \bigg{(} r_j + \gamma \max_{a' \in \mathcal{A}} Q^N_k(j+1, x_{j+1}, a') - Q^N_k(j, \zeta_j) \bigg{)} \sum_{i=1}^N \sigma(W^i_k \cdot \zeta_j ) \sigma(W^i_k \cdot \zeta)   \notag \\
&\quad+ \frac{\alpha}{N^2}  \sum_{m=0}^{J-1}  \bigg{(} r_m + \gamma \max_{a' \in \mathcal{A}} Q^N_k(m+1, x_{m+1}, a') - Q^N_k(m, \zeta_m) \bigg{)}  \sum_{i=1}^N \sigma'(W^i_k \cdot \zeta)    \sigma'(W^i_k \cdot \zeta_m ) \zeta_m\cdot \zeta  C^{i,m}_k C^{i,j}_k  \notag \\
&\quad + \mathcal{O}_{p}(N^{-3/2}).
\label{gEvolution1b00FiniteCase}
\end{align}

We can then re-write the evolution of $Q_k^N(j, x,a)$ in terms of the empirical measure $\nu_k^N$.
\begin{align}
&Q_{k+1}^N(j, \zeta)  = Q_{k}^N(j, \zeta) + \frac{\alpha}{N}  \bigg{(} r_j + \gamma \max_{a' \in \mathcal{A}} Q^N_k(j+1, x_{j+1}, a') - Q^N_k(j, \zeta_j) \bigg{)}  \la \sigma(w \cdot \zeta_j ) \sigma(w \cdot \zeta), \nu_k^N \ra   \notag \\
&\quad + \frac{\alpha}{N}  \sum_{m=0}^{J-1}  \bigg{(} r_m + \gamma \max_{a' \in \mathcal{A}} Q^N_k(m+1, x_{m+1}, a') - Q^N_k(m, \zeta_m) \bigg{)}  \la \sigma'(w \cdot \zeta)    \sigma'(w \cdot \zeta_m ) \zeta_m\cdot \zeta  c^{m} c^{j}, \nu_k^N \ra  \notag \\
&\quad+ \mathcal{O}_{p}(N^{-3/2}).
\label{gEvolution2FiniteCase}
\end{align}

(\ref{gEvolution2FiniteCase}) leads to an evolution equation for the re-scaled process $h_t$:
\begin{align}
&h_t^N(j,\zeta) = h_0^N(j,\zeta) \nonumber\\
&\quad +  \int_0^t  \sum_{\zeta'} \pi_j(\zeta') A_{\zeta,\zeta'}^0(s)   \bigg{(} r(j, \zeta') + \gamma \sum_{z} \max_{a'' \in \mathcal{A}} h_s^N(j+1,z, a'') p(z | \zeta') - h_s^N(j, \zeta') \bigg{)} ds \notag \\
&\quad+  \sum_{m=0}^{J-1}\int_0^t  \sum_{\zeta'} \pi_m(\zeta') A_{\zeta,\zeta'}^{j,m}(s)   \bigg{(} r(m, \zeta') + \gamma \sum_{z} \max_{a'' \in \mathcal{A}} h_s^N(m+1,z, a'') p(z | \zeta') - h_s^N(m, \zeta') \bigg{)} ds \notag \\
&\quad+ M_t^{N}(j,\zeta) +  \mathcal{O}_{p}(N^{-1/2}),\label{Eq:h_term_finiteTimeHorizon}
\end{align}
where the coefficients $A^0$ and $A^{j,m}$ are
\begin{align}
A^0_{\zeta,\zeta'}(s) =&  \alpha \la \sigma(w \cdot \zeta' ) \sigma(w \cdot \zeta), \mu_s^N \ra, \notag \\
A^{j,m}_{\zeta,\zeta'}(s) =& \alpha \la \sigma'(w \cdot \zeta')    \sigma'(w \cdot \zeta ) \zeta\cdot \zeta'  c^{j} c^{m}, \mu_s^N \ra.\notag
\end{align}

The fluctuation term $M_t^{N}(j, x,a)$ takes the form
\begin{align}
&M_t^{N}(j, \zeta) = \frac{\alpha}{N}  \sum_{k=0}^{\floor*{N t}-1} \bigg{[} \bigg{(} r_{k,j} + \gamma \max_{a' \in \mathcal{A}} Q^N_k(j+1, x_{k, j+1}, a') - Q^N_k(j, \zeta_{k,j}) \bigg{)}  \la \sigma(w \cdot \zeta_{k,j} ) \sigma(w \cdot \zeta), \nu_k^N \ra   \notag \\
&\quad+   \sum_{m=0}^{J-1}  \bigg{(} r_{k,m} + \gamma \max_{a' \in \mathcal{A}} Q^N_k(m+1, x_{k, m+1}, a') - Q^N_k(m, \zeta_{k,m}) \bigg{)}  \la \sigma'(w \cdot \zeta)    \sigma'(w \cdot \zeta_{k,m} ) \zeta_{k,m}\cdot \zeta  c^{m} c^{j}, \nu_k^N \ra \bigg{]} \notag \\
&\quad- \int_0^t  \sum_{\zeta\in \mathcal{X} \times \mathcal{A}} \pi_j(\zeta) A_{\zeta,\zeta'}^0(s)   \bigg{(} r(j, \zeta') + \gamma \sum_{z\in \mathcal{X} \times \mathcal{A}} \max_{a'' \in \mathcal{A}} h_s(j+1,z, a'') p(z | \zeta') - h_s(j,\zeta') \bigg{)} ds \notag \\
&\quad-  \int_0^t  \sum_{m=0}^{J-1}\sum_{\zeta'\in \mathcal{X} \times \mathcal{A}} \pi_m(\zeta') A_{\zeta,\zeta'}^{j,m}(s)   \bigg{(} r(m, \zeta') + \gamma \sum_{z\in \mathcal{X} \times \mathcal{A}} \max_{a'' \in \mathcal{A}} h_s(m+1,z, a'') p(z | \zeta') - h_s(m, \zeta') \bigg{)} ds. \notag
\end{align}

Using the same analysis as in the infinite time horizon case (see Lemma \ref{Mlemma}), we can show that $M_t^N \overset{p} \rightarrow 0$ as $N \rightarrow \infty$.

\subsection{Identification of the Limit, Relative Compactness, and Uniqueness} \label{ProofOfConvergencetoLimitEq}

Let $\rho^N$ be the probability measure of a convergent subsequence of $\left(\mu^N, h^N \right)_{0\leq t\leq T}$. Each $\rho^N$ takes values in the set of probability measures $\mathcal{M} \big{(} D_E([0,T]) \big{)}$. We can prove the following results.

\begin{lemma} \label{FiniteCaseLemma1}
The sequence $\rho^N$ is relatively compact in $\mathcal{M} \big{(} D_E([0,T]) \big{)}$.
\end{lemma}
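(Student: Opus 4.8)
The plan is to reproduce, essentially verbatim, the three-step argument of Section \ref{RelativeCompactness} (compact containment, regularity, and their combination via the tightness criterion of Ethier and Kurtz), adapting only the bookkeeping needed to accommodate the time index $j$ carried by the parameters $C^{i,j}$ and the sum over $j$ appearing in the $W^i$ update (\ref{SGDupdates00}). Since $J$ is a fixed finite horizon and $\mathcal{X}\times\mathcal{A}$ is finite, none of these modifications introduces a genuinely new difficulty; they only contribute harmless multiplicative constants depending on $J$ and $|\mathcal{X}\times\mathcal{A}|$.

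First I would establish the finite-horizon analogues of the a priori bounds in Lemmas \ref{CandWbounds} and \ref{GLemmaBound}. For each fixed $j$, the update for $C^{i,j}_{k+1}$ has exactly the same structure as in the infinite-horizon case, scaled by $\alpha/N^{3/2}$, so the same telescoping estimate and discrete Gronwall argument yield $|C_k^{i,j}| < C$ uniformly for $k/N \le T$ and all $j \in \{0,\dots,J-1\}$. For $W^i$, the update now contains a sum over the $J$ time steps; bounding each summand as before and using the just-obtained bound on $|C_k^{i,j}|$ gives $\norm{W^i_{k+1}} \le \norm{W^i_k} + C/N$ with a constant $C$ absorbing the factor $J$, whence $\mathbb{E}\norm{W_k^i} < C$. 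Feeding these into the evolution (\ref{gEvolution1b00FiniteCase}) and repeating the telescoping and Young's-inequality computation of Lemma \ref{GLemmaBound}, now summed over the finitely many times $j$, produces $\mathbb{E}\sup_{j,x,a}|Q_k^N(j,x,a)|^2 < C$. Markov's inequality then yields compact containment exactly as in Lemma \ref{L:CompactContainment}, now with $E = \mathcal{M}(\mathbb{R}^{J+d}) \times \mathbb{R}^{|\mathcal{X}\times\mathcal{A}|}$.

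Next I would establish regularity. The increment bounds (\ref{CregularityBound00}) and (\ref{WregularityBound00}) carry over: for $s<t\le T$ with $t-s\le\delta$, both $\mathbb{E}[|C^{i,j}_{\floor*{Nt}}-C^{i,j}_{\floor*{Ns}}| \mid \mathcal{F}^N_s]$ and $\mathbb{E}[\norm{W^i_{\floor*{Nt}}-W^i_{\floor*{Ns}}} \mid \mathcal{F}^N_s]$ are bounded by $\frac{C}{\sqrt{N}}\delta + \frac{C}{N^{3/2}}$, where the $W^i$ bound again absorbs the finite sum over $j$. The Taylor-expansion argument of Lemma \ref{MU:regularity} then gives the regularity estimate for $\la f, \mu^N_t\ra$ with $f \in C^2_b(\mathbb{R}^{J+d})$, and the argument of Lemma \ref{H:regularity}, applied to $h^N_t(j,\cdot)$ for each of the finitely many $j$, gives the corresponding estimate for $h^N$.

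Finally, combining these with Theorem 8.6 of Chapter 3 of \cite{EthierAndKurtz} proves relative compactness of $\{\mu^N\}$ in $D_{\mathcal{M}(\mathbb{R}^{J+d})}([0,T])$ and of $\{h^N\}$ in $D_{\mathbb{R}^{|\mathcal{X}\times\mathcal{A}|}}([0,T])$, hence of $\{(\mu^N,h^N)\}$ in $D_E([0,T])$. I do not anticipate a serious obstacle here: the only point requiring any care is to verify that the sum over the $J$ time steps in the $W^i$ update is genuinely innocuous, i.e. that it does not disturb the $O(1/N)$ per-step size that drives the Gronwall estimates, and this is immediate because $J$ is fixed and independent of $N$.
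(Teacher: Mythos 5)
Your proposal is correct and follows essentially the same route as the paper, which itself proves Lemma \ref{FiniteCaseLemma1} by simply invoking the arguments of Lemmas \ref{L:CompactContainment}, \ref{MU:regularity}, \ref{H:regularity}, and \ref{L:RelativeCompactness} without repeating them. Your additional observation---that the sum over the $J$ time steps in the $W^i$ update and the extra index on $C^{i,j}$ only contribute constants depending on the fixed finite horizon $J$, and so do not disturb the $O(1/N)$ per-step increments driving the Gronwall estimates---is exactly the bookkeeping the paper leaves implicit.
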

\begin{proof}
The result is obtained by following the exact same steps as in the proofs of Lemmas \ref{L:CompactContainment}, \ref{MU:regularity}, \ref{H:regularity}, and \ref{L:RelativeCompactness}. Therefore, its proof will not be repeated here.
\end{proof}

\begin{lemma} \label{FiniteCaseLemma2}
Let $\rho^{N_k}$ be a convergent subsequence with a limit point $\rho$. Then, $\rho$ is a Dirac measure concentrated on $(\mu, h) \in D_E([0,T])$ and $(\mu, h)$ satisfies equation (\ref{EvolutionEquationIntroductionXavierFiniteTimeHorizon}).
\end{lemma}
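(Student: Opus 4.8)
The plan is to follow the identification argument of Lemma~\ref{IdentificationInfiniteTime} almost verbatim, the only genuinely new feature being the extra time index $j$ and the cross-time coupling $A^{j,m}$ induced by the shared weights $W^i$. First I would fix $t\in[0,T]$, a test function $f\in C^2_b(\mathbb{R}^{J+d})$, functions $g_1,\dots,g_p\in C_b(\mathbb{R}^{J+d})$ and $q_1,\dots,q_p\in C_b(\mathcal{X}\times\mathcal{A})$, and times $0\le s_1<\dots<s_p\le t$, and define a continuous map $F:D_E([0,T])\to\mathbb{R}_+$ by
\begin{align}
F(\mu,h)=&\;\bigl|\,(\la f,\mu_t\ra-\la f,\mu_0\ra)\,{\textstyle\prod_{i=1}^p}\la g_i,\mu_{s_i}\ra\,\bigr| \notag\\
&+\sum_{j=0}^{J-1}\sum_{(x,a)}\Bigl|\,h_t(j,x,a)-h_0(j,x,a)-R_t(j,x,a)\,\Bigr|\,{\textstyle\prod_{i=1}^p}q_i(h_{s_i}),\notag
\end{align}
where $R_t(j,x,a)$ abbreviates the right-hand integral term of~(\ref{EvolutionEquationIntroductionXavierFiniteTimeHorizon}). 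Multiplying by the evaluations at the earlier times $s_1,\dots,s_p$ is what lets me identify the full law of the limit rather than only its one-dimensional marginals.

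Next I would substitute the prelimit identities: equation~(\ref{Eq:h_term_finiteTimeHorizon}) for $h^N$ together with the finite-horizon analogue of~(\ref{muEvolutionWithRemainderTerms}) for $\la f,\mu^N_s\ra$. This rewrites $\mathbb{E}_{\rho^N}[F]=\mathbb{E}[F(\mu^N,h^N)]$ in terms of (i) $\mathbb{E}|M^N_t|$, (ii) the discrepancy between the prelimit coefficients $A^0_{\zeta,\zeta'}(s),A^{j,m}_{\zeta,\zeta'}(s)$ integrated against $\mu^N_s$ and the limiting tensor $A$ integrated against $\mu_0$, and (iii) an $O(N^{-1/2})$ remainder. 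The measure part is handled exactly as before: each increment $\la f,\nu^N_{k+1}\ra-\la f,\nu^N_k\ra$ is $O_p(N^{-3/2})$, so summing $\lfloor Nt\rfloor$ of them yields $O_p(N^{-1/2})$, which both drives the first term of $F$ to zero and delivers the convergence $\mu^N_s\to\mu_0$ needed below.

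The step I expect to be the main obstacle is the reduction of the double time-sum in~(\ref{Eq:h_term_finiteTimeHorizon}) to the single $j$-th summand appearing in~(\ref{EvolutionEquationIntroductionXavierFiniteTimeHorizon}). Using $\mu^N_s\to\mu_0$ (together with the relative compactness of Lemma~\ref{FiniteCaseLemma1}), each coefficient converges, $A^{j,m}_{\zeta,\zeta'}(s)\to\alpha\la\sigma'(w\cdot\zeta')\sigma'(w\cdot\zeta)\,\zeta\cdot\zeta'\,c^j c^m,\mu_0\ra$. For $m\ne j$ the coordinates $C^{\cdot,j}_0$ and $C^{\cdot,m}_0$ are mean-zero and independent of each other and of $W^{\cdot}_0$ under the initialization $\mu_0$ of Assumption~\ref{A:Assumption2}, so this limit vanishes; only the diagonal term $m=j$ survives, and combined with the limit $\alpha\la\sigma(w\cdot\zeta')\sigma(w\cdot\zeta),\mu_0\ra$ of $A^0$ it assembles precisely the tensor $A$ carrying the $c_j^2$ weight. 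Interchanging the limit $N\to\infty$ with the time integral and the finite sums is justified by the uniform $L^2$ bounds (the finite-horizon counterparts of Lemmas~\ref{CandWbounds} and~\ref{GLemmaBound}, established within Lemma~\ref{FiniteCaseLemma1}) via dominated convergence, exactly as in the infinite-horizon estimates.

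Since $M^N_t\to0$ in $L^1$ has already been established above by the argument of Lemma~\ref{Mlemma} (applying Lemmas~\ref{IdentLemma1} and~\ref{IdentLemma2} layer-by-layer to the ergodic chains $(x_{k,j},a_{k,j})$ with stationary laws $\pi_j$), collecting (i)--(iii) gives $\lim_{N\to\infty}\mathbb{E}_{\rho^N}[F]=0$. Because $F$ is continuous and uniformly bounded on the compact-containment set supplied by (the finite-horizon version of) Lemma~\ref{L:CompactContainment}, it follows that $\mathbb{E}_\rho[F]=0$; as this holds for every admissible choice of $t,f,g_i,q_i,s_i$, the limit point $\rho$ is concentrated on pairs $(\mu,h)$ with $\mu_t=\mu_0$ and $h$ solving~(\ref{EvolutionEquationIntroductionXavierFiniteTimeHorizon}). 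Invoking the finite-horizon uniqueness of that random ODE, the solution is a measurable function of the sole source of randomness, the Gaussian initial datum $h_0$, so $\rho$ is concentrated on a single trajectory for each realization of $h_0$, which is the asserted Dirac-type concentration on $D_E([0,T])$ and completes the identification.
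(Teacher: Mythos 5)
Your proposal is correct and follows essentially the same route as the paper, which simply invokes the identification argument of Lemma \ref{IdentificationInfiniteTime} and notes, exactly as you do, that the off-diagonal terms $A^{j,m}_{\zeta,\zeta'}(s)$ with $m\neq j$ vanish in the limit because the $C_0^{i,j}$ are mean-zero and independent of $C_0^{i,m}$ and $W^i_0$. Your write-up is in fact more detailed than the paper's, but there is no substantive difference in approach.
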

\begin{proof}
The proof is exactly the same as in Lemma \ref{IdentificationInfiniteTime}, and we do not repeat it here. We only note here for completeness that due to the fact that the random variables $C_{0}^{i,j}, W^{i}_0$ are assumed to be mean zero, independent random variables (see Assumption \ref{A:Assumption2}), the terms  $A^{j,m}_{\zeta,\zeta'}(s)$ with $m\neq j$ will become zero in the limit as $N\rightarrow\infty$ in the expression for (\ref{Eq:h_term_finiteTimeHorizon}).
\end{proof}

\begin{lemma} \label{FiniteCaseLemma3}
The solution $(\mu, h)$ to the equation (\ref{EvolutionEquationIntroductionXavierFiniteTimeHorizon}) is unique.
\end{lemma}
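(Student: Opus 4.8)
The plan is to exploit the recursive (backward-in-$j$) structure of the limit system (\ref{EvolutionEquationIntroductionXavierFiniteTimeHorizon}), which decouples the uniqueness question at each time index $j$ from the others once the ``future'' indices are known. First, the measure component is immediate: since $\la f, \mu_t \ra = \la f, \mu_0 \ra$ for every admissible test function $f$, the measure $\mu_t$ equals $\mu_0$ in the weak sense for all $t \in [0,T]$, so it is uniquely determined, and the tensor $A_{\zeta,\zeta'}$ (which depends only on $\mu_0$ and on $j$ through $c_j$) is a fixed deterministic quantity. Note also, as recorded in Lemma \ref{FiniteCaseLemma2}, that the off-diagonal coefficients $A^{j,m}$ with $m \neq j$ vanish in the limit, so the equation for $h_t(j,\cdot)$ couples only to $h_s(j+1,\cdot)$ and to $h_s(j,\cdot)$ itself.

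For the component $h$, suppose $h^1$ and $h^2$ are two solutions of (\ref{EvolutionEquationIntroductionXavierFiniteTimeHorizon}) sharing the same initial condition $h_0(j,x,a) = \mathcal{G}(j,x,a)$ and the same terminal condition $h_t(J,x,a) = r(J,x)$. I would prove $h^1 = h^2$ by backward induction on $j = J, J-1, \ldots, 0$. The base case $j = J$ is trivial, since both solutions equal the fixed function $r(J,x)$. For the inductive step, fix $j < J$ and assume $h^1_s(j+1, \cdot) = h^2_s(j+1, \cdot)$ for all $s \in [0,T]$. Writing $\phi_t(j, \zeta) = h^1_t(j, \zeta) - h^2_t(j, \zeta)$ with $\zeta = (x,a)$, the key observation is that subtracting the two integral equations at index $j$ causes the coupling term $\gamma \sum_z \max_{a''} h_s(j+1, z, a'') p(z|\zeta')$ to cancel, since it depends only on the index-$(j+1)$ component, which agrees by the induction hypothesis. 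Hence $\phi_t(j, \cdot)$ satisfies the linear homogeneous equation
\begin{align*}
\phi_t(j, \zeta) = - \int_0^t \sum_{\zeta' \in \mathcal{X}\times\mathcal{A}} \pi_j(\zeta') A_{\zeta,\zeta'}\, \phi_s(j, \zeta') \, ds, \qquad \phi_0(j,\zeta) = 0.
\end{align*}
Since $\mathcal{X}\times\mathcal{A}$ is finite and the entries $A_{\zeta,\zeta'}$ and $\pi_j(\zeta')$ are bounded, setting $\Phi_t(j) = \sum_{\zeta} |\phi_t(j, \zeta)|$ and taking absolute values gives $\Phi_t(j) \leq C \int_0^t \Phi_s(j)\, ds$, so Gronwall's inequality forces $\Phi_t(j) \equiv 0$, i.e.\ $h^1_t(j,\cdot) = h^2_t(j,\cdot)$ for all $t \in [0,T]$. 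This closes the induction and yields $h^1 = h^2$.

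I do not expect a genuine obstacle here: the finite horizon recursion is precisely what removes the difficulty present in the infinite horizon setting, where the $\max$-coupling term cannot be made to vanish and one must instead run the quadratic Gronwall argument of Section \ref{Uniqueness} on $\sum_\zeta |\phi_t(\zeta)|^2$ directly. If one preferred to avoid invoking the cancellation explicitly, the same conclusion follows by repeating the argument of Section \ref{Uniqueness} level by level: at index $j$ the coupling term is controlled by $C \sum_\zeta |\phi_s(j+1,\zeta)|$, which is zero by the induction hypothesis, so the inequality $\sum_\zeta |\phi_t(j,\zeta)|^2 \leq C \int_0^t \sum_\zeta |\phi_s(j,\zeta)|^2\, ds$ again closes via Gronwall. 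The only point requiring a small amount of care is using the terminal condition $h_t(J,\cdot) = r(J,\cdot)$ to initialize the backward recursion, which is immediate.
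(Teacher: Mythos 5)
Your proof is correct, and it takes a mildly but genuinely different route from the paper's. The paper disposes of this lemma in one line by asserting that the argument of Section \ref{Uniqueness} (the infinite-horizon uniqueness proof) carries over: there one takes two solutions, bounds the difference of the $\max$-coupling terms by $C\sum_{\zeta}|\phi_s(\zeta)|$ using $|\max_y f(y)-\max_y g(y)|\le \max_y|f(y)-g(y)|$, derives $\sum_{\zeta}|\phi_t(\zeta)|^2\le C\int_0^t\sum_{\zeta}|\phi_s(\zeta)|^2\,ds$, and closes with Gronwall; transplanted to the finite-horizon system this is applied to the full vector $(\phi_t(j,\cdot))_{j=0}^{J-1}$ at once, with the index-$(j+1)$ coupling merely absorbed into the Lipschitz constant. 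You instead run a backward induction on $j$, observing that once $h^1(j+1,\cdot)=h^2(j+1,\cdot)$ the $\max$-coupling term cancels exactly, leaving a linear homogeneous finite-dimensional integral equation with zero initial data to which Gronwall applies directly. Both routes are rigorous and rest on the same tool; yours exploits the triangular structure of the finite-horizon system (and mirrors the backward induction the paper itself uses in Section \ref{AnalysisLimitEqnFiniteTimeCase} to prove convergence as $t\to\infty$), at the cost of having to invoke the terminal condition $h_t(J,\cdot)=r(J,\cdot)$ to start the recursion, while the paper's reference-back treats the system monolithically and needs only the Lipschitz property of the $\max$. Your handling of the measure component ($\mu_t=\mu_0$, hence $A$ fixed) and of the vanishing cross terms $A^{j,m}$ for $m\neq j$ is consistent with what the paper records in Lemma \ref{FiniteCaseLemma2}.
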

\begin{proof}
The proof follows the same steps as in Section \ref{Uniqueness}, and we do not repeat it here.
\end{proof}

Combining Lemmas \ref{FiniteCaseLemma1}, \ref{FiniteCaseLemma2}, and \ref{FiniteCaseLemma3} proves that $(\mu^N, h^N) \overset{d} \rightarrow (\mu, h)$ as $N \rightarrow \infty$.

\subsection{Analysis of Limit Equation} \label{AnalysisLimitEqnFiniteTimeCase}

Let $\phi_t(j,x,a) = h_t(j,x,a) - V(j,x,a)$ where $V(j,x,a)$ is the solution to the Bellman equation (\ref{HJBmainFiniteT}). Note that $h_t(J,x, a) = V(J, x,a) = r(J,x)$ and thus $\phi_t(J,x,a) = 0$. Then,
\begin{align}
d \phi_t(J-1,x,a) =-  \sum_{(x', a')\in \mathcal{X}\times\mathcal{A}} \pi_{J-1}(x', a') A_{x,a,x', a'}  \phi_t(J-1, x', a') dt.
\end{align}

Let $\zeta = (x,a)$ and $\zeta' = (x', a')$. By Lemma \ref{PositiveDefiniteLemma} the matrix $A_{\zeta, \zeta'} $ is positive definite and recall that $\pi_{j-1}(\zeta')>0$ for every $\zeta'\in\mathcal{X}\times\mathcal{A}$. Therefore, using the same analysis as in Section \ref{GlobalMinimum}, $\displaystyle \lim_{t \rightarrow\infty} \phi_t(J-1, y) \rightarrow 0$.

In fact, using induction, we can prove that $\displaystyle \lim_{t \rightarrow \infty} \phi_t(j, x,a) = 0$ for $j = 0, 1, \ldots, J$. Indeed, let us assume that $\displaystyle \lim_{t \rightarrow \infty} \phi_t(j+1, y) = 0$ for each $\zeta \in \mathcal{X} \times \mathcal{A}$. Let $Y_t = \frac{1}{2} \phi_{t,j}\cdot A^{-1} \phi_{t,j}$ where $\phi_{t,j} = \phi_t(j, \cdot)$. The process $Y_t$ satisfies the differential equation

\begin{align}
d Y_t =&  \phi_{t,j}\cdot A^{-1}  d  \phi_{t,j} \notag \\
=&  - \phi_{t,j}\cdot  A^{-1}     A \bigg{(} \pi_j \odot ( \phi_{t,j} - \gamma  G_t ) \bigg{)} dt \notag \\
=& - \phi_{t,j}\cdot  \bigg{(} \pi_j \odot ( \phi_{t,j} - \gamma  G_t ) \bigg{)} dt  \notag \\
=&  - \pi_j \cdot \phi_{t,j}^2 dt + \gamma \phi_{t,j}^{\top}( \pi_j \odot G_{t, j+1} ) dt,
\end{align}
where the vector $G_{t, j+1}$ is given by
\begin{align}
G_{t, j+1}(\zeta) =  \sum_{ x'' \in \mathcal{X}} \bigg{[}   \max_{a'' \in \mathcal{A}} h_t(j+1, x'', a'') -  \max_{a'' \in \mathcal{A}} V(j+1, x'', a'') \bigg{]} p(x'' | \zeta ).
\end{align}

Let $\Gamma_t \vcentcolon =  \gamma \phi_{t,j}^{\top}( \pi_j \odot G_{t,j+1} )$. Then,
\begin{align}
| \Gamma_t | \leq& \gamma  \sum_{\zeta\in \mathcal{X}\times\mathcal{A}} | \pi_j(\zeta) \phi_{t,j}(\zeta)  G_{t, j+1}(\zeta) | \notag \\
\leq&   \frac{\gamma}{2} \sum_{\zeta\in \mathcal{X}\times\mathcal{A}} \pi_j(\zeta) \phi_{t,j}(\zeta)^2 +   \frac{\gamma}{2} \sum_{\zeta\in \mathcal{X}\times\mathcal{A}}  \pi_j(\zeta) G_{t, j+1}(\zeta)^2 \notag \\
=&  \frac{\gamma}{2} \pi_j \cdot \phi_{t,j}^2 +   \frac{\gamma}{2} \sum_{\zeta}  \pi_j(\zeta) G_{t, j+1}(\zeta)^2
\end{align}

We can bound the second term $\Gamma_t^2 \vcentcolon =  \frac{\gamma}{2} \sum_{\zeta}  \pi_j(\zeta) G_{t, j+1}(\zeta)^2$ as
\begin{align}
|\Gamma_t^2| =&  \frac{\gamma}{2} \sum_{\zeta\in \mathcal{X}\times\mathcal{A}}  \pi_j(\zeta) G_{t}(\zeta)^2 \notag \\
 \leq&  \frac{\gamma}{2}   \sum_{\zeta\in \mathcal{X}\times\mathcal{A}}  \pi_j(\zeta) \sum_{x''\in\mathcal{X}} \max_{a'' \in \mathcal{A}} |  \phi_t(j+1, x'', a'') |^2  p(x'' | \zeta ).
\end{align}
Consequently, $\displaystyle \lim_{t \rightarrow \infty} \Gamma_t^2 = 0$. We are now in a position to prove the convergence of $Y_t$. Similar to the analysis in Section \ref{GlobalMinimum}, we can show that, for $t > s$,
\begin{align}
Y_t \leq Y_s - C_0 \int_0^t Y_s ds + \int_s^t | \Gamma_s^2 | ds,
\end{align}
where $C_0 > 0$. We now choose an (arbitrary) $\epsilon > 0$. Since $\displaystyle \lim_{t \rightarrow \infty} \Gamma_t^2 = 0$, there exists a $T_0$ such that  $| \Gamma_t^2 | < \frac{\epsilon C_0}{3}$ for $t > T_0$. Suppose that there exists a $T_1 > T_0$ such that $Y_{t} > \epsilon$ for $t > T_1$. Then, for $t \geq T_1$,

\begin{align}
Y_t \leq& Y_{T_1} - C_0 \int_{T_1}^t Y_s ds + \int_{T_1}^t | \Gamma_s^2 | ds \notag \\
\leq& Y_{T_1} - C_0 \int_{T_1}^t \epsilon ds + \int_{T_1}^t \frac{\epsilon C_0}{3} ds \notag \\
\leq& Y_{T_1} - \frac{2 C_0 \epsilon }{3} (t - T_1).
\end{align}
This upper bound implies that $Y_t < 0$ for some $t > T_1$. However, $Y_t \geq 0$ for all $t \geq 0$ and thus this is a contradiction. Consequently, there exists a $T_2 > T_0$ such that $Y_{T_2} = \epsilon$.

Suppose that there exists a $T_4 > T_2$ such that $Y_{T_4} > \epsilon$. Define the time $T_3 = \max \{ t : T_2 \leq t \leq T_4, Y_t = \epsilon \}$. Then, we obtain
\begin{align}
Y_{T_4} \leq& Y_{T_3} - C_0 \int_{T_3}^{T_4} Y_s ds + \int_{T_3}^{T_4} | \Gamma_s^2 | ds \notag \\
\leq& Y_{T_3} - \frac{2 C_0 \epsilon }{3} (T_4 - T_3) \notag \\
\leq& \epsilon,
\end{align}
which is a contradiction.

Therefore, for any $\epsilon > 0$, there  exists a $T_2 > 0$ such that $Y_t \leq \epsilon$ for all $t \geq T_2$. Since $\epsilon$ is arbitrary, we have proven that

\begin{align}
\lim_{t \rightarrow \infty} Y_t = 0.
\end{align}

Therefore, if $\displaystyle \lim_{t \rightarrow \infty} \phi_{t, j+1} = 0$, we have shown that $\displaystyle \lim_{t \rightarrow \infty} \phi_{t, j} = 0$. By induction, $\displaystyle \lim_{t \rightarrow \infty} \phi_{t,j} = 0$ for $j = 0, 1, \ldots, J-1$. This concludes the convergence proof for Theorem \ref{MainTheorem1}.

\section{Proof that $A$ is positive definite-Lemma \ref{PositiveDefiniteLemma}}  \label{ProofOFPositiveDefinite}

We now prove Lemma \ref{PositiveDefiniteLemma}. Recall the matrix $A$ with elements $A_{\zeta,\zeta'}$ for $\zeta, \zeta' \in \{ \zeta^{(1)}, \ldots, \zeta^{(M)} \}$ where $\zeta^{(i)} \in \mathcal{S} \subset \mathbb{R}^d$. Furthermore, each $\zeta^{(i)}$ is distinct (i.e., $\zeta^{(i)} \neq \zeta^{(j)}$ for $i \neq j$). We prove that $A$ is positive definite under Assumption \ref{A:AssumptionPositiveDefinite} (or equivalently under Assumption \ref{A:AssumptionPositiveDefiniteFiniteTime}).

Let $U =  \bigg{(} U(\zeta^{(1)}), \ldots,  U(\zeta^{(M)}) \bigg{)}$, where $U(\zeta)$ is defined as

\begin{align}
U(\zeta) =& \sqrt{\alpha} \sigma(W \cdot \zeta ) +  \sqrt{\alpha}   C  \sigma'(W \cdot \zeta ) \zeta,
\end{align}
where $(W, C) \sim \mu_0$. Since $C$ is a mean-zero random variable which is independent of $W$,

\begin{align}
\mathbb{E} \bigg{[} U(\zeta) U(\zeta') \bigg{]} =  \mathbb{E} \bigg{[} \alpha  \sigma(W \cdot \zeta )  \sigma(W \cdot \zeta') + \alpha C^2 \sigma'(W \cdot \zeta )  \sigma'(W \cdot \zeta')  \zeta\cdot \zeta'\bigg{]} = A_{\zeta, \zeta'}.
\end{align}
Note that if $\sigma(\cdot)$ is an odd function (e.g., the tanh function) and the distribution of $W$ is even, then $A$ is a covariance matrix.

To prove that $A$ is positive definite, we need to show that $z\cdot A z > 0$ for every non-zero $z \in \mathbb{R}^M$.
\begin{align}
z\cdot A z =& z\cdot \mathbb{E} \bigg{[} U \cdot U \bigg{]}  z \notag \\
=& \mathbb{E} \bigg{[} ( z\cdot U )^2 \bigg{]} \notag \\
=& \alpha \mathbb{E} \bigg{[} \bigg{(}  \sum_{i=1}^M z_i  ( \sigma(\zeta^{(i)} \cdot W ) +  C  \sigma'(W \cdot \zeta^{(i)} ) \zeta^{(i)} )  \bigg{)}^2 \bigg{]}.
\end{align}

The functions $\sigma(\zeta^{(i)} \cdot W )$ are linearly independent since the $\zeta^{(i)}$ are in distinct directions (see Remark 3.1 of \cite{yIto}). Therefore, for each non-zero $z$, there exists a point $w^{\ast}$
such that
\begin{align}
 \sum_{i=1}^M z_i \sigma(\zeta^{(i)} \cdot w^{\ast} )  \neq 0.\nonumber
\end{align}

Consequently, there exists an $\epsilon > 0$ such that
\begin{align}
\bigg{(}  \sum_{i=1}^M z_i \sigma(\zeta^{(i)} \cdot w^{\ast} )  \bigg{)}^2 > \epsilon.\nonumber
\end{align}

Since $\sigma( w \cdot \zeta ) + c \sigma'(w \cdot \zeta) \zeta$ is a continuous function, there exists a set $B = \{ c, w : \norm{w - w^{\ast} } + \norm{c} < \eta \}$ for some $\eta > 0$ such that for $(c,w) \in B$
\begin{align}
\bigg{(}  \sum_{i=1}^M z_i  ( \sigma(\zeta^{(i)} \cdot W ) +  C  \sigma'(W \cdot \zeta^{(i)} ) \zeta^{(i)} )   \bigg{)}^2 > \frac{\epsilon}{2}.\nonumber
\end{align}

Then, we obtain that
\begin{align}
 \mathbb{E} \bigg{[} \bigg{(} \sum_{i=1}^M z_i  ( \sigma(\zeta^{(i)} \cdot W ) +  C  \sigma'(W \cdot \zeta^{(i)} ) \zeta^{(i)} )  \bigg{)}^2 \bigg{]} &\geq  \mathbb{E} \bigg{[} \bigg{(}  \sum_{i=1}^M z_i  ( \sigma(\zeta^{(i)} \cdot W ) +  C  \sigma'(W \cdot \zeta^{(i)} ) \zeta^{(i)} )  \bigg{)}^2 \mathbf{1}_{C, W \in B} \bigg{]} \notag \\
 &\geq \mathbb{E} \bigg{[} \frac{\epsilon}{2} \mathbf{1}_{C, W \in B} \bigg{]} \notag \\
 =& \frac{K \epsilon}{2},\nonumber
\end{align}
where $K > 0$. Therefore, for every non-zero $z \in \mathbb{R}^M$, we do have
\begin{align}
z\cdot A z > 0,\nonumber
\end{align}
and $A$ is positive definite, concluding the proof of the Lemma.

\section{Conclusion} \label{Conclusion}

We have proven that a single-layer neural network trained with the Q-learning algorithm converges in distribution to a random ordinary differential equation as the number of hidden units and the number of training steps become large. Our analysis of the limit differential equation proves that it has a unique stationary solution which is the solution of the Bellman equation, thus giving the optimal control for the problem. In addition, we provide conditions under which the limit differential equation converges to the stationary solution. As a by-product of our analysis, we obtain the limiting behavior of single-layer neural networks when trained on i.i.d. data with stochastic gradient descent under the widely-used Xavier initialization.

\textcolor{black}{In terms of future research directions, it would be interesting to eliminate the restriction of Lemma \ref{L:ConvergenceInfiniteCase} on  $\gamma<\frac{2}{1+K}$ in the infinite horizon setting. Instead, one would like this result to be true for all $\gamma\in(0,1)$. Another interesting direction is to extend the results of this paper to multi-layer neural networks.}

\end{document}